\numberwithin{equation}{section}
\numberwithin{figure}{section}
\theoremstyle{plain}
\newtheorem{thm}{\protect\theoremname}
\theoremstyle{remark}
\newtheorem{rem}[thm]{\protect\remarkname}
\theoremstyle{plain}
\newtheorem{prop}[thm]{\protect\propositionname}
\theoremstyle{plain}
\newtheorem{lem}[thm]{\protect\lemmaname}
\theoremstyle{remark}
\newtheorem*{claim*}{\protect\claimname}
\theoremstyle{remark}
\newtheorem*{rem*}{\protect\remarkname}
\title{Order and Chaos: NTK views on DNN Normalization, Checkerboard and Boundary Artifacts}
\author{%
  Arthur Jacot \\
  Ecole Polytechnique F\'ed\'erale de Lausanne \\
  \texttt{arthur.jacot@epfl.ch} \\
  \And
 Franck Gabriel \\
  Ecole Polytechnique F\'ed\'erale de Lausanne \\
  \texttt{franck.gabriel@epfl.ch} \\
  \And
 Fran\c{c}ois  Ged\\
  Ecole Polytechnique F\'ed\'erale de Lausanne \\
  \texttt{francois.ged@epfl.ch} \\
  \And
 Cl\'ement Hongler \\
  Ecole Polytechnique F\'ed\'erale de Lausanne \\
  \texttt{clement.hongler@epfl.ch} \\
}
\providecommand{\lemmaname}{Lemma}
\providecommand{\propositionname}{Proposition}
\providecommand{\remarkname}{Remark}
\providecommand{\theoremname}{Theorem}
\providecommand{\claimname}{Claim}
\providecommand{\remarkname}{Remark}
\begin{document}
\maketitle

\begin{abstract}
We analyze architectural features of Deep Neural Networks (DNNs) using the so-called Neural Tangent Kernel (NTK), which describes the training and generalization of DNNs in the infinite-width setting.
In this setting, we show that for fully-connected DNNs, as the depth grows, two regimes appear: \textit{order}, where the (scaled) NTK converges to a constant, and \textit{chaos}, where it converges to a Kronecker delta. Extreme order slows down training while extreme chaos hinders generalization.
Using the scaled ReLU as a nonlinearity, we end up in the ordered regime.
In contrast, Layer Normalization brings the network into the chaotic regime. We observe a similar effect for Batch Normalization (BN) applied after the last nonlinearity.
We uncover the same order and chaos modes in Deep Deconvolutional Networks (DC-NNs). Our analysis explains the appearance of so-called checkerboard patterns and border artifacts. Moving the network into the chaotic regime prevents checkerboard patterns; we propose a graph-based parametrization which eliminates border artifacts; finally, we introduce a new layer-dependent learning rate to improve the convergence of DC-NNs.
We illustrate our findings on DCGANs: the ordered regime leads to a collapse of the generator to a checkerboard mode, which can be avoided by tuning the nonlinearity to reach the chaotic regime. As a result, we are able to obtain good quality samples for DCGANs without BN.
\end{abstract}

\section{Introduction}

The training of Deep Neural Networks (DNN) involves a great variety
of architecture choices. It is therefore crucial to find tools to
understand their effects and to compare them. For example, Batch Normalization
(BN) \cite{BatchNorm_Ioffe2015} has proven to be crucial in the training
of DNNs but remains ill-understood. While BN was initially introduced
to solve the problem of ``covariate shift'', recent results \cite{HowDoesBN_Santurkar2018}
suggest an effect on the smoothness of the loss surface. Some alternatives
to BN have been proposed \cite{LayerNorm_Ba2016,WeightNorm_Salimans2016,SelfNormalizing_Klambauer2017},
yet it remains difficult to compare them theoretically. Recent theoretical
results \cite{MeanFieldBN_Yang2019} suggest some relation to the
transition from ``order'' to ``chaos'' observed as the depth of
the NN goes to infinity \cite{Chaos_Poole2016,Daniely,Chaos_ResidualNet_Yang2017}.

The impact of architecture is very apparent in GANs \cite{Goodfellow2014}:
their results are heavily affected by the architecture of the generator
and discriminator \cite{DCGAN_radford2015,SAGAN_zhang2018,BIGGAN_brock2018,StyleGAN_Karras2018}
and the training may fail without BN \cite{Norm_prop_Arpit2016,BN_GAN_Xiang2017}.

Recently, there has been important advances \cite{jacot2018neural,Du2019,Allen-Zhu2018}
in the understanding of the training of DNNs when the number of neurons
in each hidden layer is very large. These results give new tools to
study the asymptotic effect of BN. In particular, the Neural Tangent
Kernel (NTK) \cite{jacot2018neural} illustrates the effect of architecture
on the training of DNNs and also describes their loss surface\cite{Karakida2018}.
The NTK can easily be extended to Convolutional Neural Networks (CNNs)
and other architectures \cite{Greg_Yang_2019,exact_arora2019}, hence
allowing comparison. Recently the order/chaos transition has been
extended to the NTK in \cite{hayou2019training,xiao2019disentangling}.

\subsection{Our Contributions}

We describe how the NTK is affected by the ``order'' and ``chaos''
regimes \cite{Chaos_Poole2016,Daniely,Chaos_ResidualNet_Yang2017}.
For fully-connected networks, the scaled NTK converges to a constant
in the ordered regime and to a Kronecker delta in the chaotic regime.
In deconvolutional networks (DC-NNs), a similar transition takes place:
the ordered regime features checkerboard patterns \cite{Checkerboard_odena2016}
and the chaotic regime features a (translation invariant) Kronecker
delta. 

We then show that different normalization techniques such as Layer
Normalization, Batch Normalization and our proposed Nonlinearity Normalization
allows the DNN to avoid the ordered regime.

Besides, we prove that the traditional parametrization of DC-NNs leads
to border effects in the NTK and we propose a simple solution suggesting
a new Graph-Based parametrization. At last, the effect of the number
of channels on the NTK is discussed, giving a theoretical motivation
for decreasing the number of channels after each upsampling. We show
that using a layer-dependent learning rate allows to balance the contributions
of the layers to the learning.

Finally, we demonstrate our findings numerically on DC-GANs: we show
that in the ordered regime, the generator collapses to a checkerboard
mode. We show how a basic DC-GAN can be effectively trained and avoid
this mode collapse: by proper hyperparameter tuning, nonlinearity
normalization, parametrization and learning rate choices, without
using batch normalization, we are able to reach the chaotic regime
and to get good quality samples from a very simple DC-NN generator. 

\section{Fully-Connected Neural Networks}

The first type of architecture we consider are deep Fully-Connected
Neural Networks (FC-NNs). An FC-NN $\mathbb{R}^{n_{0}}\to\mathbb{R}^{n_{L}}$
with nonlinearity $\sigma:\mathbb{R}\to\mathbb{R}$ consists of $L+1$
layers ($L-1$ hidden layers), respectively containing $n_{0},n_{1},\ldots,n_{L}$
neurons. The parameters are the connection weight matrices $W^{\left(\ell\right)}\in\mathbb{R}^{n_{\ell+1}\times n_{\ell}}$
and bias vectors $b^{\left(\ell\right)}\in\mathbb{R}^{n_{\ell+1}}$
for $\ell=0,1,\ldots,L-1$. Following \cite{jacot2018neural}, the
network parameters are aggregated into a single vector $\theta\in\mathbb{R}^{P}$
and initialized using iid standard Gaussians $\mathcal{N}\left(0,1\right)$.
For $\theta\in\mathbb{R}^{P}$, the DNN network function $f_{\theta}:\mathbb{R}^{n_{0}}\to\mathbb{R}^{n_{L}}$
is defined as $f_{\theta}\left(x\right)=\tilde{\alpha}^{\left(L\right)}\left(x\right)$,
where the activations and preactivations $\alpha^{\left(\ell\right)},\tilde{\alpha}^{\left(\ell\right)}$
are recursively constructed using the NTK parametrization: we set
$\alpha^{\left(0\right)}\left(x\right)=x$ and, for $\ell=0,\ldots,L-1$,
\[
\tilde{\alpha}^{\left(\ell+1\right)}\left(x\right)=\frac{\sqrt{1-\beta^{2}}}{\sqrt{n_{\ell}}}W^{\left(\ell\right)}\alpha^{\left(\ell\right)}\left(x\right)+\beta b^{\left(\ell\right)},\qquad\alpha^{\left(\ell+1\right)}\left(x\right)=\sigma\left(\tilde{\alpha}^{\left(\ell+1\right)}\left(x\right)\right),
\]
where $\sigma$ is applied entry-wise and $\beta\geq0$.
\begin{rem}
\begin{comment}
The NTK initialization is equivalent to the so-called Le Cun initialization
scheme \cite{init_lecun2012}, where each connection weight of the
$\ell$-th layer is initialized with standard deviation $\frac{1}{\sqrt{n_{\ell}}}$;
in our approach, the $\frac{1}{\sqrt{n_{\ell}}}$ appears in the parametrization
of the network. While the behavior at initialization is similar, the
NTK parametrization ensures that the training is consistent as the
size of the layers grows, see \cite{jacot2018neural}. 
\end{comment}

The hyperparameter $\beta$ allows one to balance the relative contributions
of the connection weights and of the biases during training; in our
numerical experiments, we set $\beta=0.1$. Note that the variance
of the normalized bias $\beta b^{\left(\ell\right)}$ at initialization
can be tuned by $\beta$. 
\end{rem}

\begin{comment}
In this paper, we focus on the so called \emph{over-parametrized regime},
where the sizes of the hidden layers $n_{1},\ldots,n_{L-1}\to\infty$
(either sequentially, as in \cite{jacot2018neural} or simultaneously,
as in \cite{Greg_Yang_2019,exact_arora2019}), for fixed $L$. In
the case of FC-NNs, this amounts to taking large widths for the hidden
layers, while for the DC-NNs, this amounts to taking large channel
numbers. 
\end{comment}

\subsection{\label{sec:ntk} Neural Tangent Kernel}

The NTK \cite{jacot2018neural} describes the evolution of $\left(f_{\theta_{t}}\right)_{t\geq0}$
in function space during training. In the FC-NN case, the NTK $\Theta_{\theta}^{\left(L\right)}:\mathbb{R}^{n_{0}}\times\mathbb{R}^{n_{0}}\to\mathbb{R}^{n_{L}\times n_{L}}$
is defined by
\[
\Theta_{\theta,kk'}^{\left(L\right)}\left(x,x'\right)=\sum_{p=1}^{P}\partial_{\theta_{p}}f_{\theta,k}\left(x\right)\partial_{\theta_{p}}f_{\theta,k'}\left(x'\right).
\]
For a dataset $x_{1},\ldots,x_{N}\in\mathbb{R}^{n_{0}}$, we define
the \emph{output} vector $Y_{\theta}=\left(f_{\theta,k}\left(x_{i}\right)\right)_{ik}\in\mathbb{R}^{Nn_{L}}$.
The DNN is trained by optimizing a cost $C:\mathbb{R}^{n_{L}N}\to\mathbb{R}$
through gradient descent, defining a flow $\partial_{t}\theta_{t}=-\nabla_{\theta}C\left(Y_{\theta}\right)\big|_{\theta_{t}}$.
The evolution of the output vector $Y_{\theta}$ can be expressed
in terms of the NTK Gram Matrix $\tilde{\Theta}_{\theta}^{\left(L\right)}=\left(\Theta_{\theta,km}^{\left(L\right)}\left(x_{i},x_{j}\right)\right)_{ik,jm}\in\mathbb{R}^{n_{L}N\times n_{L}N}$
and gradient $\nabla_{Y}C(Y_{\theta_{t}})\in\mathbb{R}^{n_{L}N}$:
\[
\partial_{t}Y_{\theta_{t}}=-\tilde{\Theta}_{\theta}^{(L)}\nabla_{Y}C(Y_{\theta_{t}}).
\]

\subsection{Infinite-Width Limit}

Following \cite{Neal1996,Cho2009,Lee2017}, in the overparametrized
regime at initialization, the preactivations $\left(\tilde{\alpha}_{i}^{\left(\ell\right)}\right)_{i=1,\ldots,n_{\ell}}$
are described by iid centered Gaussian processes with covariance kernels
$\Sigma^{\left(\ell\right)}$ constructed as follows. For a kernel
$K$, set 
\[
\mathbb{L}_{K}^{g}\left(z_{0},z_{1}\right)=\mathbb{E}_{\left(y_{0},y_{1}\right)\sim\mathcal{N}\left(0,\left(K\left(z_{i},z_{j}\right)\right)_{i,j=0,1}\right)}\left[g\left(y_{0}\right)g\left(y_{1}\right)\right].
\]
The \emph{activation kernels} $\Sigma^{\left(\ell\right)}$ are defined
recursively by $\Sigma^{\left(0\right)}\left(z_{0},z_{1}\right)=\beta^{2}+\frac{\left(1-\beta^{2}\right)}{n_{0}}z_{0}^{T}z_{1}$
and $\Sigma^{\left(\ell+1\right)}=\beta^{2}+\left(1-\beta^{2}\right)\mathbb{L}_{\Sigma^{\left(\ell\right)}}^{\sigma}$. 

While random at initialization, in the infinite-width-limit, the NTK
converges to a deterministic limit, which is moreover constant during
training:
\begin{thm}
As $n_{1},\ldots,n_{L-1}\to\infty$, for any $z_{0},z_{1}\in\mathbb{R}^{n_{0}}$
and any $t\geq0$, the kernel $\Theta_{\theta_{t}}^{\left(L\right)}\left(z_{0},z_{1}\right)$
converges to $\Theta_{\infty}^{\left(L\right)}\left(z_{0},z_{1}\right)\otimes\mathrm{Id}_{n_{L}}$,
where $\Theta_{\infty}^{\left(L\right)}\left(z_{0},z_{1}\right)=\sum_{\ell=1}^{L}\Sigma^{\left(\ell\right)}\left(z_{0},z\right)\prod_{l=\ell+1}^{L}\dot{\Sigma}^{\left(l\right)}\left(z_{0},z_{1}\right)$
and $\dot{\Sigma}^{\left(l\right)}=(1-\beta^{2})\mathbb{L}_{\Sigma^{\left(l-1\right)}}^{\dot{\sigma}}$
with $\dot{\sigma}$ denoting the derivative of $\sigma$.
\end{thm}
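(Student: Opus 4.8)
The plan is to prove the statement by induction on the depth $L$, with the inductive engine being the recursion
\[
\Theta_{\infty}^{\left(L\right)}=\Sigma^{\left(L\right)}+\dot{\Sigma}^{\left(L\right)}\,\Theta_{\infty}^{\left(L-1\right)},
\]
which, once unrolled from $\Theta_{\infty}^{\left(1\right)}=\Sigma^{\left(1\right)}$, telescopes into the claimed sum-product formula (the empty product for $\ell=L$ producing the leading term $\Sigma^{\left(L\right)}$). It is cleanest to send the widths to infinity sequentially, $n_{1}\to\infty$, then $n_{2}\to\infty$, and so on, so that when the final limit $n_{L-1}\to\infty$ is taken the depth-$(L-1)$ statement is already available and, by the results of \cite{Neal1996,Cho2009,Lee2017}, the preactivations $\tilde{\alpha}_{j}^{\left(L-1\right)}$ are exact iid centered Gaussian processes with covariance $\Sigma^{\left(L-1\right)}$, independent of the last weight matrix $W^{\left(L-1\right)}$. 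The base case $L=1$ is immediate: $f_{\theta}$ is affine in the unit-variance parameters $\theta$, so its NTK equals the covariance of $\tilde{\alpha}^{\left(1\right)}$, namely $\Sigma^{\left(1\right)}\,\mathrm{Id}_{n_{1}}$.

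For the induction step I would partition the parameter sum defining $\Theta^{\left(L\right)}$ into the last layer $\left(W^{\left(L-1\right)},b^{\left(L-1\right)}\right)$ and the parameters of layers $0,\dots,L-2$. Using $\partial_{W_{ij}^{\left(L-1\right)}}f_{k}=\frac{\sqrt{1-\beta^{2}}}{\sqrt{n_{L-1}}}\delta_{ik}\alpha_{j}^{\left(L-1\right)}$ and $\partial_{b_{i}^{\left(L-1\right)}}f_{k}=\beta\delta_{ik}$, the last-layer block is diagonal in the output indices and equals $\delta_{kk'}\bigl(\beta^{2}+\frac{1-\beta^{2}}{n_{L-1}}\sum_{j}\alpha_{j}^{\left(L-1\right)}(z_{0})\alpha_{j}^{\left(L-1\right)}(z_{1})\bigr)$; a law of large numbers over the $n_{L-1}$ iid coordinates sends it to $\delta_{kk'}\bigl(\beta^{2}+(1-\beta^{2})\mathbb{L}_{\Sigma^{\left(L-1\right)}}^{\sigma}\bigr)=\delta_{kk'}\Sigma^{\left(L\right)}(z_{0},z_{1})$. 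For the earlier parameters the chain rule factors each derivative through $\alpha^{\left(L-1\right)}$, so their block becomes
\[
\frac{1-\beta^{2}}{n_{L-1}}\sum_{j,j'}W_{kj}^{\left(L-1\right)}W_{k'j'}^{\left(L-1\right)}\dot{\sigma}\bigl(\tilde{\alpha}_{j}^{\left(L-1\right)}(z_{0})\bigr)\dot{\sigma}\bigl(\tilde{\alpha}_{j'}^{\left(L-1\right)}(z_{1})\bigr)\,\Xi_{jj'},
\]
where $\Xi_{jj'}=\sum_{p}\partial_{\theta_{p}}\tilde{\alpha}_{j}^{\left(L-1\right)}(z_{0})\partial_{\theta_{p}}\tilde{\alpha}_{j'}^{\left(L-1\right)}(z_{1})$ is precisely the depth-$(L-1)$ NTK at the level of preactivations. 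By the induction hypothesis $\Xi_{jj'}\to\delta_{jj'}\Theta_{\infty}^{\left(L-1\right)}(z_{0},z_{1})$ — the diagonal converging to $\Theta_{\infty}^{\left(L-1\right)}$ and distinct neurons decorrelating — which collapses the double sum to a single one; a second law of large numbers, using $\mathbb{E}[W_{kj}^{\left(L-1\right)}W_{k'j}^{\left(L-1\right)}]=\delta_{kk'}$ and the independence of $W^{\left(L-1\right)}$ from $\tilde{\alpha}^{\left(L-1\right)}$, then yields $\delta_{kk'}\,(1-\beta^{2})\mathbb{L}_{\Sigma^{\left(L-1\right)}}^{\dot{\sigma}}\,\Theta_{\infty}^{\left(L-1\right)}=\delta_{kk'}\,\dot{\Sigma}^{\left(L\right)}\Theta_{\infty}^{\left(L-1\right)}$. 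Adding the two blocks gives the recursion and the $\mathrm{Id}_{n_{L}}$ tensor factor simultaneously.

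Two points require the most care. The first is the rigorous justification of the iterated limits: that distinct hidden neurons decorrelate (so the off-diagonal terms of $\Xi$ vanish) and that a layer's weight matrix may be treated as independent of the activation and gradient statistics it multiplies. In the sequential limit this is transparent, since $W^{\left(L-1\right)}$ does not enter $\tilde{\alpha}^{\left(L-1\right)}$ at all; under a simultaneous limit one instead needs the gradient-independence phenomenon, made rigorous by the tensor-programs framework of \cite{Greg_Yang_2019}. The second point, and the part demanding a genuinely separate argument, is constancy along training (the claim for all $t\ge 0$). Here I would bound the time-derivative $\partial_{t}\Theta_{\theta_{t}}^{\left(L\right)}$ directly: along the gradient flow $\partial_{t}\theta_{t}=-\nabla_{\theta}C$, a Gr\"onwall-type estimate keeps $\|\theta_{t}-\theta_{0}\|$ bounded over any fixed horizon $[0,T]$, while the derivative of the kernel with respect to the parameters carries an extra inverse power of the width, so that $\|\partial_{t}\Theta_{\theta_{t}}^{\left(L\right)}\|\to 0$ uniformly on $[0,T]$. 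Integrating then pins $\Theta_{\theta_{t}}^{\left(L\right)}$ to its initialization value $\Theta_{\infty}^{\left(L\right)}$ for all $t$.
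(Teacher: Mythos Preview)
The paper does not give its own proof of this theorem: immediately after the statement it writes ``We refer to \cite{jacot2018neural} for a proof for the sequential limit $n_{1}\to\infty,\ldots,n_{L-1}\to\infty$ and \cite{Greg_Yang_2019,exact_arora2019} for the simultaneous limit.'' Your sketch is precisely the inductive argument of \cite{jacot2018neural} --- split off the last layer, apply the law of large numbers to the last-layer block and to the chain-rule block, use the induction hypothesis on the depth-$(L-1)$ NTK, and handle constancy in $t$ by a separate Gr\"onwall-type bound --- and in fact the same induction (at initialization, for the more general graph-based setting) is spelled out in the paper's Appendix~\ref{sec:General-Convolutional-Networks} (Proposition~\ref{prop:convergence_NTK_parents}), so your approach matches both the cited reference and the paper's own partial treatment.
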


We refer to \cite{jacot2018neural} for a proof for the sequential
limit $n_{1}\to\infty,\ldots,n_{L-1}\to\infty$ and \cite{Greg_Yang_2019,exact_arora2019}
for the simultaneous limit $\min\left(n_{1},\ldots,n_{L-1}\right)\to\infty$.
As a consequence, in the infinite-width limit, the dynamics of the
labels $Y_{\theta_{t},k}\in\mathbb{R}^{N}$ for each outputs $k$
acquires a simple form in terms of the limiting NTK Gram matrix $\tilde{\Theta}_{\infty}^{(L)}\in\mathbb{R}^{N\times N}$
\[
\partial_{t}Y_{\theta_{t},k}=-\tilde{\Theta}_{\infty}^{(L)}\nabla_{Y_{k}}C(Y_{\theta_{t}}).
\]
\begin{comment}
\[
\partial_{t}f_{\theta_{t},k}\left(x\right)=-\sum_{i=1}^{N}\Theta_{\infty}^{\left(L\right)}\left(x,x_{i}\right)\left(\frac{\partial C}{\partial y_{ik}}\right)\Bigg|_{Y\left(\theta_{t}\right)}.
\]
\end{comment}

\section{\label{sec:freeze-and-chaos}Large Depth Limit}

We now investigate the large $L$ behavior on the NTK (in the infinite-width
limit), revealing a transition between two phases: ``order'' and ``chaos''.
To ensure that the variance of the neurons is constant for all depths
($\Sigma^{(\ell)}(x,x)=1$) we consider \emph{standardized} nonlinearity
$\sigma$ (i.e. such that $\mathbb{E}_{x\sim\mathcal{N}\left(0,1\right)}\left[\sigma^{2}\left(x\right)\right]=1$)
and inputs on the \emph{standard $\sqrt{n}_{0}$-sphere}\footnote{Note that high dimensional datasets tend to concentrate on hyperspheres:
for example in GANs \cite{Goodfellow2014} the inputs of a generator
are vectors of iid $\mathcal{N}(0,1)$ entries which concentrate around
$\mathbb{S}_{n_{0}}$ for large dimensions.} $\mathbb{S}_{n_{0}}=\left\{ x\in\mathbb{R}^{n_{0}}:\|x\|=\sqrt{n_{0}}\right\} $. 

\subsection{Order and Chaos}

For a standardized $\sigma$, the large-depth behavior of the \emph{normalized
NTK} $\vartheta^{\left(L\right)}\left(x,y\right)=\Theta_{\infty}^{\left(L\right)}\left(x,y\right)/\sqrt{\Theta_{\infty}^{\left(L\right)}\left(x,x\right)\Theta_{\infty}^{\left(L\right)}\left(y,y\right)}$
is determined by the \emph{characteristic value
\begin{equation}
r_{\sigma,\beta}=(1-\beta^{2})\mathbb{E}_{x\sim\mathcal{N}\left(0,1\right)}\left[\dot{\sigma}^{2}\left(x\right)\right].\label{eq:def characteristic value r}
\end{equation}
}
\begin{thm}
\label{thm:infinite-depth-fc-nn}Suppose that $\sigma$ is twice differentiable
and standardized.

\textbf{Order: }If $r_{\sigma,\beta}<1$, there exists $C_{1}$ such
that for $x,y\in\mathbb{S}_{n_{0}}$,
\[
1-C_{1}Lr_{\sigma,\beta}^{L}\leq\vartheta^{(L)}\left(x,y\right)\leq1.
\]
\textbf{Chaos: }If $r_{\sigma,\beta}>1$, for $x\neq\pm y$ in $\mathbb{S}_{n_{0}}$,
there exist $h<1$ and $C_{2}$, such that 
\[
\left|\vartheta^{(L)}\left(x,y\right)\right|\leq C_{2}h^{L}.
\]
\end{thm}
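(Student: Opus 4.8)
The plan is to reduce everything to the one-dimensional dynamics of the correlation under a single scalar map, and to read off the two regimes from its behaviour near the fixed point $c=1$. Since the inputs lie on $\mathbb{S}_{n_0}$ and $\sigma$ is standardized, one checks by induction that $\Sigma^{(\ell)}(x,x)=1$ for all $\ell$ and $x$, so each $\Sigma^{(\ell)}$ is a genuine correlation. Writing $c^{(\ell)}=\Sigma^{(\ell)}(x,y)$ and introducing the dual activation $f(c)=\mathbb{E}_{(y_0,y_1)\sim\mathcal N(0,\left(\begin{smallmatrix}1&c\\c&1\end{smallmatrix}\right))}[\sigma(y_0)\sigma(y_1)]$, the recursion becomes $c^{(\ell+1)}=g(c^{(\ell)})$ with $g(c)=\beta^2+(1-\beta^2)f(c)$. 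Two facts drive the whole argument, with $r:=r_{\sigma,\beta}$: first, $g(1)=1$ and $g'(1)=(1-\beta^2)f'(1)=(1-\beta^2)\mathbb{E}[\dot\sigma^2]=r$, so $c=1$ is a fixed point whose multiplier is exactly the characteristic value; second, expanding $\sigma$ in the Hermite basis gives $f(c)=\sum_k a_k^2 c^k$ with non-negative coefficients, whence $g$ is convex, $|g'(c)|\le g'(1)=r$ on $[-1,1]$, and $\dot\Sigma^{(\ell+1)}(x,y)=g'(c^{(\ell)})$. Because each $\Sigma^{(\ell)}$ and each $\dot\Sigma^{(\ell)}$ is a positive kernel and Schur products and sums of positive kernels are positive, $\Theta_\infty^{(L)}$ is positive, giving $|\vartheta^{(L)}|\le1$ by Cauchy--Schwarz. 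On the diagonal the products telescope: $\dot\Sigma^{(\ell)}(x,x)=r$, so $\Theta_\infty^{(L)}(x,x)=\sum_{\ell=1}^L r^{L-\ell}=\frac{1-r^L}{1-r}$ (equal to $\frac{r^L-1}{r-1}$ when $r>1$), and by symmetry $\Theta_\infty^{(L)}(y,y)$ is the same, so $\vartheta^{(L)}(x,y)=\Theta_\infty^{(L)}(x,y)/\Theta_\infty^{(L)}(x,x)$.

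For the ordered regime $r<1$, convexity of $g$ with $g(1)=1$ yields $0\le 1-c^{(\ell+1)}=\int_{c^{(\ell)}}^1 g'\le r(1-c^{(\ell)})$, hence $1-c^{(\ell)}\le 2r^\ell$; then $0\le r-\dot\Sigma^{(\ell)}(x,y)=g'(1)-g'(c^{(\ell-1)})\le\|g''\|_\infty(1-c^{(\ell-1)})\le Cr^{\ell}$, where $\|g''\|_\infty<\infty$ on $[-1,1]$ by the smoothness of $\sigma$. Inserting $c^{(\ell)}=1-O(r^\ell)$ and $\dot\Sigma^{(\ell)}(x,y)=r\,(1-O(r^\ell))$ into the off-diagonal sum and comparing it term by term with the diagonal sum gives $\Theta_\infty^{(L)}(x,x)-\Theta_\infty^{(L)}(x,y)\le C_0 Lr^L$; dividing by $\Theta_\infty^{(L)}(x,x)=\frac{1-r^L}{1-r}\ge1$ yields $1-\vartheta^{(L)}\le C_1 Lr^L$, the claimed lower bound, while $\vartheta^{(L)}\le1$ is the positivity bound.

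For the chaotic regime $r>1$, the fixed point $c=1$ is repelling, and convexity of $h(c)=g(c)-c$ (which has a zero at $1$) produces a second fixed point $c^*\in(-1,1)$ toward which the orbit is driven, keeping $|c^{(\ell)}|\le\bar c<1$ for all large $\ell$. On this set $|\dot\Sigma^{(\ell)}(x,y)|=|g'(c^{(\ell-1)})|\le g'(\bar c)<g'(1)=r$, so fixing $\rho$ with $g'(\bar c)<\rho<r$ and bounding the finitely many early factors by $r$, the off-diagonal sum obeys $|\Theta_\infty^{(L)}(x,y)|\le\sum_{\ell=1}^L\prod_{l=\ell+1}^L|\dot\Sigma^{(l)}(x,y)|\le C\tilde h^{L}$ with $\tilde h=\max(\rho,1)<r$. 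Dividing by $\Theta_\infty^{(L)}(x,x)\ge\tfrac{r^L-1}{r-1}$ gives $|\vartheta^{(L)}(x,y)|\le C_2 h^L$ with $h=\tilde h/r<1$.

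The telescoping on the diagonal and the Cauchy--Schwarz bound are routine. I expect the real work to lie in the scalar dynamics: in the ordered case, converting the two geometric estimates on $1-c^{(\ell)}$ and $r-\dot\Sigma^{(\ell)}$ into the clean $Lr^L$ rate for the difference of the two sums; and, more delicately, in the chaotic case, proving that the orbit $c^{(\ell)}$ stays bounded away from the repelling points $\pm1$ so that $|\dot\Sigma^{(\ell)}(x,y)|$ is uniformly below $r$. This confinement is precisely where the non-negativity of the Hermite coefficients is indispensable (through the convexity and monotonicity of $g$ and the bound $|g'(c)|\le g'(1)$ on $[-1,1]$), and controlling possibly negative correlations $c^{(\ell)}<0$, for which $|g'(c)|$ can again approach $r$ near $c=-1$, will require the most care.
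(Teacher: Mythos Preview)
Your approach is essentially the paper's: reduce to the scalar iteration $c^{(\ell+1)}=g(c^{(\ell)})$ with $g=\beta^2+(1-\beta^2)R_\sigma$, use $g(1)=1$, $g'(1)=r$, convexity and non-negative Hermite coefficients; in the ordered regime bound $1-c^{(\ell)}$ and $r-\dot\Sigma^{(\ell)}$ geometrically and sum; in the chaotic regime bound $|c^{(\ell)}|$ away from $1$ so that $|\dot\Sigma^{(\ell)}|<r$.

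The one place the paper is cleaner is exactly the spot you flag as delicate. Rather than arguing that the orbit is \emph{eventually} confined to $[-\bar c,\bar c]$ and then handling finitely many early factors by hand, the paper proves in one stroke that $|g(\rho)|\le g(|\rho|)$ (immediate from the non-negative power series of $g$), which yields the uniform bound $|c^{(\ell)}|\le \max\{|c^{(1)}|,a\}=:v<1$ for \emph{all} $\ell\ge 1$. The same trick dispatches your worry about negative correlations near $-1$: since $R_{\dot\sigma}$ also has non-negative coefficients, $|\dot\Sigma^{(\ell)}(x,y)|=(1-\beta^2)|R_{\dot\sigma}(c^{(\ell-1)})|\le (1-\beta^2)R_{\dot\sigma}(|c^{(\ell-1)}|)\le (1-\beta^2)R_{\dot\sigma}(v)=:w<r$, with no need to control how close $c^{(\ell)}$ might get to $-1$. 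This also explains precisely why the hypothesis $x\neq\pm y$ enters: it is what guarantees $|c^{(1)}|<1$ and hence $v<1$.
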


Theorem \ref{thm:infinite-depth-fc-nn} shows that in the ordered
regime, the normalized NTK $\vartheta^{(L)}$ converges to a constant
as $L\to\infty$, whereas in the chaotic regime, it converges to a
Kronecker $\delta$ (taking value $1$ on the diagonal, $0$ elsewhere).
This suggests that the training of deep FC-NN is heavily influenced
by the characteristic value: when $r_{\sigma,\beta}<1$, $\Theta^{\left(L\right)}$
becomes constant, thus slowing down the training, whereas when $r_{\sigma,\beta}>1$,
$\Theta^{\left(L\right)}$ is concentrates on the diagonal, ensuring
fast training, but limiting generalization. To train very deep FC-NNs,
it is necessary to lie ``on the edge of chaos'' $r_{\sigma,\beta}=1$
\cite{Chaos_Poole2016,Chaos_ResidualNet_Yang2017}. 

Theorem \ref{thm:infinite-depth-fc-nn} does not apply directly to
the standardized ReLU $\sigma\left(x\right)=\sqrt{2}\max\left(x,0\right)$,
because it is not differentiable in $0$. The characteristic value
for the standardized ReLU is $r_{\sigma,\beta}=1-\beta^{2}$ which
lies in the ordered regime for $\beta>0$ 
\begin{thm}
\label{thm:infinite-depth-fc-nn-relu}With the same notation as in
Theorem \ref{thm:infinite-depth-fc-nn}, taking $\sigma$ to be the
standardized ReLU and $\beta>0$, the NTK is in the ordered regime:
there exists a constant $C$ such that $1-Cr_{\sigma,\beta}^{L/2}\leq\vartheta^{(L)}\left(x,y\right)\leq1$. 
\end{thm}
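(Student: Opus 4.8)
The plan is to bypass the twice-differentiability hypothesis of Theorem \ref{thm:infinite-depth-fc-nn} by using the explicit arc-cosine kernel formulas available for the ReLU. Write $\rho^{(\ell)}=\Sigma^{(\ell)}(x,y)$ for the correlation (recall $\Sigma^{(\ell)}(x,x)=1$ on the sphere) and $\phi^{(\ell)}=\arccos\rho^{(\ell)}\in[0,\pi]$ for the associated angle. The Cho--Saul formulas give
\[
\mathbb{L}^{\sigma}_{\Sigma^{(\ell)}}(x,y)=\tfrac{1}{\pi}\bigl(\sin\phi^{(\ell)}+(\pi-\phi^{(\ell)})\cos\phi^{(\ell)}\bigr)=:f(\rho^{(\ell)}),\qquad \mathbb{L}^{\dot\sigma}_{\Sigma^{(\ell)}}(x,y)=1-\tfrac{\phi^{(\ell)}}{\pi}.
\]
A direct differentiation reveals the duality $f'(\rho)=1-\tfrac{\arccos\rho}{\pi}$, so the correlation recursion $\rho^{(\ell+1)}=g(\rho^{(\ell)})$ with $g(\rho)=\beta^{2}+(1-\beta^{2})f(\rho)$ has fixed point $g(1)=1$ and derivative $g'(1)=1-\beta^{2}=r_{\sigma,\beta}$; on the diagonal $\dot\Sigma^{(\ell)}(x,x)=r_{\sigma,\beta}$. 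I would first record these identities.

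Next I would establish geometric convergence of the correlation. Since $0\le g'(s)\le 1-\beta^{2}$ for all $s\in[-1,1]$, integrating gives $1-\rho^{(\ell+1)}=\int_{\rho^{(\ell)}}^{1}g'(s)\,ds\le r_{\sigma,\beta}\,(1-\rho^{(\ell)})$, hence $1-\rho^{(\ell)}\le r_{\sigma,\beta}^{\ell}\,(1-\rho^{(0)})$. The crucial point --- and the source of the exponent $L/2$ rather than $L$ --- is that the angle picks up a square root: from $\arccos(1-\epsilon)\le C_{0}\sqrt{\epsilon}$ (via Jordan's inequality) one gets $\phi^{(\ell)}\le C_{0}\sqrt{1-\rho^{(\ell)}}\le C'r_{\sigma,\beta}^{\ell/2}$. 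This reflects the square-root singularity of $f''$ at $\rho=1$, precisely what the smooth theorem cannot see.

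I would then compute the diagonal NTK exactly, $\Theta^{(L)}_{\infty}(x,x)=\sum_{\ell=1}^{L}r_{\sigma,\beta}^{L-\ell}=\tfrac{1-r_{\sigma,\beta}^{L}}{1-r_{\sigma,\beta}}\ge 1$, and bound the off-diagonal gap termwise. Factoring $\dot\Sigma^{(l)}(x,y)=r_{\sigma,\beta}(1-\phi^{(l-1)}/\pi)$ out of each product, the $\ell$-th summand of $\Theta^{(L)}_{\infty}(x,x)-\Theta^{(L)}_{\infty}(x,y)$ equals $r_{\sigma,\beta}^{L-\ell}(1-P_{\ell})$ with $P_{\ell}=\rho^{(\ell)}\prod_{l=\ell+1}^{L}(1-\phi^{(l-1)}/\pi)$ a product of factors in $[0,1]$. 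The subadditivity bound $1-\prod(1-a_{i})\le\sum a_{i}$ then gives $1-P_{\ell}\le(1-\rho^{(\ell)})+\tfrac{1}{\pi}\sum_{l=\ell+1}^{L}\phi^{(l-1)}\le C''r_{\sigma,\beta}^{\ell/2}$, where the geometric sum of the $r_{\sigma,\beta}^{(l-1)/2}$ terms dominates. Summing, $\Theta^{(L)}_{\infty}(x,x)-\Theta^{(L)}_{\infty}(x,y)\le C''\sum_{\ell=1}^{L}r_{\sigma,\beta}^{L-\ell/2}=O(r_{\sigma,\beta}^{L/2})$. Dividing by $\Theta^{(L)}_{\infty}(x,x)\ge 1$ yields $1-\vartheta^{(L)}(x,y)\le Cr_{\sigma,\beta}^{L/2}$, while the upper bound $\vartheta^{(L)}\le 1$ is immediate from the positive-definiteness of $\Theta^{(L)}_{\infty}$ via Cauchy--Schwarz.

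The main obstacle is correctly tracking the $\sqrt{\cdot}$ in the angle: naively bounding $\dot\Sigma^{(l)}(x,y)$ against its diagonal value $r_{\sigma,\beta}$ loses the rate entirely, so one must quantify the defect $r_{\sigma,\beta}-\dot\Sigma^{(l)}(x,y)=r_{\sigma,\beta}\phi^{(l-1)}/\pi=O(r_{\sigma,\beta}^{(l+1)/2})$ and verify that, after multiplying by the prefactor $r_{\sigma,\beta}^{L-\ell}$ and summing a geometric-type series with ratio $r_{\sigma,\beta}^{-1/2}$, exactly one factor of $r_{\sigma,\beta}^{L/2}$ survives.
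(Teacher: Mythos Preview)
Your proposal is correct and follows essentially the same route as the paper's proof: both establish $1-\Sigma^{(\ell)}(x,y)=O(r_{\sigma,\beta}^{\ell})$, then exploit the square-root singularity of $R_{\dot\sigma}(\rho)=1-\tfrac{1}{\pi}\arccos\rho$ at $\rho=1$ (the paper via $R_{\dot\sigma}(\rho)\ge 1-\sqrt{1-\rho}$, you via $\phi^{(\ell)}\le C_0\sqrt{1-\rho^{(\ell)}}$) to obtain $r_{\sigma,\beta}-\dot\Sigma^{(\ell)}=O(r_{\sigma,\beta}^{\ell/2})$, and finally bound the NTK gap termwise by a first-order/subadditivity expansion of the product. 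Your factoring out of $r_{\sigma,\beta}^{L-\ell}$ before applying $1-\prod(1-a_i)\le\sum a_i$ is a slightly cleaner bookkeeping of the same computation; one small caveat is that $\rho^{(1)}$ can be negative (so $P_1$ is not literally a product of $[0,1]$-factors), but this single term contributes only $O(r_{\sigma,\beta}^{L-1})=O(r_{\sigma,\beta}^{L/2})$ and is harmless.
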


We observe two interesting (and potentially beneficial) properties
of the standardized ReLU:
\begin{enumerate}
\item Its characteristic value $r_{\sigma,\beta}=1-\beta^{2}$ is very close
to the `edge of chaos' for small $\beta$ and typically with LeCun
initialization the variance of the bias at initialization is $\nicefrac{1}{w}$
for $w$ the width, which roughly corresponds to a choice of $\beta=\nicefrac{1}{\sqrt{w}}$.
\item The rate of convergence to the limiting kernel is smaller ($r_{\sigma,\beta}^{L/2}$)
for the ReLU than for differentiable nonlinearities ($r_{\sigma,\beta}^{L}$)\footnote{Of course the rates of Theorems \ref{thm:infinite-depth-fc-nn} and
\ref{thm:infinite-depth-fc-nn-relu} may not be tight, but from the
proofs in Appendix \ref{subsec:ReLU-FC-NN} one can observe that the
rate of $r_{\sigma,\beta}^{L/2}$ appears as a result of the non-differentiability
of the ReLU.}. 
\end{enumerate}
These observations suggest that an advantage of the ReLU is that the
NTK of ReLU networks converges to its constant limit at a slower rate
and may naturally offer a good tradeoff between generalization and
training speed.

\begin{wrapfigure}{r}{0.5\textwidth}
\centering\vspace{-1.5cm}

\label{fig:NTK_plots}\hspace{-0.5cm}\includegraphics[scale=0.45]{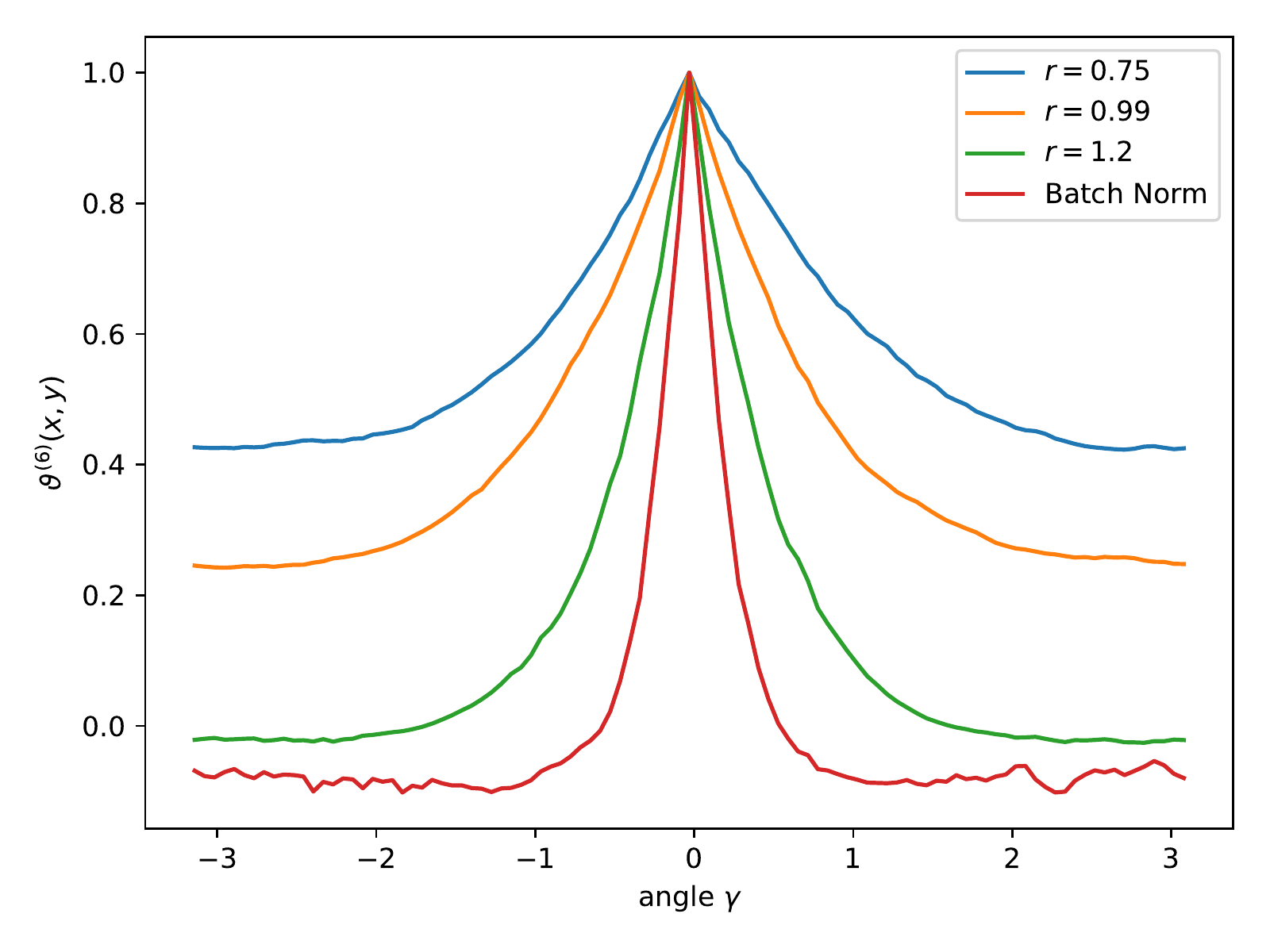}

\caption{The NTK on the unit circle for four architectures with depth $L=6$ are plotted: vanilla ReLU network with $\beta=0.5$ (order) and $\beta=0.1$  (edge of chaos), with a normalized ReLU / Layer norm. (chaos) and with Batch Norm.}
\end{wrapfigure}

\subsection{Chaotic effect of normalization\label{sec:Chaos-Normalization}}

Figure \ref{fig:NTK_plots} shows that even on the edge of chaos,
the NTK may exhibit a strong constant component (i.e. $\vartheta(x,y)>0.2$
for all $x,y$) which can lead to a bad conditioning of the Gram matrix
governing the infinite-width training behavior. It may be helpful
to slightly 'move' the network towards the chaotic regime to reduce
this effect. In Figure \ref{fig:NTK_plots}, $r_{\sigma,\beta}$ plays
a similar role to that of the lengthscale parameter in classical kernel
methods: increasing $r_{\sigma,\beta}$ makes the NTK 'narrower',
reducing the correlation length.

From the definition \ref{eq:def characteristic value r} of the characteristic
value, we see that increasing the bias pushes the network towards
the ordered regime, whereas $r_{\sigma,\beta}$ reaches its highest
value $\mathbb{E}\left[\dot{\sigma}^{2}\left(x\right)\right]$ when
the bias is $0$, which may still be in the ordered regime (or on
the edge with the ReLU). We are therefore interested in ways to push
the network further towards the chaotic regime.

In this section, we show that Layer Normalization is asymptotically
equivalent to Nonlinearity Normalization which entails $r_{\sigma,\beta}>1$
for $\beta$ small enough. While Batch normalization cannot be directly
interpreted in terms of $r_{\sigma,\beta}$, it is easy to show that
it directly controls the constant component of the NTK, which is characteristic
of the ordered regime.

\subsubsection{Nonlinearity Normalization and Layer Normalization}

Intuitively, the dominating constant component in ReLU networks is
partly a consequence of the ReLU being non-negative: after the first
hidden layer, all negative correlations become positive (i.e. $\Sigma^{(1)}(x,y)\geq\beta$
for all $x,y$, even $x=-y$). One can address this issue thanks to
the following. We shall write $Z$ for a random variable with standard
normal distribution. We say that $\sigma$ is normalized if $\mathbb{{E}}[\sigma(Z)]=0$
and $\mathbb{{E}}[\sigma(Z)^{2}]=1$. In particular, if $\sigma\neq\mathrm{{id}}$,
then $\overline{\sigma}(\cdot):=\frac{{\sigma(\cdot)-\mathbb{{E}}[\sigma(Z)]}}{\sqrt{{\mathbb{{E}}[(\sigma(Z)-\mathbb{{E}}[\sigma(Z)])^{2}]}}}$
is normalized. By Poincar� Inequality, after nonlinearity normalization,
one can always reach the chaotic regime:
\begin{prop}
If $\sigma\neq\mathrm{id}$ is normalized, then $\mathbb{E}\left[\dot{\sigma}^{2}\left(Z\right)\right]>1$
and $r_{\sigma,\beta}>1$ for $\beta>0$ small enough.
\end{prop}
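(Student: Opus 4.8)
The plan is to invoke the Gaussian Poincar\'e inequality, which for $Z\sim\mathcal{N}(0,1)$ and a differentiable $f\in L^{2}$ reads $\mathrm{Var}(f(Z))\le\mathbb{E}[\dot{f}(Z)^{2}]$. Applying it with $f=\sigma$ and using that $\sigma$ is normalized, so that $\mathbb{E}[\sigma(Z)]=0$ and hence $\mathrm{Var}(\sigma(Z))=\mathbb{E}[\sigma(Z)^{2}]=1$, I would immediately obtain
\[
1=\mathrm{Var}(\sigma(Z))\le\mathbb{E}[\dot{\sigma}^{2}(Z)].
\]
This gives the non-strict inequality for free; the real content of the proposition is that it is \emph{strict} whenever $\sigma$ is a genuine nonlinearity, and then transferring this to $r_{\sigma,\beta}$.

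The cleanest way to get both the inequality and its equality case is the Hermite expansion, and I expect pinning down the equality case to be the only real obstacle. Writing $\sigma=\sum_{k\ge0}c_{k}h_{k}$ in the basis of Hermite polynomials $h_{k}$ orthonormal for the standard Gaussian measure, normalization forces $c_{0}=\mathbb{E}[\sigma(Z)]=0$ and $\sum_{k\ge1}c_{k}^{2}=\mathbb{E}[\sigma(Z)^{2}]=1$. Using the relation $\dot{h}_{k}=\sqrt{k}\,h_{k-1}$ one computes $\mathbb{E}[\dot{\sigma}^{2}(Z)]=\sum_{k\ge1}k\,c_{k}^{2}$, whence
\[
\mathbb{E}[\dot{\sigma}^{2}(Z)]-1=\sum_{k\ge1}(k-1)c_{k}^{2}=\sum_{k\ge2}(k-1)c_{k}^{2}\ge0.
\]
The right-hand side vanishes exactly when $c_{k}=0$ for all $k\ge2$, i.e. when $\sigma$ is affine; since the only normalized affine functions are $\sigma(x)=\pm x$, the hypothesis $\sigma\neq\mathrm{id}$ (understood as $\sigma$ not being the identity up to sign) guarantees that some $c_{k}$ with $k\ge2$ is nonzero. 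Hence the inequality is strict and $\mathbb{E}[\dot{\sigma}^{2}(Z)]>1$.

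Finally I would transfer this to the characteristic value. By its definition, $r_{\sigma,\beta}=(1-\beta^{2})\mathbb{E}[\dot{\sigma}^{2}(Z)]$, so $\beta\mapsto r_{\sigma,\beta}$ is continuous with $r_{\sigma,0}=\mathbb{E}[\dot{\sigma}^{2}(Z)]>1$. By continuity there is $\beta_{0}>0$ such that $r_{\sigma,\beta}>1$ for all $0<\beta<\beta_{0}$; explicitly, any $\beta$ with $\beta^{2}<1-\tfrac{1}{\mathbb{E}[\dot{\sigma}^{2}(Z)]}$ works, which proves the claim.
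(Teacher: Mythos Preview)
Your proof is correct and matches the paper's approach, which simply invokes the Gaussian Poincar\'e inequality (the paper writes only ``By Poincar\'e Inequality'' before the proposition and gives no further detail). Your Hermite-expansion argument for the equality case is the standard one, and you rightly flag that the hypothesis $\sigma\neq\mathrm{id}$ must be read as excluding $-\mathrm{id}$ as well (i.e.\ $\sigma$ genuinely nonlinear), since $\sigma(x)=-x$ is normalized with $\mathbb{E}[\dot{\sigma}^{2}(Z)]=1$.
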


With post-nonlinearity layer normalization, for every input $x\in\mathbb{R}^{n_{0}}$
and every $\ell=1,\ldots,L-1$, the activations become
\[
\check{\alpha}^{\left(\ell\right)}\left(x\right)=\mathrm{LN}(\alpha^{(\ell)}(x))=\sqrt{n_{\ell}}\frac{\alpha^{\left(\ell\right)}\left(x\right)-\underline{\mu}^{(\ell)}(x)}{\|\alpha^{\left(\ell\right)}\left(x\right)-\underline{\mu}^{(\ell)}(x)\|},
\]
where $\underline{\mu}^{(\ell)}(x)=\frac{1}{n_{\ell}}\sum_{i=1}^{n_{\ell}}\alpha_{i}^{\left(\ell\right)}\left(x\right)\begin{pmatrix}1\ \cdots\ 1\end{pmatrix}^{\mathrm{T}}$.
We define similarly pre-nonlinearity layer normalization with $\check{\alpha}^{\left(\ell\right)}\left(x\right)=\mathrm{\sigma(LN}(\widetilde{\alpha}^{(\ell)}(x)))$.
\begin{prop}
\label{prop:LN equiv NN}Suppose that the inputs belong to $\mathbb{S}_{n_{0}}$.
Asymptotically, as the widths of the network sequentially go to infinity,
post-nonlinearity layer normalization is equivalent to nonlinearity
normalization, whereas pre-nonlinearity layer normalization has no
effect.
\end{prop}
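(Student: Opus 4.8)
The plan is to work in the sequential infinite-width limit, where conditionally on the previous layer the preactivations $\tilde{\alpha}^{(\ell)}$ become iid centered Gaussians, and to show that the layer-normalization operation degenerates, by the law of large numbers, into an affine map with \emph{deterministic, $x$-independent} coefficients. First I would record the invariant $\Sigma^{(\ell)}(x,x)=1$ for every $x\in\mathbb{S}_{n_0}$: the base case $\Sigma^{(0)}(x,x)=\beta^{2}+(1-\beta^{2})\|x\|^{2}/n_{0}=1$ holds on the sphere, and it propagates because the relevant nonlinearity ($\overline{\sigma}$ below, resp.\ the standardized $\sigma$) is standardized. Consequently the Gaussian variables $U_x:=\tilde{\alpha}^{(\ell)}(x)$ entering the kernel recursion are \emph{standard} normal, which is exactly what forces the empirical statistics of $\mathrm{LN}$ to converge to constants independent of $x$.

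For post-nonlinearity $\mathrm{LN}$, the two statistics defining the map are the empirical mean $\underline{\mu}^{(\ell)}(x)$ and the empirical standard deviation $\|\alpha^{(\ell)}(x)-\underline{\mu}^{(\ell)}(x)\|/\sqrt{n_\ell}$. By the law of large numbers over the $n_\ell$ iid coordinates these converge to $\mathbb{E}[\sigma(Z)]$ and $\sqrt{\mathrm{Var}(\sigma(Z))}$, both deterministic and \emph{independent of $x$} precisely because $U_x$ is standard normal. Hence entrywise $\check{\alpha}^{(\ell)}_i(x)\to \big(\sigma(\tilde{\alpha}^{(\ell)}_i(x))-\mathbb{E}[\sigma(Z)]\big)/\sqrt{\mathrm{Var}(\sigma(Z))}=\overline{\sigma}(\tilde{\alpha}^{(\ell)}_i(x))$, and the covariance computation gives
\[
\lim_{n_\ell\to\infty}\tfrac1{n_\ell}\langle\check{\alpha}^{(\ell)}(x),\check{\alpha}^{(\ell)}(y)\rangle=\frac{\mathrm{Cov}(\sigma(U_x),\sigma(U_y))}{\sqrt{\mathrm{Var}(\sigma(U_x))\,\mathrm{Var}(\sigma(U_y))}}=\mathbb{L}_{\Sigma^{(\ell)}}^{\overline{\sigma}}(x,y),
\]
so the forward recursion $\Sigma^{(\ell+1)}=\beta^{2}+(1-\beta^{2})\mathbb{L}_{\Sigma^{(\ell)}}^{\overline{\sigma}}$ is exactly the one produced by nonlinearity normalization with $\overline{\sigma}$ (which, being normalized, is standardized and thus preserves the invariant).

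It then remains to match the NTK, i.e.\ the backward pass. I would compute the Jacobian of $\mathrm{LN}$ as $\tfrac{\sqrt{n_\ell}}{\|u\|}\big(\mathrm{Id}-\tfrac{uu^{T}}{\|u\|^{2}}\big)\big(\mathrm{Id}-\tfrac1{n_\ell}\mathbf{1}\mathbf{1}^{T}\big)$ with $u=\alpha^{(\ell)}(x)-\underline{\mu}^{(\ell)}(x)$, and observe that $\sqrt{n_\ell}/\|u\|\to 1/\sqrt{\mathrm{Var}(\sigma(Z))}$ while the two subtracted rank-one projections contribute only $O(1/n_\ell)$ to the gradient inner products defining the NTK. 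Thus the $\mathrm{LN}$ Jacobian acts asymptotically like $\mathrm{diag}(\dot{\sigma})/\sqrt{\mathrm{Var}(\sigma(Z))}=\mathrm{diag}(\dot{\overline{\sigma}})$, so the derivative kernels $\dot{\Sigma}^{(l)}$, and hence $\Theta_\infty^{(L)}$ through the expression $\Theta_{\infty}^{(L)}=\sum_{\ell=1}^{L}\Sigma^{(\ell)}\prod_{l=\ell+1}^{L}\dot{\Sigma}^{(l)}$, coincide with those of the $\overline{\sigma}$-network. Controlling these rank-one corrections all the way through the backward recursion, together with justifying the interchange of differentiation and the sequential limit, is the step I expect to be the main obstacle, since it amounts to re-running the NTK convergence argument with the (asymptotically diagonal) $\mathrm{LN}$ Jacobian in place of $\mathrm{diag}(\dot{\sigma})$.

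Finally, for pre-nonlinearity $\mathrm{LN}$ I would apply the same degeneration to the preactivations themselves: on $\mathbb{S}_{n_0}$ with standardized $\sigma$ the invariant gives $\Sigma^{(\ell-1)}(x,x)=1$, so $\tilde{\alpha}^{(\ell)}(x)$ has empirical mean $\to 0$ (it is centered Gaussian) and empirical standard deviation $\to 1$. Hence $\mathrm{LN}(\tilde{\alpha}^{(\ell)}(x))\to\tilde{\alpha}^{(\ell)}(x)$ entrywise and its Jacobian $\to\mathrm{Id}$ up to the same vanishing rank-one terms; applying $\sigma$ afterwards therefore reproduces the unnormalized network exactly, leaving both the $\Sigma^{(\ell)}$ and the NTK unchanged, which is the claimed absence of effect.
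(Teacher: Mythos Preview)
Your approach is correct in spirit and would work, but it takes a genuinely different route from the paper's argument. You compute the limiting kernels $\Sigma^{(\ell)}$ and $\Theta_\infty^{(L)}$ directly: a law-of-large-numbers argument for the forward recursion, and then a separate Jacobian analysis of $\mathrm{LN}$ for the backward pass, where you must control the two rank-one projections through the entire NTK recursion. You rightly flag this last step as the main technical obstacle.

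The paper sidesteps that obstacle completely. Instead of computing kernels, it writes $\check{\alpha}^{(\ell)}=\overline{\sigma}(\tilde{\alpha}^{(\ell)})\,C_1+C_2$ for \emph{scalar} coefficients $C_1,C_2$ built from the empirical mean and norm, shows $C_1\to 1$ and $C_2\to 0$ at initialization by the law of large numbers, and then proves that $C_1(t),C_2(t)$ do not move during training because $\partial C_i/\partial\tilde{\alpha}^{(\ell)}_j=O(1/\sqrt{n_\ell})$ while $\tilde{\alpha}^{(\ell)}_j(t)-\tilde{\alpha}^{(\ell)}_j(0)=O(t/\sqrt{n_\ell})$. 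Hence the \emph{network function} with post-nonlinearity LN is asymptotically identical to the one with nonlinearity $\overline{\sigma}$, at all training times; equality of the NTK then follows for free from equality of $f_\theta$, without ever inspecting the LN Jacobian or re-running the NTK convergence argument. The same trick handles the pre-nonlinearity case (there $C_1\to 1$, $C_2\to 0$ simply recovers the unnormalized preactivation). What this buys is precisely the avoidance of your ``main obstacle'': no backward recursion with rank-one corrections, just two scalar quantities shown to be asymptotically constant.
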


\subsubsection{Batch Normalization}

For any $N\times d$ matrix of features $X$ leading to a $N\times N$
Gram matrix $K=\frac{1}{d}XX^{T}$, the Rayleigh quotient $\frac{1}{N}\mathbf{1}^{T}K\mathbf{1}$
of the constant vector $\mathbf{1}$ measures how big the constant
component is. Applying Batch Normalization (BN) at a layer $\ell$
centers (and standardizes) the activations\footnote{We consider here \emph{post-nonlinearity }BN, it is common to normalize
the pre-activations $\tilde{\alpha}^{(\ell)}$ instead.} $\alpha_{j}^{(\ell)}(x_{i})$ over a batch $x_{1},...,x_{N}$ , thus
zeroing the constant Rayleigh quotient of the $N\times N$ features
Gram matrices $\tilde{\Sigma}^{(\ell)}$ with entries $\tilde{\Sigma}_{ij}^{(\ell)}=\frac{1}{n_{\ell}}\sum_{k=1}^{n_{\ell}}\alpha_{k}^{(\ell)}(x_{i})\alpha_{k}^{(\ell)}(x_{j})$.
Adding a single BN layer after the last hidden layer controls the
constant Rayleigh quotient of the NTK Gram matrix $\tilde{\Theta}^{(L)}$:
\begin{lem}
Consider FC-NN with $L$ layers, with a PN-BN after the last nonlinearity
then $\frac{1}{N}\mathbf{1}^{T}\tilde{\Theta}^{(L)}\mathbf{1}=\beta^{2}$.
\end{lem}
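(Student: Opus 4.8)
The plan is to show that adding a post-nonlinearity Batch Normalization layer after the last hidden layer forces the constant Rayleigh quotient of the final NTK Gram matrix to collapse to exactly $\beta^2$. The key observation is structural: the NTK decomposes, via the formula in Theorem 1, into a sum of terms $\Sigma^{(\ell)}(z_0,z_1)\prod_{l=\ell+1}^L \dot\Sigma^{(l)}(z_0,z_1)$, which correspond to the contributions of the gradients with respect to the parameters of each layer. The bias contribution at the very last layer yields the pure $\beta^2$ term (since $\Sigma^{(L)}$ contains an additive $\beta^2$ coming from the bias, and the product of $\dot\Sigma$ factors above it is empty and equals $1$). Every other contribution passes through the output of the last hidden layer, and therefore through the BN operation applied there.

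First I would write the NTK Gram matrix as a sum of layer-wise contributions, separating out the final-layer bias term $\beta^2 \mathbf{1}\mathbf{1}^T$ (more precisely, the term that is constant in the data indices). Then I would analyze the action of the post-nonlinearity BN. The crucial point is that BN centers the activations $\alpha_k^{(\ell)}(x_i)$ across the batch $x_1,\dots,x_N$, so that $\sum_{i=1}^N \check\alpha_k^{(L-1)}(x_i)=0$ for every neuron $k$. Because every remaining NTK contribution is built from inner products and derivatives that are linear (after differentiation) in these centered post-BN activations, applying the constant vector $\mathbf{1}$ on either side annihilates them: $\mathbf{1}^T$ hitting a matrix whose rows sum to zero in the relevant feature gives zero. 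Thus $\frac{1}{N}\mathbf{1}^T \tilde\Theta^{(L)}\mathbf{1}$ retains only the $\beta^2$ bias term.

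The step I expect to be the main obstacle is making rigorous the claim that BN centering propagates correctly through the differentiation that defines the NTK. The NTK is a sum over all parameters, and the gradients with respect to weights in layers $\ell<L$ involve backpropagated signals through the BN layer; I must verify that the Jacobian of the BN map (which subtracts the batch mean) sends the constant direction to zero, so that the feature maps entering the relevant Gram terms are genuinely centered across the batch. Concretely, the BN Jacobian is a projection orthogonal to $\mathbf{1}$ in the batch dimension, so contracting with $\mathbf{1}$ on both sides kills every term routed through it. I would handle this by checking that the batch-mean subtraction in $\check\alpha^{(L-1)}$ guarantees $\sum_i \partial_{\theta_p} f_{\theta,k}(x_i)=0$ for all parameters $\theta_p$ except the final-layer biases, which is precisely the statement that $\tilde\Theta^{(L)}\mathbf{1}$ is supported on the bias contribution.

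Finally, I would assemble these pieces: the only surviving term under the double contraction with $\frac{1}{\sqrt N}\mathbf{1}$ is the last-layer bias contribution, which equals $\beta^2$ by the definition of the parametrization (the bias is scaled by $\beta$ and contributes $\beta^2$ to each diagonal-in-data block, uniformly across the batch). Summing and normalizing by $N$ yields $\frac{1}{N}\mathbf{1}^T\tilde\Theta^{(L)}\mathbf{1}=\beta^2$, completing the proof.
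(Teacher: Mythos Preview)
Your proposal is correct and arrives at exactly the paper's key claim: $\frac{1}{N}\sum_i \partial_{\theta_p} f_{\theta,k}(x_i)=0$ for every parameter except the last-layer bias $b_k^{(L-1)}$, whose contribution is $\beta$. The paper, however, reaches this by a one-line shortcut that sidesteps any analysis of the BN Jacobian or layerwise NTK decomposition: since BN centers the post-nonlinearity activations identically in $\theta$, the identity $\frac{1}{N}\sum_i f_{\theta,k}(x_i)=\beta b_k^{(L-1)}$ holds for all parameter values, and differentiating both sides in $\theta_p$ gives the claim immediately. Your detour through the infinite-width formula of Theorem~1 is unnecessary and slightly misleading (the lemma is a finite-width statement about the actual $\tilde\Theta^{(L)}$, and that formula does not incorporate BN anyway); swapping the batch-sum with $\partial_{\theta_p}$ does all the work.
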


In contrast, for a network in the extreme order, i.e. such that $\Theta^{(L)}(x,y)\approx c$
for some constant $c>0$, the constant Rayleigh quotient scales as
$\frac{1}{N}\mathbf{1}^{T}\tilde{\Theta}^{(L)}\mathbf{1}\approx cN$.
The analysis of BN presented in \cite{karakida2019normalization}
is also closely related to this phenomenon.

\section{Convolutional Networks and Generative Adversarial Networks\label{sec:DC-NN}}

The order/chaos transition is even more interesting for convolutional
networks, in particular in the context of Generative Adversarial Networks
(GANs): a common problem in GAN training is the so-called `mode collapse',
where the generator converges to a constant function, hence generating
a single image instead of a variety of images. This problem is closely
related to the fact that the constant mode of the NTK Gram matrix
dominates, and indeed the problem of mode collapse is most proeminent
in the ordered regime (Figure \ref{fig:NTK_PCA}), while normalization
techniques (leading to a chaotic network) mitigate this problem. 

In this section, we use the NTK to explain the appearance of border
and checkerboard artifacts in generated images. We show that the border
artifacts issue can be solved by a change of parametrization and that
the checkerboard artifacts occur in the ordered regime, and can hence
be avoided by adding normalization and using layer-wise learning rates.
With these changes we are able to train GANs on CelebA dataset without
Batch Normalization.

\begin{comment}
In this section, we use the NTK of convolutional networks (in particular
fractionally-strided convolutions or deconvolutions) to identify certain
problems which can be solved by changing the parametrization of DNNs.
We propose a NTK parametrization for general graph-based neural networks
(which contain convolutional networks as a special case) for which
we give a limiting NTK description and discuss how this parametrization
solves border effects. We then discuss a similar order/chaos transition
in these networks for large depths and show that in the ordered regime
some dominating eigenvalues appear, corresponding to checkerboard
patterns. These checkerboard artifacts can be avoided in the chaotic
regime and by using layerwise adaptative learning rates. Finally we
discuss how all of these effects are related to the problem of mode
collapse in Generative Adversarial Networks (GANs).
\end{comment}

\subsection{Graph-based Neural Networks (GB-NNs)\label{subsec:Graph-based-Neural-Networks-setup}}

In FC-NNs, the neurons are indexed by their layer $\ell$ and their
channel $i\in\left\{ 1,...,n_{\ell}\right\} $, in convolutional networks
each neuron furthermore has a location on the image (or on a downscaled
image). The position $p$ of a neuron determines its connections with
the neurons of the previous and subsequent layers. Furthermore certain
connections are shared, i.e. they evolve together. We abstract these
concepts in the following manner: 

For each layer $\ell=0,...,L$, the neurons are indexed by a position
$p\in I_{\ell}$ and a channel $i=1,...,n_{\ell}$. The sets of positions
$I_{\ell}$ can be any set, in particular any subset of $\mathbb{Z}^{D}$.
Each position $p\in I_{\ell+1}$ has a set of parents $P(p)\subset I_{\ell}$
which are neurons of the previous layer connected to $p$. The connections
from the parent $\left(q,\ell\right)$ to the position $\left(p,\ell+1\right)$
are encoded in an $n_{\ell}\times n_{\ell+1}$ weight matrix $W^{(\ell,q\to p)}$.
Finally two connections $q\to p$ and $q'\to p'$ can be shared, setting
the corresponding matrices to be equal $W^{(\ell,q\to p)}=W^{(\ell,q'\to p')}$.

\begin{comment}
The indicator $\chi(q\to p,q'\to p')$ is equal to $1$ if $W^{(\ell,q\to p)}$
and $W^{(\ell,q'\to p')}$ are shared (in the sense that the two matrices
are forced to be equal at initialization and during training) and
$0$ otherwise. It satisfies $\chi(q\to p,q\to p)=1$ for any neuron
$p$ and any $q\in P(p)$ and it is transitive. We also suppose that
for any neuron $p$ and any $q,q'\in P(p)$, $\chi(q\to p,q'\to p)=\delta_{qq'}$
(i.e. no pair of connections connected to the same neuron $p$ are
shared).
\end{comment}

The inputs of the network $x$ are vectors in $\left(\mathbb{R}^{n_{0}}\right)^{I_{0}}$,
for example for colour images of width $w$ and height $h$, we have
$n_{0}=3$ and $I_{0}=\left\{ 1,...,w\right\} \times\left\{ 1,...,h\right\} \subset\mathbb{Z}^{2}$.
The activations and preactivations $\alpha^{\left(\ell\right)},\tilde{\alpha}^{\left(\ell\right)}\in\left(\mathbb{R}^{n_{\ell}}\right)^{I_{\ell}}$
are constructed recursively using the NTK parametrization: we set
$\alpha^{\left(0,p\right)}\left(x\right)=x^{(p)}$ and for $\ell=0,\ldots,L-1$
and any position $p\in I_{\ell+1}$,
\begin{equation}
\tilde{\alpha}^{(\ell+1,p)}(x)=\beta b^{(\ell)}+\frac{\sqrt{1-\beta^{2}}}{\sqrt{\left|P(p)\right|n_{\ell}}}\sum_{q\in P(p)}W^{(\ell,q\to p)}\alpha^{(\ell,q)}(x),\qquad\alpha^{\left(\ell+1,p\right)}\left(x\right)=\sigma\left(\tilde{\alpha}^{\left(\ell+1,p\right)}\left(x\right)\right)\label{eq:def-graph-based}
\end{equation}
where $\sigma$ is applied entry-wise, $\beta\geq0$ and $\left|P(p)\right|$
is the cardinality of $P(p)$.

\subsubsection{Deconvolutional networks}

Deconvolutional networks (DC-NNs) in dimension $D$ can be seen as
a special case of GB-NNs. We first consider borderless DC-NNs, i.e.
the set of positions are $I_{\ell}=\mathbb{Z}^{D}$ for all layers
$\ell$. Given window dimensions $(w_{1},...,w_{D})$ and strides
$(s_{1},...,s_{D})$, the set of parents of $p\in I_{\ell+1}$ is
the hyperrectangle $P(p)=\left\{ \left\lfloor p_{1}/s_{1}\right\rfloor +1,...,\left\lfloor p_{1}/s_{1}\right\rfloor +w_{1}\right\} \times\cdots\times\left\{ \left\lfloor p_{D}/s_{D}\right\rfloor +1,...,\left\lfloor p_{D}/s_{D}\right\rfloor +w_{D}\right\} \subset\mathbb{Z}^{D}$.
Two connections $q\to p$ and $q'\to p'$ are shared if $s_{d}\mid p_{d}-p_{d}'$
and $q_{d}-q_{d}'=\frac{p_{d}-p_{d}'}{s_{d}}$ for all $d=1,...,D$.
This definition can easily be extended to any other choices of position
sets $I_{\ell}\subset\mathbb{Z}^{D}$ (for example hyperrectangles)
by considering $P(p)\cap I_{\ell}$ in place of $P(p)$ as parents
of $p$.

\subsubsection{Neural Tangent Kernel}

As for FC-NNs , in the infinite width limit (when $n_{1},...,n_{L-1}\to\infty$)
the preactivations $\tilde{\alpha}_{i}^{(\ell,p)}(x)$ converge to
Gaussian processes with covariance 
\[
Cov\left(\tilde{\alpha}_{i}^{(\ell+1,p)}(x),\tilde{\alpha}_{j}^{(\ell+1,q)}(y)\right)=\delta_{ij}\Sigma^{(\ell,pq)}(x,y).
\]
The behavior of the network during training is described by the NTK
\[
\Theta_{ij}^{(\ell,pq)}(x,y)=\sum_{k=1}^{P}\partial_{\theta_{k}}\tilde{\alpha}_{i}^{(\ell+1,p)}(x)\partial_{\theta_{k}}\tilde{\alpha}_{j}^{(\ell+1,q)}(y).
\]
In Section \ref{sec:General-Convolutional-Networks} of the Appendix
we prove the convergence $\Theta_{ij}^{(\ell,pq)}(x,y)\to\delta_{ij}\Theta_{\infty}^{(\ell,pq)}(x,y)$
of the NTK and give formulas for the limiting kernels $\Sigma^{(\ell,pq)}(x,y)$
and $\Theta_{\infty}^{(\ell,pq)}(x,y)$. 

\subsubsection{Border Effects}

A very important element of the NTK parametrization proposed in Section
\ref{subsec:Graph-based-Neural-Networks-setup} is the factors $1/\sqrt{\left|P(p)\right|n_{\ell}}$
in the definition of the preactivation (Equation \ref{eq:def-graph-based}):
we scale the contribution of the previous layer according to the number
of neurons $\left|P(p)\right|n_{\ell}$ (i.e. $n_{\ell}$ channels
for each of the$\left|P(p)\right|$ positions) which are fed into
the neuron. For inputs $x\in\mathbb{S}_{n_{0}}^{I_{0}}$ (i.e. such
that $x^{(p)}\in\mathbb{S}_{n_{0}}$ for all $p$), these factors
ensure that the limiting variance $\Sigma^{\left(\ell,pp\right)}\left(x,x\right)$
of $\tilde{\alpha}_{i}^{(\ell,p)}(x)$ at initialization is the same
for all $p$:
\begin{prop}
\label{prop: GB-NNs NTK variance}For GB-NNs with the NTK parametrization,
$\Sigma^{\left(\ell,pp\right)}\left(x,x\right)$ and $\Theta_{\infty}^{\left(\ell,pp\right)}\left(x,x\right)$
do not depend neither on $p\in I_{\ell}$ nor on $x\in\mathbb{S}_{n_{0}}^{I_{0}}$.
\end{prop}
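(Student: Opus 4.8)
The plan is to establish both claims by a single induction on the layer index $\ell$, exploiting the fact that the \emph{diagonal} kernels close under the recursion: the diagonal at layer $\ell$ and position $p$ is determined only by the diagonals at layer $\ell-1$ at the parent positions $q\in P(p)$. Writing $A(v)=\mathbb{E}_{z\sim\mathcal{N}(0,v)}[\sigma(z)^{2}]$ and $\dot{A}(v)=\mathbb{E}_{z\sim\mathcal{N}(0,v)}[\dot\sigma(z)^{2}]$, the infinite-width formulas established in Section \ref{sec:General-Convolutional-Networks} of the Appendix specialize on the diagonal to
\[
\Sigma^{(\ell,pp)}(x,x)=\beta^{2}+\frac{1-\beta^{2}}{|P(p)|}\sum_{q\in P(p)}A\bigl(\Sigma^{(\ell-1,qq)}(x,x)\bigr),
\]
together with $\Theta_{\infty}^{(\ell,pp)}(x,x)=\Sigma^{(\ell,pp)}(x,x)+\frac{1-\beta^{2}}{|P(p)|}\sum_{q\in P(p)}\dot{A}\bigl(\Sigma^{(\ell-1,qq)}(x,x)\bigr)\,\Theta_{\infty}^{(\ell-1,qq)}(x,x)$. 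The essential point is that the weight $1/|P(p)|$ is exactly the square of the normalization $1/\sqrt{|P(p)|n_{\ell}}$ appearing in \eqref{eq:def-graph-based}, with $|P(p)|=|P(p)\cap I_{\ell}|$ the \emph{true} number of parents, so that a genuine \emph{average} over $P(p)$ appears in both recursions.

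For the induction I would first treat $\Sigma^{(\ell,pp)}(x,x)$. In the base case (the first preactivation layer) the recursion reads $\beta^{2}+\frac{1-\beta^{2}}{|P(p)|n_{0}}\sum_{q\in P(p)}\|x^{(q)}\|^{2}$; since $x\in\mathbb{S}_{n_{0}}^{I_{0}}$ forces $\|x^{(q)}\|^{2}=n_{0}$ for every $q$, this equals $\beta^{2}+(1-\beta^{2})=1$, independently of $p$ and $x$. This is the only place the sphere hypothesis is used. For the inductive step, assume $\Sigma^{(\ell-1,qq)}(x,x)$ equals a single constant $v_{\ell-1}$ for all $q\in I_{\ell-1}$ and all admissible $x$; then every summand $A\bigl(\Sigma^{(\ell-1,qq)}(x,x)\bigr)=A(v_{\ell-1})$ is the same, the sum equals $|P(p)|\,A(v_{\ell-1})$, and the prefactor $1/|P(p)|$ cancels it, giving $\Sigma^{(\ell,pp)}(x,x)=\beta^{2}+(1-\beta^{2})A(v_{\ell-1})=:v_{\ell}$, again independent of $p$ and $x$. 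The uniform parent count is what makes this cancellation work even at the borders, where $|P(p)|$ varies.

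The argument for $\Theta_{\infty}^{(\ell,pp)}(x,x)$ runs in parallel and reuses the fact that $\Sigma$ (hence $\dot{A}(\Sigma)$) has already been shown position- and input-independent. The base case $\Theta_{\infty}^{(0,pp)}(x,x)=\Sigma^{(0,pp)}(x,x)=1$ holds since only the first-layer parameters contribute. In the step, the inductive hypotheses make both $\dot{A}\bigl(\Sigma^{(\ell-1,qq)}(x,x)\bigr)$ and $\Theta_{\infty}^{(\ell-1,qq)}(x,x)$ constant in $q$, so the averaged sum again collapses and the $1/|P(p)|$ cancels, leaving a quantity depending only on $\ell$. I expect the main obstacle to be not the induction itself but justifying the precise averaging form of the recursions: one must verify, via the Gaussian-process and law-of-large-numbers arguments of the Appendix, that the channel-off-diagonal terms vanish (the $\delta_{ij}$ structure), that within a single position $p$ the distinct parent weight matrices $W^{(\ell,q\to p)}$ are independent (the no-sharing-within-$p$ assumption), and above all that the count entering $1/\sqrt{|P(p)|n_{\ell}}$ is the \emph{actual} cardinality $|P(p)\cap I_{\ell}|$ near the borders --- this is exactly what makes the cancellation robust to the boundary geometry, and is the content being set up for the border-effects discussion.
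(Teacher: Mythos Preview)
Your proposal is correct and follows essentially the same inductive approach as the paper: reduce the double parent sum to a single sum via the no-sharing-within-$p$ assumption $\chi(q\to p,q'\to p)=\delta_{qq'}$, then observe that the $1/|P(p)|$ factor turns the sum into an average of identical terms, which collapses to a position- and input-independent constant. The only notable difference is that the paper additionally assumes $\sigma$ is standardized and thereby computes the explicit values $\Sigma^{(\ell,pp)}(x,x)=1$ and $\Theta_\infty^{(L,pp)}(x,x)=\frac{1-r^{L}}{1-r}$, whereas you carry along an abstract sequence $v_\ell$; your version is thus slightly more general and matches the main-text statement more faithfully. One small indexing slip: the paper's recursions begin at layer $1$ (with $\Theta_\infty^{(1,pp)}=\Sigma^{(1,pp)}$), so your base case for $\Theta$ should be stated at $\ell=1$ rather than $\ell=0$.
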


These factors are usually not present and to compensate, the variance
of the weights at initialization is reduced. In convolutional networks
with LeCun initialization, the standard deviation of the weights at
initialization is set to $\frac{1}{\sqrt{whn_{\ell}}}$ for $w$ and
$h$ the width and height of the window of convolution, which has
roughly the effect of replacing the $\frac{1}{\sqrt{\left|P(p)\right|n_{\ell}}}$
factors by $\frac{1}{\sqrt{whn_{\ell}}}$. However $whn_{\ell}$ is
the maximal number of parents that a neuron can have, it is typically
attained at positions $p$ in the middle of the image. Positions $p$
on the border of the image have less parents hence leading to a smaller
contribution of the previous layer. This leads both kernels $\Sigma^{(\ell,pp)}(x,x)$
and $\Theta^{(\ell,pp)}(x,x)$ to have lower intensity for $p\in I_{\ell}$
on the border (see Appendix G for an example when $I_{\ell}=\mathbb{N}$,
i.e. when there is one border pixel), leading to border artifacts
as seen in Figure \ref{fig:NTK_PCA}.

\begin{figure}
\centering \rotatebox{90}{\;\;\;\;\;\;\;\;\;\;\; ORDER}\includegraphics[scale=0.5]{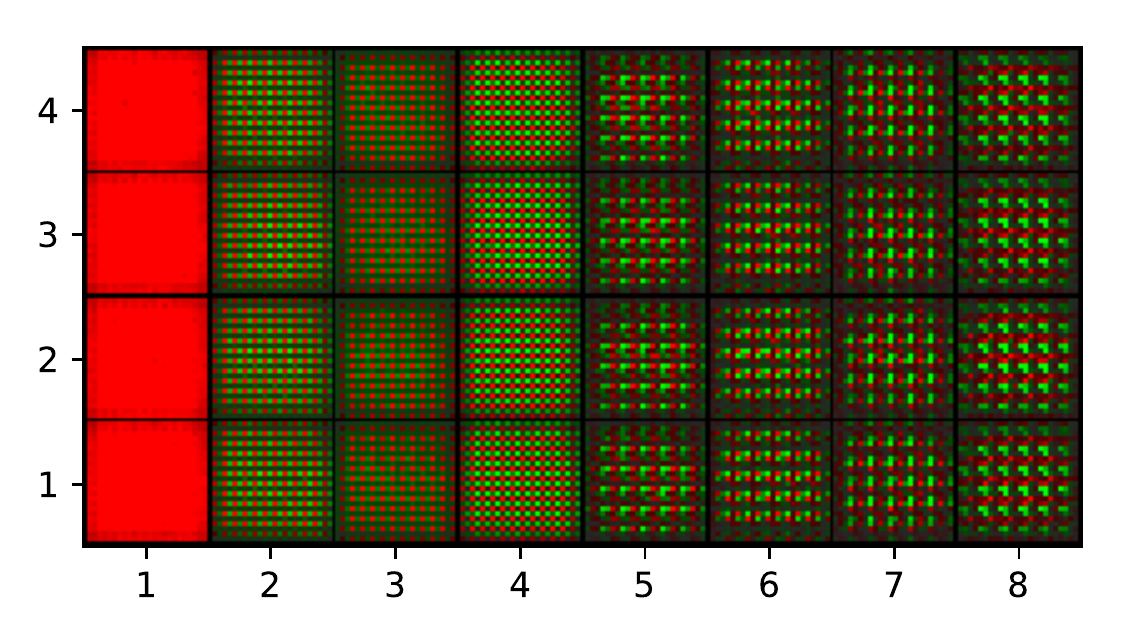}\includegraphics[scale=0.5]{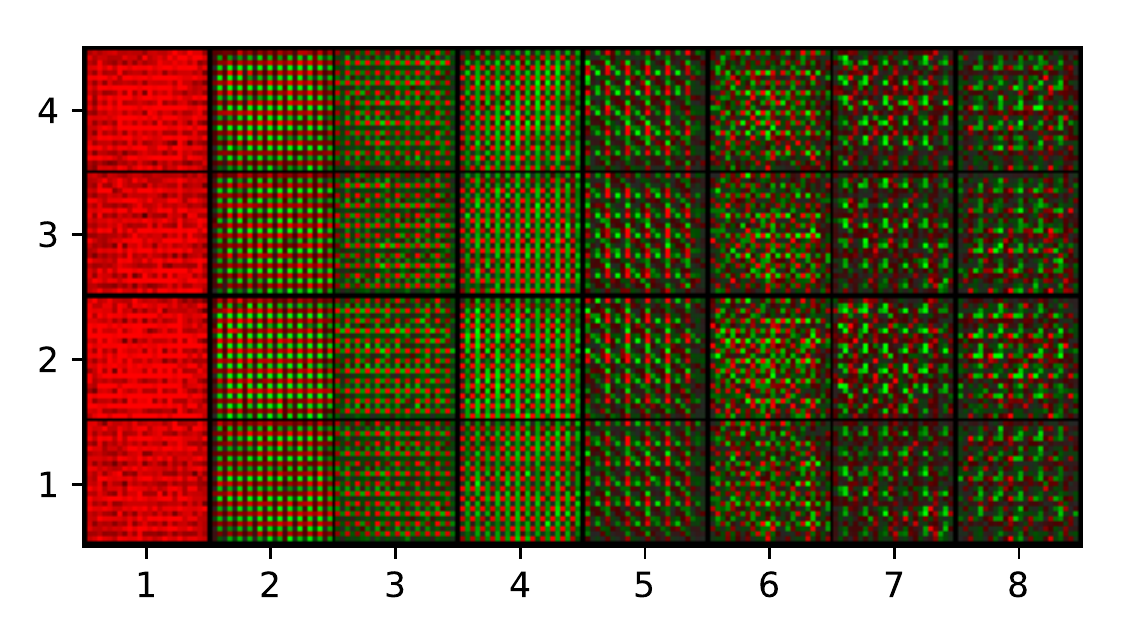}\includegraphics[viewport=0bp -65bp 326bp 390bp,scale=0.185]{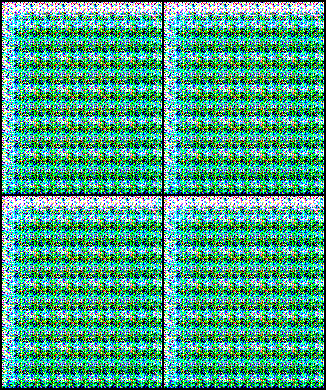}

\rotatebox{90}{\;\;\;\;\;\;\;\;\;\;\; CHAOS}\includegraphics[scale=0.5]{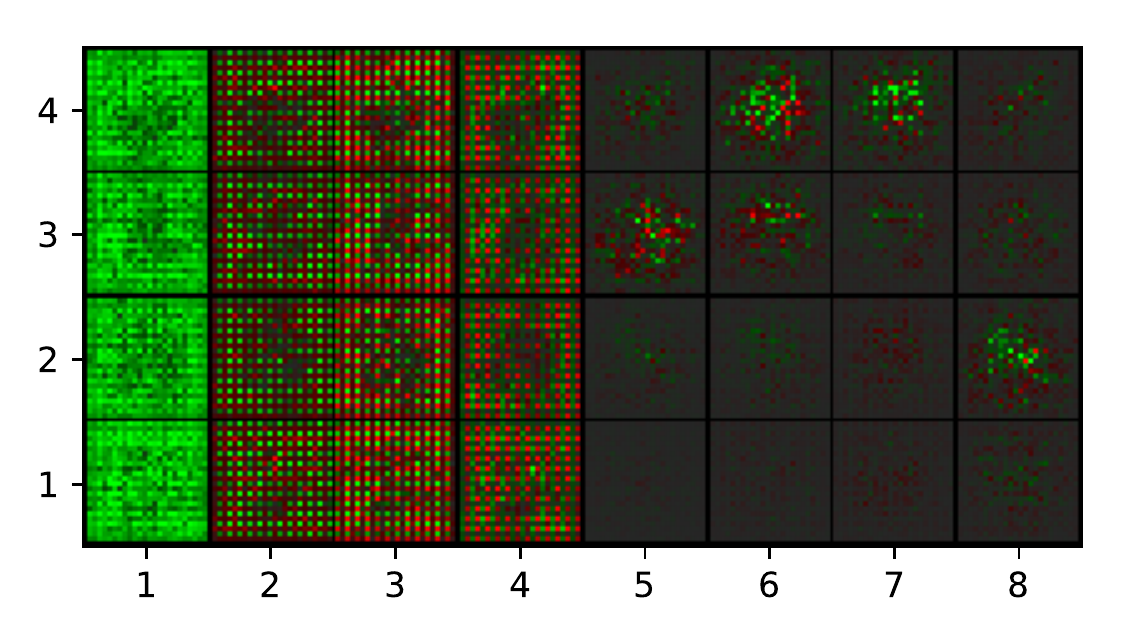}\includegraphics[scale=0.5]{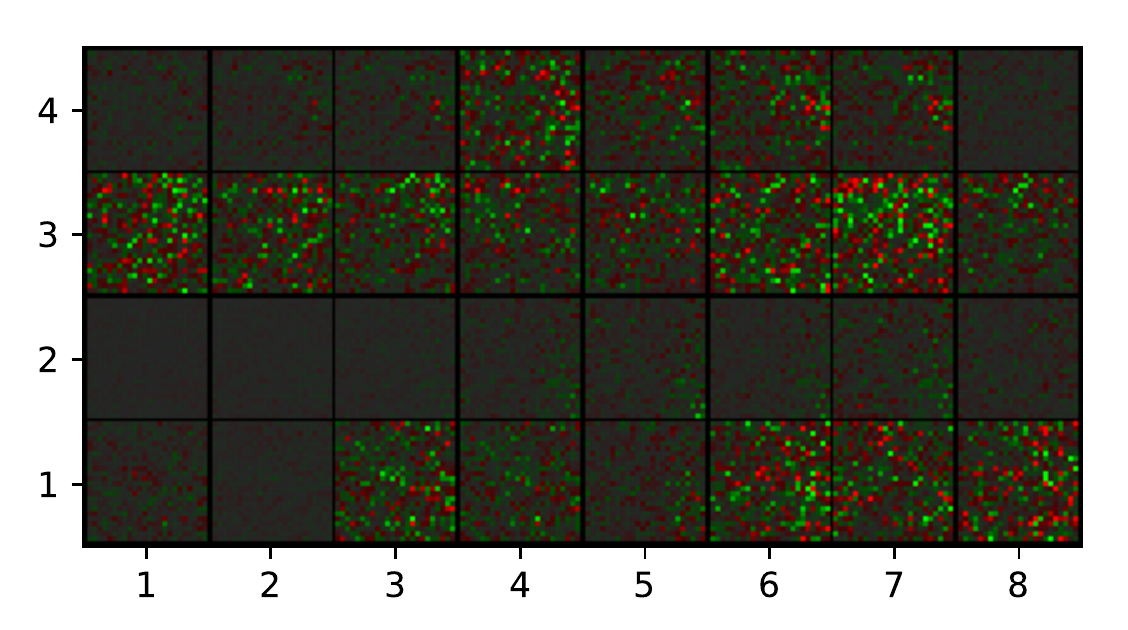}\includegraphics[viewport=0bp -65bp 326bp 390bp,scale=0.185]{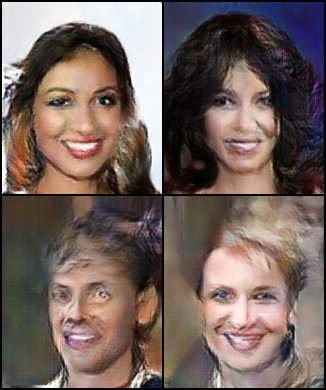}\\
\rotatebox{90}{\;\;\; BATCH NORM}\includegraphics[scale=0.5]{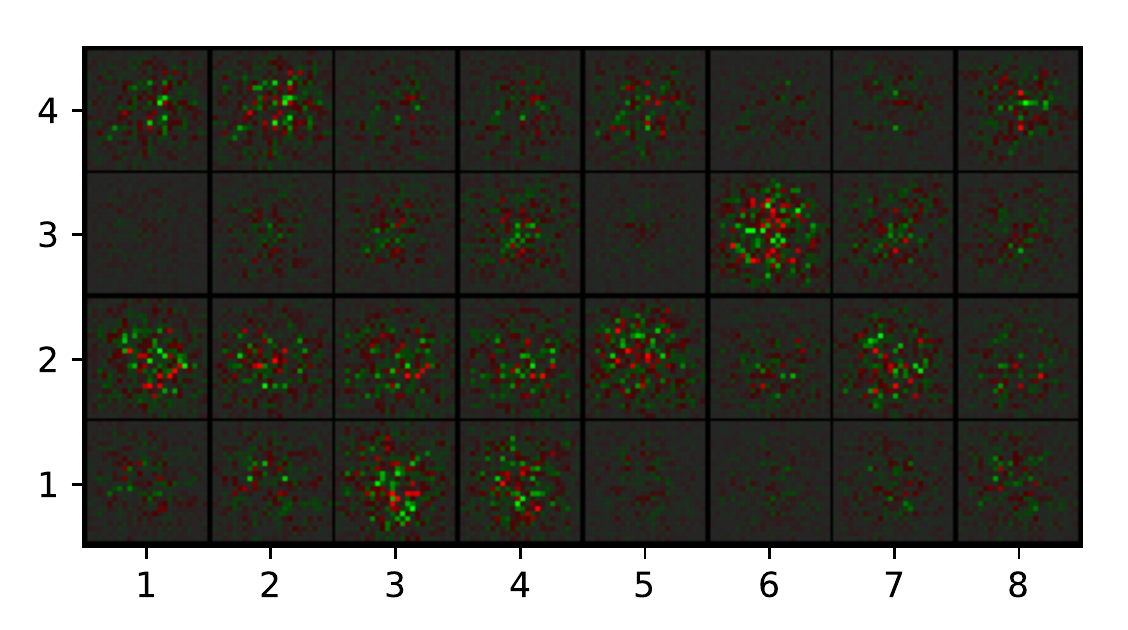}\includegraphics[scale=0.5]{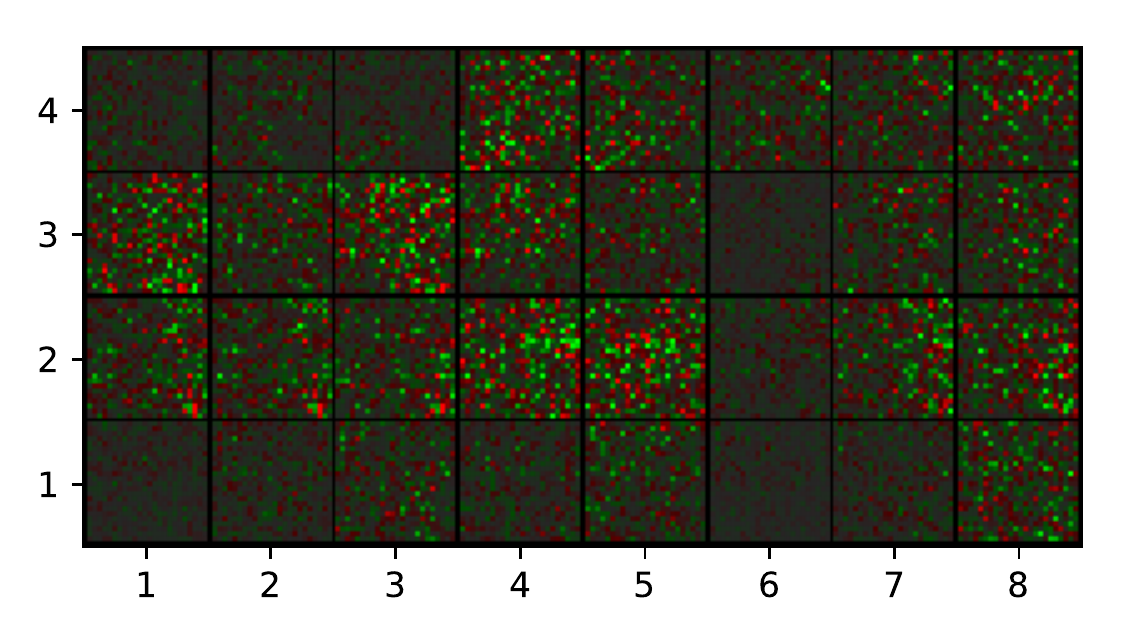}\includegraphics[viewport=0bp -65bp 326bp 390bp,scale=0.185]{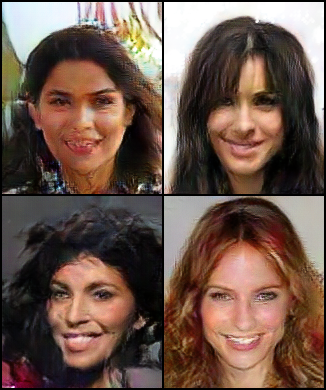}

\;\;\;\;\;\;\;\;\;\;\;\;\;\;\;\;\;\; standard \;\;\;\;\;\;\;\;\;\;\;\;\;\;\;\;\;\;\;\;\;\;\;\; graph-based + layer dependent lr. \;\;\;\;\;\;\;  GAN 

\caption{\label{fig:NTK_PCA}The left and middle columns represent the first
8 eigenvectors of the NTK Gram matrix of a DC-NN (L=3) on 4 inputs.
(left) without the Graph-Based Parametrization (GBP) and the Layer-Dependent
Learning Rate (LDLR); (middle) with GBP and LDLR. The right column
represents the results of a GAN on CelebA with GBP and LDLR. Each
line correspond to a choice of nonlinearity/normalization for the
generator: (top) ReLU, (middle) normalized ReLU and (bottom) ReLU
with Batch Normalization.}
\end{figure}

\subsection{Bulk Order and Chaos for Deconvolutional Nets}

Large depths deconvolutional networks exhibit a similar Order/Chaos
transition as that of FC-NNs, the values of the limiting kernel at
different positions $\Theta^{(L,pq)}$ is especially interesting.

For GB-NNs, the value of an output neuron at a position $p\in I_{L}$
only depends on the inputs which are ancestors of $p$, i.e. all positions
$q\in I_{0}$ such that there is a chain of connections from $q$
to $p$. For the same reason , the NTK $\Theta^{(L,pp')}(x,y)$ only
depends on the values $x_{q},y_{q'}$ for $q,q'\in I_{0}$ ancestors
of $p$ and $p'$ respectively.

For a stride $s\in\left\{ 2,3,\ldots\right\} ^{d}$, we denote the
$s$-valuation $v_{s}\left(n\right)$ of $n\in\mathbb{Z}^{d}$ as
the largest $k\in\left\{ 0,1,2,\ldots\right\} $ such that $s_{i}^{k}\mid n_{i}$
for all $i=1,...,d$. The behaviour of the NTK $\Theta_{p,p'}^{(L)}(x,y)$
depends on the $s$-valuation of the difference of the two output
positions. If $v_{s}\left(p'-p\right)$ is strictly smaller than $L$,
the NTK $\Theta^{(L,pp')}(x,y)$ converges to a constant in the infinite-width
limit for any $x,y\in\mathbb{S}_{n_{0}}^{I_{0}}$. Again the characteristic
value $r_{\sigma,\beta}$ plays a central role in the behavior of
the large-depth limit. In this context, we define the rescaled NTK
as $\vartheta^{\left(L,pp'\right)}\left(x,y\right)=\Theta^{(L,pp')}(x,y)/\sqrt{{\Theta^{(L,pp)}(x,x)\Theta^{(L,p'p')}(y,y)}}$
(note that the denominator actually does not depend on $p,p',x$ nor
$y$ by Proposition \ref{prop: GB-NNs NTK variance})
\begin{thm}
\label{thm:freeze_chaos_DC-NN}Consider a borderless DC-NN with position
sets $I_{\ell}=\mathbb{Z}^{D}$ for all layers $\ell$, upsampling
stride $s\in\left\{ 2,3,\ldots\right\} ^{D}$ and window sizes $w\in\left\{ 1,2,3,\ldots\right\} ^{D}$.
For a standardized twice differentiable $\sigma$, there exist constants
$C_{1},C_{2}>0,$ such that the following holds: for $x,y\in\mathbb{S}_{n_{0}}^{I_{0}}$,
and any positions $p,p'\in I_{L}$, we have

\textbf{Order:} When $r_{\sigma,\beta}<1$, taking $v=\min\left(v_{s}\left(p-p'\right),L-1\right)$%
\begin{comment}
, taking $v=L-1$ if $p=p'$ and $r=r_{\sigma,\beta}$
\end{comment}
, we have 
\[
\frac{1-r_{\sigma,\beta}^{v+1}}{1-r_{\sigma,\beta}^{L}}-C_{1}(v+1)r_{\sigma,\beta}^{v}\leq\vartheta^{\left(L,pp'\right)}\left(x,y\right)\leq\frac{1-r_{\sigma,\beta}^{v+1}}{1-r_{\sigma,\beta}^{L}}.
\]
\textbf{Chaos:} When $r_{\sigma,\beta}>1$, if either $v_{s}\left(p-p'\right)<L$
or if there exists $c<1$ such that for all positions $q\in I_{0}$
which are ancestors of $p$, $\left|x_{q}^{T}y_{q+\frac{p'-p}{s^{L}}}\right|<c$,
then there exists $h<1$ such that
\[
\left|\vartheta^{\left(L,pp'\right)}\left(x,y\right)\right|\leq C_{2}h^{L}.
\]
\end{thm}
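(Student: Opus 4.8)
The plan is to collapse the two-dimensional position/channel recursion of the DC-NN onto a one-dimensional recursion indexed by the $s$-valuation of the output offset, and then to feed this recursion into the ordered/chaotic contraction estimates for the scalar correlation map that already underlie Theorem~\ref{thm:infinite-depth-fc-nn}. First I would write down, from the Appendix, the infinite-width recursions for the borderless DC-NN: the covariance $\Sigma^{(\ell+1,pp')}=\beta^{2}+\tfrac{1-\beta^{2}}{\sqrt{|P(p)||P(p')|}}\sum_{(q,q')\,\mathrm{shared}}\mathbb{L}^{\sigma}_{\Sigma^{(\ell)}}$ and the matching NTK recursion $\Theta^{(\ell+1,pp')}=\Sigma^{(\ell+1,pp')}+\tfrac{1}{\sqrt{|P(p)||P(p')|}}\sum_{(q,q')\,\mathrm{shared}}\dot\Sigma^{(\ell+1,qq')}\,\Theta^{(\ell,qq')}$, where $\dot\Sigma^{(\ell+1,qq')}=(1-\beta^{2})\mathbb{L}^{\dot\sigma}_{\Sigma^{(\ell,qq')}}$. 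The only combinatorial input I need is that $q\to p$ and $q'\to p'$ can be shared only when $s\mid(p-p')$, that in this case each parent $q$ of $p$ is matched with the single parent $q-(p-p')/s$ of $p'$, and that passing to parents divides the offset by $s$, lowering its valuation by exactly one. On the diagonal, Proposition~\ref{prop: GB-NNs NTK variance} and standardization give $\Sigma^{(\ell,pp)}=1$ and $\dot\Sigma^{(\ell,pp)}=r_{\sigma,\beta}=:r$, so $T_{\ell}:=\Theta^{(\ell,pp)}$ obeys $T_{\ell+1}=1+rT_{\ell}$, whence the normalizing denominator is $\sqrt{\Theta^{(L,pp)}\Theta^{(L,p'p')}}=T_{L}=\tfrac{1-r^{L}}{1-r}$ when $r<1$ and $\tfrac{r^{L}-1}{r-1}$ when $r>1$.

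The heart of the argument is a structural reduction. Since the sharing condition depends only on the offset, all matched parent pairs at a given depth share the same valuation, so they reach valuation $0$ simultaneously at depth $v_{s}(p-p')$, i.e. at layer $L-v_{s}(p-p')$. At a valuation-$0$ pair no connection is shared, so every weight contribution to both the covariance and the NTK vanishes and only the position-independent bias survives, forcing $\Sigma=\Theta=\beta^{2}$ there regardless of $x,y$. When $v_{s}(p-p')<L$ this boundary lies inside the network, and propagating it upward through the deterministic scalar maps $\rho\mapsto\beta^{2}+(1-\beta^{2})\mathbb{L}^{\sigma}(\rho)$ and $\rho\mapsto(1-\beta^{2})\mathbb{L}^{\dot\sigma}(\rho)$ shows by induction that all matched cross-kernels $\Sigma^{[k]},\dot\Sigma^{[k]}$ at depths $k\le v_{s}(p-p')$ are input- and position-independent. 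The parent averaging then collapses to a single value per level and the NTK telescopes to $\Theta^{(L,pp')}=\sum_{k=0}^{v-1}\Sigma^{[k]}\prod_{j<k}\dot\Sigma^{[j]}+\beta^{2}\prod_{j<v}\dot\Sigma^{[j]}$ with $v=\min(v_{s}(p-p'),L-1)$; when $v_{s}(p-p')\ge L$ the same telescoping holds with $v=L-1$ but the boundary is the (input-dependent) layer-$1$ kernel instead of $\beta^{2}$.

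All four inequalities now reduce to scalar estimates. For the \textbf{order} upper bound I use only $\Sigma^{[k]}\le1$ and $\dot\Sigma^{[j]}\le r$ (Cauchy--Schwarz, since $\mathbb{L}^{\dot\sigma}(\rho)\le\mathbb{E}[\dot\sigma^{2}]$), which bounds the telescoped sum by $\sum_{k=0}^{v}r^{k}$ and hence $\vartheta^{(L,pp')}\le\tfrac{1-r^{v+1}}{1-r^{L}}$. For the \textbf{order} lower bound I note that the map $\rho\mapsto\beta^{2}+(1-\beta^{2})\mathbb{L}^{\sigma}(\rho)$ has slope $(1-\beta^{2})\mathbb{L}^{\dot\sigma}(\rho)\le r<1$ everywhere, so the deficit $1-\Sigma^{[k]}\le r^{v-k}(1-\beta^{2})$; twice-differentiability of $\sigma$ makes $\mathbb{L}^{\dot\sigma}$ Lipschitz and bounds $r-\dot\Sigma^{[j]}\le C r^{v-j}$. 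Writing $D_{k}$ for the gap between $\Theta^{[k]}$ and its idealized value $\sum_{i=0}^{v-k}r^{i}$, these estimates give the linear recursion $D_{k}\le C'r^{v-k}+rD_{k+1}$, whose solution is $D_{0}=O((v+1)r^{v})$, yielding the stated lower bound after dividing by $T_{L}$. In the \textbf{chaos} regime, if $v_{s}(p-p')<L$ the same crude bounds give $|\Theta^{(L,pp')}|\le\tfrac{r^{v+1}-1}{r-1}$, and dividing by $T_{L}=\tfrac{r^{L}-1}{r-1}$ produces $|\vartheta^{(L,pp')}|\le\tfrac{r^{v+1}-1}{r^{L}-1}\le C_{2}(1/r)^{L}$, i.e. $h=1/r<1$ since $v$ is fixed while $L\to\infty$.

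The remaining and genuinely delicate case is chaotic $v_{s}(p-p')\ge L$, where the positions stay matched all the way to the inputs, the cross-kernels are truly input-dependent, and the parent averaging no longer collapses. Here the hypothesis $|x_{q}^{T}y_{q+(p'-p)/s^{L}}|<c$ for every ancestor $q$ is exactly what is needed: it forces every input correlation along every matched path to lie bounded away from $1$, so the chaotic contraction toward the stable fixed point $\rho^{*}<1$ keeps all layer correlations below some $\rho^{**}<1$, and strict Cauchy--Schwarz then gives $\dot\Sigma^{[j]}\le r'<r$ uniformly over paths. Summing over the telescoped series (bounding each path identically) yields $|\Theta^{(L,pp')}|\le C(r')^{L}$ against $T_{L}\sim r^{L}$, hence $|\vartheta^{(L,pp')}|\le C_{2}(r'/r)^{L}$. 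I expect this uniform-over-all-ancestors control to be the main obstacle, since it is the only place where the convolutional averaging does not trivialize and where one must propagate the scalar chaotic estimate of Theorem~\ref{thm:infinite-depth-fc-nn} simultaneously along every branch of the ancestor tree; the ordered lower bound is the secondary difficulty, as it requires the sharp $r^{v-k}$ decay of the correlation deficit rather than a crude contraction.
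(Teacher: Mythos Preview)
Your overall strategy is the same as the paper's: both proofs feed the borderless DC-NN recursion into the scalar contraction estimates for $B_{\beta}\circ R_{\sigma}$ and $R_{\dot\sigma}$, using that the offset valuation drops by one at each layer. Your ``collapse'' observation --- that when $v_{s}(p-p')<L$ all matched cross-kernels $\Sigma^{[k]}$ are deterministic iterates $(B_{\beta}\circ R_{\sigma})^{\circ k}(\beta^{2})$, so the parent averages reduce to a single scalar recursion --- is cleaner than what the paper writes; the paper proceeds instead by bounding each $\Sigma^{(\ell,qq')}$ uniformly over parents and carrying the inequality through the average, which also covers the $v_{s}(p-p')\ge L$ case where your collapse genuinely fails. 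For the ordered bounds and for the chaotic case $v_{s}(p-p')\ge L$, your argument is correct and essentially identical to the paper's.

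There is, however, a genuine gap in your chaotic case $v_{s}(p-p')<L$. The crude bounds $|\Sigma^{[k]}|\le 1$ and $|\dot\Sigma^{[j]}|\le r$ give only $|\vartheta^{(L,pp')}|\le\frac{r^{v+1}-1}{r^{L}-1}$, and you then assert $h=1/r$ ``since $v$ is fixed while $L\to\infty$''. But $v=v_{s}(p-p')$ is \emph{not} fixed: the theorem is stated for a given depth $L$ and \emph{any} positions $p,p'\in I_{L}$, with $C_{2}$ universal, so $v$ ranges over $\{0,\dots,L-1\}$. At $v=L-1$ your bound is $1$, and more generally no choice of $h<1$ and universal $C_{2}$ absorbs the factor $r^{v+1}$. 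The fix is immediate from your own collapse observation: since each $\Sigma^{[k]}=(B_{\beta}\circ R_{\sigma})^{\circ k}(\beta^{2})\le z:=\max\{\beta^{2},a\}<1$ (here $a$ is the nontrivial fixed point of $B_{\beta}\circ R_{\sigma}$), you get the sharper estimate $|\dot\Sigma^{[j]}|\le w:=(1-\beta^{2})R_{\dot\sigma}(z)<r$ uniformly in $j$, hence $|\Theta^{(L,pp')}|\le C\,w^{v}$ and $|\vartheta^{(L,pp')}|\le C_{2}(w/r)^{L}$ with $h=w/r$ independent of $p,p'$. This is exactly what the paper does.
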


This theorem suggests that in the order regime, the correlations between
differing positions $p$ and $p'$ increase with $v_{s}\left(p-p'\right)$,
which is a strong feature of checkerboard patterns \cite{Checkerboard_odena2016}.
These artifacts typically appear in images generated by DC-NNs. The
form of the NTK also suggests a strong affinity to these checkerboard
patterns: they should dominate the NTK spectral decomposition. This
is shown in Figure \ref{fig:NTK_PCA} where the eigenvectors of the
NTK Gram matrix for a DC-NN are computed.

In the chaotic regime, the normalized NTK converges to a ``scaled
translation invariant'' Kronecker delta. For two output positions
$p$ and $p'=p+ks^{L}$ we associate the two regions $\omega$ and
$\omega'=\omega+k$ of the input space which are connected to $p$
and $p'$. Then $\vartheta^{\left(L,p,p+ks^{L}\right)}\left(x,y\right)$
is one if the patch $y_{\omega'}$ is a $k$ translation of $x_{\omega}$
and approximately zero otherwise.

\subsubsection{Layer-dependent learning rate\label{subsec:Layer-dependent-learning-rate}}

The NTK is the sum $\Theta^{(L)}=\sum_{\ell}\Theta_{W^{(\ell)}}^{(L)}+\Theta_{b^{(\ell)}}^{(L)}$
over the contributions of the weights $\Theta_{W^{(\ell)}}^{(L,pq)}(x,y)=\sum_{ij}\partial_{W_{ij}^{(\ell)}}f_{\theta,p}(x)\partial_{W_{ij}^{(\ell)}}f_{\theta,q}(y)$
and biases $\Theta_{b^{(\ell)}}^{(L,pq)}(x,y)=\sum_{j}\partial_{b_{j}^{(\ell)}}f_{\theta,p}(x)\partial_{b_{j}^{(\ell)}}f_{\theta,q}(y)$.
At the $\ell$-th layer, the weights and biases can only contribute
to checkerboard patterns of degree $v=L-\ell$ and $v=L-\ell-1$,
i.e. patterns with periods $s^{L-\ell}$ and $s^{L-\ell-1}$ respectively,
in the following sense:
\begin{prop}
In a DC-NN with stride $s\in\{2,3,...\}^{d}$, we have $\Theta_{\infty,W^{(\ell)}}^{(L,pp')}(x,y)=0$
if $s^{L-\ell}\nmid p'-p$ and $\Theta_{\infty,b^{(\ell)}}^{(L,pp')}(x,y)=0$
if $s^{L-\ell-1}\nmid p'-p$.
\end{prop}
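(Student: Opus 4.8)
The plan is to track which free parameters of layer $\ell$ appear \emph{simultaneously} in the gradients $\partial_\theta f_{\theta,p}$ and $\partial_\theta f_{\theta,p'}$, and then use the infinite-width limit to discard every cross-term except those coming from genuinely \emph{shared} weights. First I would make the sharing structure explicit: by the deconvolution sharing rule, the free weights of layer $\ell$ split into classes $W^{(\ell)}_{\phi,j}$ indexed by a phase $\phi=p''\bmod s$ of the output position $p''\in I_{\ell+1}$ and a tap $j\in\{1,\dots,w\}^D$, whereas the bias $b^{(\ell)}$ is a single vector shared across \emph{all} positions of layer $\ell+1$. Differentiating \eqref{eq:def-graph-based} and applying the chain rule, $\partial_{W^{(\ell)}_{\phi,j,ab}}f_{\theta,p}$ is a sum over ancestor positions $p''\in I_{\ell+1}$ of $p$ with $p''\equiv\phi\pmod s$, each term carrying a backpropagation factor from $p''$ up to $p$ and the activation $\alpha_a^{(\ell,\lfloor p''/s\rfloor+j)}$; the bias derivative is the same kind of sum but over \emph{all} ancestors $p''$ and with no activation factor.

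The key structural fact I would invoke is that in the infinite-width limit (the convergence result for the DC-NN NTK proved in the appendix) a product of two backpropagation paths survives only when the two paths use \emph{pairwise identical} weights at every intermediate layer; contributions in which any weight differs are sums of independent mean-zero Gaussian factors and vanish by the law of large numbers after the $\frac1{n_m}$ normalisations. Concretely, a path from a layer-$(\ell+1)$ position $p''=r_{\ell+1}$ up to $r_L=p$ uses the weights $W^{(m,r_m\to r_{m+1})}$ for $m=\ell+1,\dots,L-1$; for a second path $q''=r'_{\ell+1}\to\cdots\to r'_L=p'$ to share all of them, the sharing rule forces $s\mid r_{m+1}-r'_{m+1}$ and $r_m-r'_m=(r_{m+1}-r'_{m+1})/s$ at each $m$. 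Telescoping from $r_L-r'_L=p-p'$ downward over these $L-\ell-1$ layers yields $s^{L-\ell-1}\mid p-p'$ together with $p''-q''=(p-p')/s^{L-\ell-1}$.

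It then remains to add the constraint coming from the bottom parameter itself. For the \textbf{weights}, both derivatives are taken with respect to the \emph{same} class $W^{(\ell)}_{\phi,j}$, which acts only on positions of a fixed phase, so the surviving ancestors satisfy $p''\equiv q''\equiv\phi\pmod s$, i.e. $s\mid p''-q''$; combined with $p''-q''=(p-p')/s^{L-\ell-1}$ this gives $s\mid (p-p')/s^{L-\ell-1}$, that is $s^{L-\ell}\mid p'-p$, and hence $\Theta_{\infty,W^{(\ell)}}^{(L,pp')}=0$ whenever $s^{L-\ell}\nmid p'-p$. For the \textbf{biases}, the shared parameter $b^{(\ell)}$ is common to all phases, so there is no extra $\bmod\,s$ condition on $p''-q''$ and only $s^{L-\ell-1}\mid p'-p$ remains, giving $\Theta_{\infty,b^{(\ell)}}^{(L,pp')}=0$ when $s^{L-\ell-1}\nmid p'-p$. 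The $D$-dimensional statement follows by applying the one-dimensional divisibility argument coordinate-wise.

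The main obstacle is the rigorous justification of the survival criterion in the second step: the backpropagation factors and the forward activations are correlated in a finite network, and one must show that in the limit only shared-path cross-terms contribute. I expect the cleanest route is not to expand paths by hand but to prove the divisibility by induction on the number of layers above $\ell$, peeling off one layer at a time and using that the freshly introduced weights of that layer are independent of everything below: at each peel the set of admissible position-differences is divided by $s$ exactly as in the telescoping above, and the gradient-independence and convergence statements already recorded in the appendix supply the required decoupling. The phase bookkeeping for the bottom layer (weights versus bias) is then the only place where the two cases diverge, producing the one-power gap between $s^{L-\ell}$ and $s^{L-\ell-1}$.
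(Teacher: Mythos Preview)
Your proposal is correct, and the ``cleanest route'' you describe in your final paragraph is precisely what the paper does. The paper does not expand backpropagation paths at all: it first records closed recursive formulas for the per-parameter pieces of the limiting NTK,
\[
\Theta_{\infty}^{(L+1:W^{(\ell)},pp')}=\delta_{s\mid p-p'}\,\frac{1-\beta^{2}}{|P(p)|}\sum_{q\in P(p)}\Theta_{\infty}^{(L:W^{(\ell)},q,q+\frac{p'-p}{s})}\,R_{\dot\sigma}\bigl(\Sigma^{(L,q,q+\frac{p'-p}{s})}\bigr),
\]
and similarly for $b^{(\ell)}$, with base cases $\Theta_{\infty}^{(L:b^{(L-1)},pp')}=\beta^{2}$ and $\Theta_{\infty}^{(L:W^{(L-1)},pp')}$ carrying a $\delta_{s\mid p-p'}$. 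The induction is then one line: the factor $\delta_{s\mid p-p'}$ together with the shift $q\mapsto q+(p'-p)/s$ multiplies the admissible $p'-p$ by exactly one power of $s$ at each layer peeled off, giving $s^{L-\ell}$ for the weights and $s^{L-\ell-1}$ for the bias.

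The difference with your primary approach is where the hard work is located. Your path-expansion argument re-derives, inside this proof, the decoupling that the paper has already packaged into the recursive formula via Propositions~\ref{prop:conv_Sigma_parents}--\ref{prop:convergence_NTK_parents}: once those are granted, the $\delta_{s\mid p-p'}$ is a syntactic feature of the recursion and no further ``survival criterion'' needs justification. Your telescoping of the sharing rule is exactly the combinatorial content hidden in iterating that $\delta$-factor, so the two arguments coincide in substance; the paper's version is simply shorter because it does not reopen the infinite-width limit.
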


This suggests that the supports of $\Theta_{\infty,W^{(\ell)}}^{(L)}$
and $\Theta_{\infty,b^{(\ell)}}^{(L)}$ increase exponentially with
$\ell$, giving more importance to the last layers during training.
This could explain why the checkerboard patterns of lower degree dominate
in Figure \ref{fig:NTK_PCA}. In the classical parametrization, the
balance is restored by letting the number of channels $n_{\ell}$
decrease with depth \cite{DCGAN_radford2015}. In the NTK parametrization,
the limiting NTK is not affected by the ratios $\nicefrac{n_{\ell}}{n_{k}}$.
To achieve the same effect, we divide the learning rate of the weights
and bias of the $\ell$-th layer by $S^{\nicefrac{\ell}{2}}$ and
$S^{\nicefrac{(\ell+1)}{2}}$ respectively, where $S=\prod_{i}s_{i}$
is the product of the strides. Together with the 'parent-based' parametrization
and the normalization of the nonlinearity (in order to lie in the
chaotic regime) this rescaling of the learning rate removes both border
and checkerboard artifacts in Figure \ref{fig:NTK_PCA}.

\begin{comment}

\subsection{Generative Adverserial Networks}

{[}mettre dans introduction convolutional{]} A common problem in the
training of GANs is the collapse of the generator to a constant. This
problem is greatly reduced by avoiding the ``order'' regime in which
the constant mode dominates and by using the new NTK parametrization
with adaptive learning rates. Figure \ref{fig:NTK_PCA} shows the
results obtained with three GANs which differ only in the choice of
nonlinearity and/or the presence of Batch Normalization in the generator.
In all cases, the discriminator is a convolutional network with the
normalized ReLU as nonlinearity. With the ReLU, the generator collapses
and generates a single image with checkerboard patterns. With the
normalized ReLU or with Batch Normalization, the generator is able
to learn a variety of images. This motivates the use of normalization
techniques in GANs to avoid the collapse of the generator.
\end{comment}

\section{Conclusion}

This article shows how the NTK can be used theoretically to understand
the effect of architecture choices (such as decreasing the number
of channels or batch normalization) on the training of DNNs. We have
shown that DNNs in a ``order'' regime, have a strong affinity to
constant modes and checkerboard artifacts: this slows down training
and can contribute to a mode collapse of the DC-NN generator of GANs.
We introduce simple modifications to solve these problems: the effectiveness
of normalizing the nonlinearity, a parent-based parametrization and
a layer-dependent learning rates is shown both theoretically and numerically.

\section*{Broader Impact}

This work is theoretical and has as such no direct social impact.

\bibliographystyle{plain}
\bibliography{main}

\appendix

\section{Choice of Parametrization}

The NTK parametrization introduced in Section 2 differs slightly from
the one usually used, yet it ensures that the training is consistent
as the size of the layers grows. In the standard parametrization,
the activations are defined by 
\begin{align*}
\alpha^{(0)}(x;\theta) & =x\\
\tilde{\alpha}^{(\ell+1)}(x;\theta) & =W^{(\ell)}\alpha^{(\ell)}(x;\theta)+b^{(\ell)}\\
\alpha^{(\ell+1)}(x;\theta) & =\sigma\left(\tilde{\alpha}^{(\ell+1)}(x;\theta)\right).
\end{align*}
Let denote by $g_{\theta}$ the output function of the DNN thus parametrized,
and $f_{\theta}$ that of the DNN with NTK parametrization. Note the
absence of $\nicefrac{1}{\sqrt{n_{\ell}}}$ in comparison to the NTK
parametrization. With LeCun/He initialization \cite{init_lecun2012},
the parameters $W^{(\ell)}$ have standard deviation $\nicefrac{1}{\sqrt{n_{\ell}}}$
(or $\nicefrac{\sqrt{2}}{\sqrt{n_{\ell}}}$ for the ReLU but this
does not change the general analysis). Using this initialization,
the activations stay stochastically bounded as the widths of the DNN
get large. In the forward pass, there is almost no difference between
the two parametrizations and for each choice of parameters $\theta$,
we can scale down the connection weights by $\nicefrac{\sqrt{1-\beta^{2}}}{\sqrt{n_{\ell}}}$
and the bias weights by $\beta$ to obtain a new set of parameters
$\hat{\theta}$ such that 
\[
f_{\theta}=g_{\hat{\theta}}.
\]

The two parametrizations will exhibit a difference during backpropagation
since:
\[
\partial_{W_{ij}^{(\ell)}}g_{\hat{\theta}}(x)=\frac{\sqrt{n_{\ell}}}{\sqrt{1-\beta^{2}}}\partial_{W_{ij}^{(\ell)}}f_{\theta}(x),\qquad\partial_{b_{j}^{(\ell)}}g_{\hat{\theta}}(x)=\frac{1}{\beta}\partial_{b_{j}^{(\ell)}}f_{\theta}(x).
\]
The NTK is a sum of products of these derivatives over all parameters:
\[
\Theta^{(L)}=\Theta^{(L:W^{(0)})}+\Theta^{(L:b^{(0)})}+\Theta^{(L:W^{(1)})}+\Theta^{(L:b^{(1)})}+...+\Theta^{(L:W^{(L-1)})}+\Theta^{(L:b^{(L-1)})}.
\]
With our parametrization, all summands converge to a finite limit,
while with the Le Cun or He parameterization we obtain 
\[
\hat{\Theta}^{(L)}=\frac{n_{0}}{1-\beta^{2}}\Theta^{(L:W^{(0)})}+\frac{1}{\beta^{2}}\Theta^{(L:b^{(0)})}+...+\frac{n_{L-1}}{1-\beta^{2}}\Theta^{(L:W^{(L-1)})}+\frac{1}{\beta^{2}}\Theta^{(L:b^{(L-1)})},
\]
where some summands, namely the $\left(\frac{n_{i}}{1-\beta^{2}}\Theta^{(L:W^{(i)})}\right)_{i},$
explode in the infinite width limit. One must therefore take a learning
rate of order $\nicefrac{1}{\max(n_{1},...n_{L-1})}$ \cite{Karakida2018,Park2018}
to obtain a meaningful training dynamics, but in this case the contributions
to the NTK of the first layers connections $W^{(0)}$ and the bias
of all layers $b^{(\ell)}$ vanish, which implies that training these
parameters has less and less effect on the function as the width of
the network grows. As a result, the dynamics of the output function
during training can still be described by a modified kernel gradient
descent: the modified learning rate compensates for the absence of
normalization in the usual parametrization.

The NTK parametrization is hence more natural for large networks,
as it solves both the problem of having meaningful forward and backward
passes, and to avoid tuning the learning rate, which is the problem
that sparked multiple alternative initialization strategies in deep
learning \cite{Glorot2010}. Note that in the standard parametrization,
the importance of the bias parameters shrinks as the width gets large;
this can be implemented in the NTK parametrization by taking a small
value for the parameter $\beta$.

\begin{figure}
\includegraphics[scale=0.18]{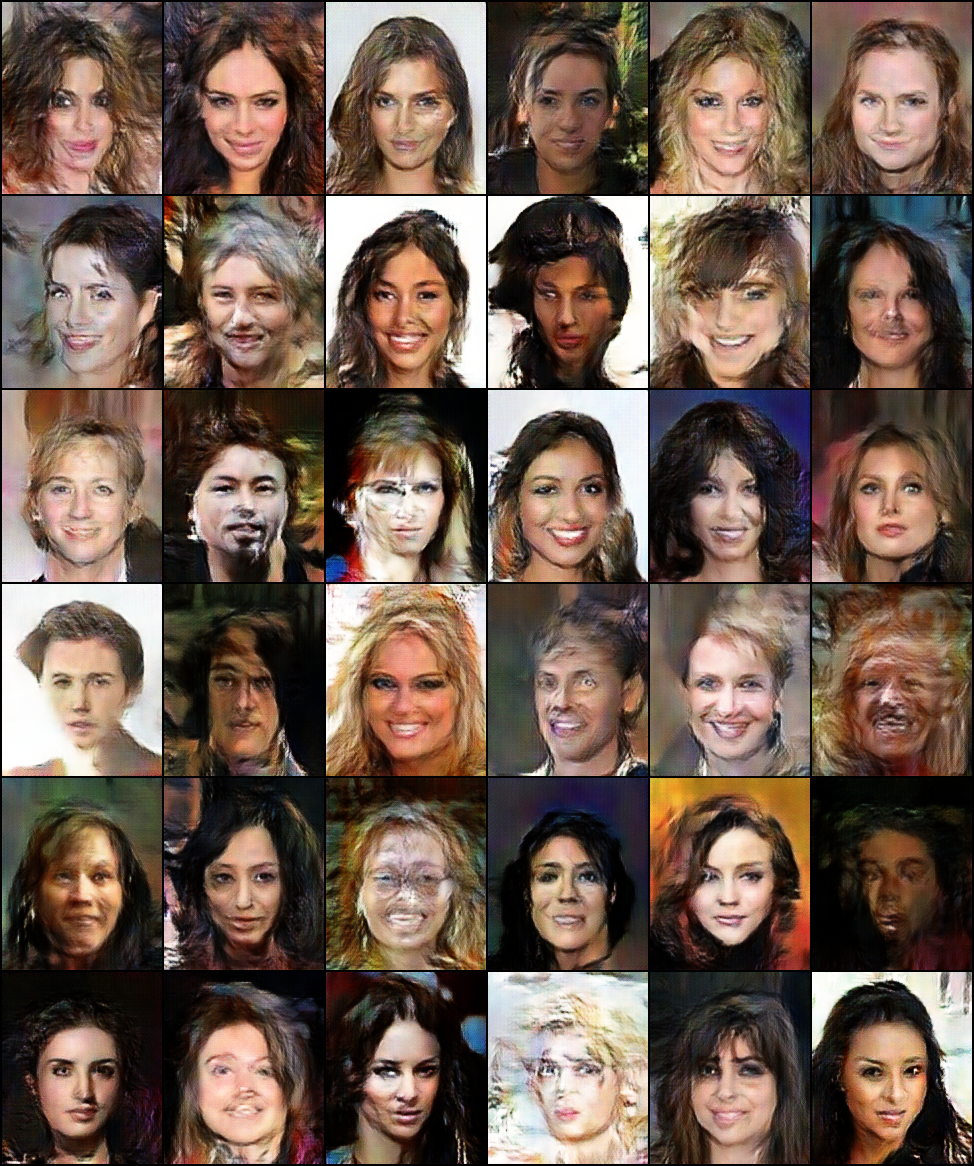} ~~~\includegraphics[scale=0.18]{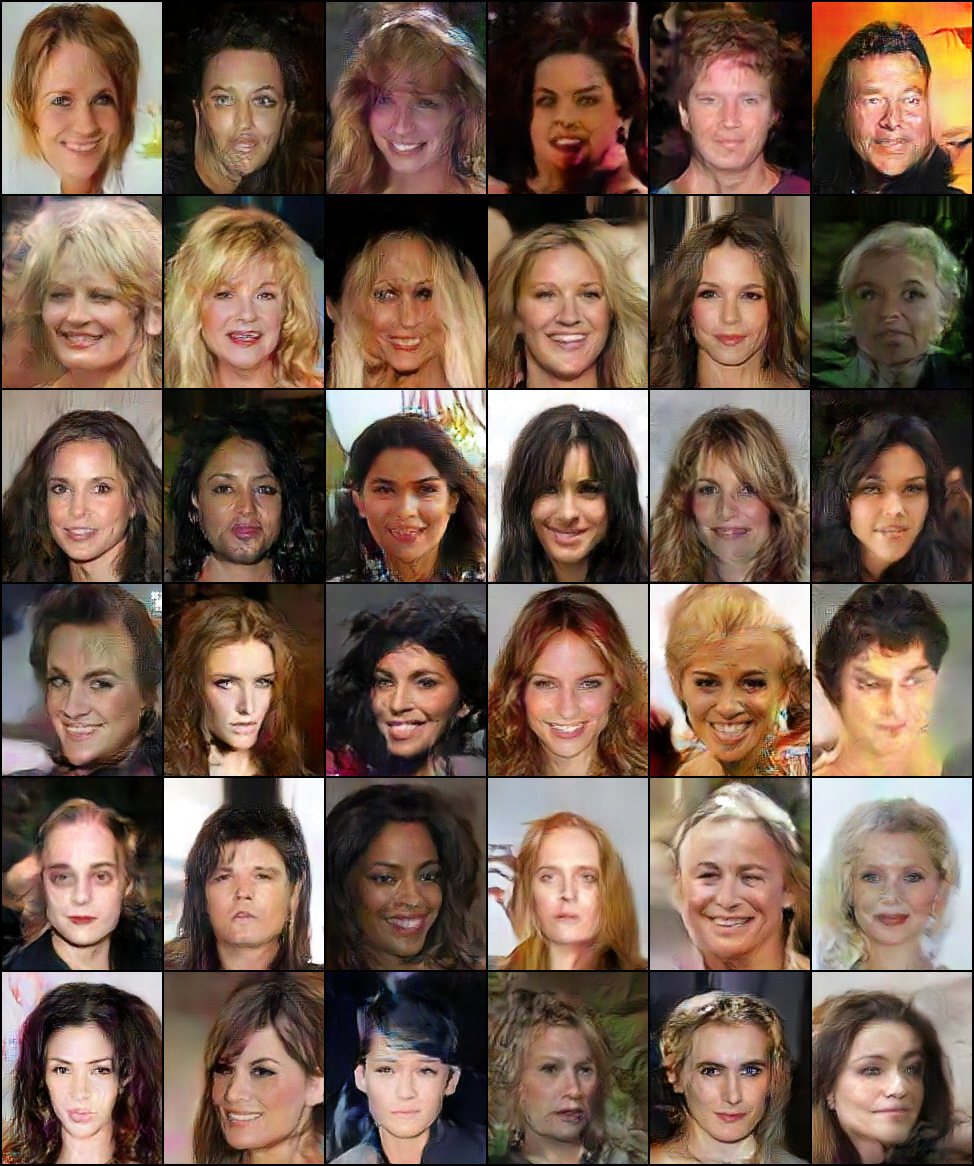}

\caption{Result of two GANs on CelebA. (Left) with Nonlinearity Normalization
and (Right) with Batch Normalization. In both cases the discriminator
uses a Normalized ReLU.}
\end{figure}

\section{FC-NN Order and Chaos}

In this section, we prove the existence of two regimes,`order' and
`chaos', in FC-NNs. First, we improve some results of \cite{Daniely},
and study the rate of convergence of the activation kernels as the
depth grows to infinity. In a second step, this allows us to characterise
the behavior of the NTK for large depth. 

Let us consider a standardized differentiable nonlinearity $\sigma$,
i.e. satisfying $\mathbb{E}_{x\sim\mathcal{N}\left(0,1\right)}\left[\sigma^{2}\left(x\right)\right]=1$.
Recall that the the activation kernels are defined recursively by
$\Sigma^{(1)}(x,y)=\frac{1-\beta^{2}}{n_{0}}x^{T}y+\beta^{2}$ and
$\Sigma^{(\ell+1)}(x,y)=(1-\beta^{2})\mathbb{L}_{\Sigma^{(L)}}^{\sigma}(x,y)+\beta^{2}$,
where $\mathbb{L}_{\Sigma^{(L)}}^{\sigma}$ was introduced in Section
2.2. By induction, for any $x,y\in\mathbb{S}_{n_{0}}$, $\Sigma^{(\ell+1)}(x,y)$
is uniquely determined by $\rho_{x,y}=\frac{1}{n_{0}}x^{T}y$. Defining
the two functions $R_{\sigma},B_{\beta}:[-1,1]\to[-1,1]$ by:
\begin{align*}
R_{\sigma}(\rho) & =\mathbb{E}_{v\sim\mathcal{N}\left(0,\left(\begin{array}{cc}
1 & \rho\\
\rho & 1
\end{array}\right)\right)}\left[\sigma(v_{0})\sigma(v_{1})\right],\\
B_{\beta}(\rho) & =\beta^{2}+(1-\beta^{2})\rho,
\end{align*}
one can formulate the activation kernels as an alternate composition
of $B_{\beta}$ and $R_{\sigma}$: 
\[
\Sigma^{(\ell)}(x,y)=\left(B_{\beta}\circ R_{\sigma}\right)^{\circ\ell-1}\circ B_{\beta}\left(\rho_{x,y}\right).
\]
In particular, this shows that for any $x,y\in\mathbb{S}_{n_{0}}$,
$\Sigma^{(\ell)}(x,y)\leq1$. Since the activation kernels are obtained
by iterating the same function, we first study the fixed points of
the composition $B_{\beta}\circ R_{\sigma}:[-1,1]\to[-1,1]$. When
$\sigma$ is a standardized nonlinearity, the function $R_{\sigma}$,
named the dual of $\sigma$, satisfies the following key properties
proven in \cite{Daniely}: 
\begin{enumerate}
\item $R_{\sigma}(1)=1$, 
\item For any $\rho\in(-1,0)$, $R_{\sigma}(\rho)>\rho$, 
\item $R_{\sigma}$ is convex in $[0,1)$,
\item $R_{\sigma}'(1)=\mathbb{E}\left[\dot{\sigma}(x)^{2}\right]$ , where
$R_{\sigma}'$ denotes the derivative of $R_{\sigma}$, 
\item $R_{\sigma}'=R_{\dot{\sigma}}$ .
\end{enumerate}
By definition $B_{\beta}(1)=1$, thus $1$ is a trivial fixed point:
$B_{\beta}\circ R_{\sigma}(1)=1$. This shows that for any $x\in\mathbb{S}_{n_{0}}$
and any $\ell\geq1$:
\[
\Sigma^{(\ell)}(x,x)=1.
\]

It appears that $-1$ is also a fixed point of $B_{\beta}\circ R_{\sigma}$
if and only if the nonlinearity $\sigma$ is antisymmetric and $\beta=0$.
From now on, we will focus on the region $(-1,1)$. From the property
2. of $R_{\sigma}$ and since $B_{\beta}$ is non decreasing, any
non trivial fixed point must lie in $[0,1)$. Since $B_{\beta}\circ R_{\sigma}(0)>0$,
$B_{\beta}\circ R_{\sigma}(1)=1$ and $R_{\sigma}$ is convex in $[0,1)$,
there exists a non trivial fixed point of $B_{\beta}\circ R_{\sigma}$
if $\left(B_{\beta}\circ R_{\sigma}\right)'(1)>1$ whereas if $\left(B_{\beta}\circ R_{\sigma}\right)'(1)<1$
there is no fixed point in $(-1,1)$. This leads to two regimes shown
in \cite{Daniely}, depending on the value of $r_{\sigma,\beta}=\left(1-\beta^{2}\right)\mathbb{E}_{x\sim\mathcal{N}\left(0,1\right)}\left[\dot{\sigma}^{2}\left(x\right)\right]$:
\begin{enumerate}
\item ``Order'' when $r_{\sigma,\beta}<1$: $B_{\beta}\circ R_{\sigma}$
has a unique fixed point equal to $1$ and the activation kernels
become constant at an exponential rate, 
\item ``Chaos'' when $r_{\sigma,\beta}>1$: $B_{\beta}\circ R_{\sigma}$
has another fixed point $0\leq a<1$ and the activation kernels converge
to a kernel equal to $1$ if $x=y$ and to $a$ if $x\neq y$ and,
if the nonlinearity is antisymmetric and $\beta=0$, it converges
to $-1$ if and only if $x=-y$.
\end{enumerate}
To establish the existence of the two regimes for the NTK, we need
the following bounds on the rate of convergence of $\Sigma^{(\ell)}(x,y)$
in the ``order'' region and on its values in the ``chaos'' region:
\begin{lem}
\label{prop:infinite_depth_Sigma}If $\sigma$ is a standardized differentiable
nonlinearity,

If $r_{\sigma,\beta}<1$, then for any $x,y\in\mathbb{S}_{n_{0}}$,
\[
1\geq\Sigma^{(\ell)}(x,y)\geq1-2r_{\sigma,\beta}^{\ell-1}(1-\beta^{2}).
\]
If $r_{\sigma,\beta}>1$, then there exists a fixed point $a\in[0,1)$
of \textup{$B_{\beta}\circ R_{\sigma}$} such that for any $x,y\in\mathbb{S}_{n_{0}}$,
\begin{align*}
\left|\Sigma^{(\ell)}(x,y)\right| & \leq\max\left\{ \left|\beta^{2}+\frac{1-\beta^{2}}{n_{0}}x^{T}y\right|,a\right\} .
\end{align*}
\end{lem}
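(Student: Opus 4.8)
The plan is to reduce both statements to the one-dimensional recursion driven by the scalar map $g = B_\beta \circ R_\sigma$. Writing $u_\ell = \Sigma^{(\ell)}(x,y)$, the identity $\Sigma^{(\ell)} = (B_\beta\circ R_\sigma)^{\circ(\ell-1)}\circ B_\beta(\rho_{x,y})$ gives $u_1 = B_\beta(\rho_{x,y})\in[-1,1]$ and $u_{\ell+1}=g(u_\ell)$, so everything becomes a question about iterating $g$ on $[-1,1]$. Throughout I would use that $g'(\rho) = (1-\beta^2)R_\sigma'(\rho) = (1-\beta^2)R_{\dot\sigma}(\rho)$, hence $g'(1)=r_{\sigma,\beta}$; that $R_\sigma$ has nonnegative Hermite coefficients, so $R_\sigma$ is non-decreasing and convex on $[0,1]$ and satisfies the symmetry bound $R_\sigma(\rho)\le R_\sigma(|\rho|)$; and that by Cauchy--Schwarz $|R_{\dot\sigma}(\rho)|\le R_{\dot\sigma}(1)=R_\sigma'(1)$ on $[-1,1]$.

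For the \textbf{order} case ($r_{\sigma,\beta}<1$) the heart of the argument is the one-step linear contraction towards the fixed point $1$:
$$ 1 - g(\rho) \;\le\; r_{\sigma,\beta}\,(1-\rho), \qquad \rho\in[-1,1]. $$
Since $1-g(\rho) = (1-\beta^2)\bigl(1-R_\sigma(\rho)\bigr)$, this is equivalent to the tangent-line inequality $R_\sigma(\rho)\ge 1 - R_\sigma'(1)(1-\rho)$, which I would prove on all of $[-1,1]$ (not merely on the convex region $[0,1]$) by setting $\phi(\rho)=R_\sigma(\rho)-1+R_\sigma'(1)(1-\rho)$ and observing $\phi(1)=0$ and $\phi'(\rho)=R_\sigma'(\rho)-R_\sigma'(1)\le 0$ by the Cauchy--Schwarz bound, so $\phi$ is non-increasing and thus $\phi\ge 0$ for $\rho\le 1$. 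Because $g$ maps $[-1,1]$ into itself with $g\le 1$, iterating the contraction gives $1-u_\ell \le r_{\sigma,\beta}^{\ell-1}(1-u_1)$, and $1-u_1=(1-\beta^2)(1-\rho_{x,y})\le 2(1-\beta^2)$ yields $\Sigma^{(\ell)}(x,y)\ge 1-2r_{\sigma,\beta}^{\ell-1}(1-\beta^2)$; the upper bound $\Sigma^{(\ell)}\le 1$ is already established.

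For the \textbf{chaos} case ($r_{\sigma,\beta}>1$) I first invoke the existence of a fixed point $a\in[0,1)$ of $g$, exactly as in the discussion preceding the lemma: $g$ is convex on $[0,1]$ with $g(0)\ge\beta^2\ge 0$, $g(1)=1$ and $g'(1)=r_{\sigma,\beta}>1$, so $g-\mathrm{id}$ is convex with $g(\rho)<\rho$ just below $1$, forcing a zero $a\in[0,1)$, and convexity of $g-\mathrm{id}$ moreover gives $g(\rho)\le\rho$ on $[a,1]$. I then show that $M:=\max\{|u_1|,a\}\le 1$ is invariant, i.e. $g\bigl([-M,M]\bigr)\subseteq[-M,M]$; by induction this gives $|\Sigma^{(\ell)}|\le M$, the claimed bound. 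For the upper bound, monotonicity of $g$ on $[0,1]$ and $R_\sigma(u_\ell)\le R_\sigma(|u_\ell|)\le R_\sigma(M)$ give $g(u_\ell)\le g(M)=\max\{g(|u_1|),g(a)\}$; here $g(a)=a\le M$ and $g(|u_1|)\le M$ (treating $|u_1|\le a$ via $g$ increasing, and $|u_1|>a$ via $g\le\mathrm{id}$ on $[a,1]$), so $g(u_\ell)\le M$. For the lower bound, $R_\sigma(u_\ell)\ge -R_\sigma(|u_\ell|)\ge -R_\sigma(M)$ gives $g(u_\ell)\ge \beta^2-(1-\beta^2)R_\sigma(M)=2\beta^2-g(M)\ge 2\beta^2-M\ge -M$, using $g(M)\le M$ and $\beta^2\ge 0$.

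The main obstacle I expect is the chaos case, specifically the negative side of the invariance $g\bigl([-M,M]\bigr)\subseteq[-M,M]$: the upper bound $g(M)\le M$ is a routine monotonicity/convexity computation, but matching $-M$ from below forces one to combine the symmetry inequality $R_\sigma(\rho)\le R_\sigma(|\rho|)$ with the already-proved self-map bound $g(M)\le M$. What really carries both cases is the structural input on the dual $R_\sigma$ --- monotonicity, convexity and symmetry --- which follows from the nonnegativity of its Hermite coefficients together with Cauchy--Schwarz for $R_{\dot\sigma}$; once these are in hand, the remainder is bookkeeping on the scalar recursion $u_{\ell+1}=g(u_\ell)$.
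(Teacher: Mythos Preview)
Your proposal is correct and follows essentially the same approach as the paper: both reduce to the scalar recursion $u_{\ell+1}=g(u_\ell)$ with $g=B_\beta\circ R_\sigma$, prove the order case via the one-step contraction $1-g(\rho)\le r_{\sigma,\beta}(1-\rho)$ (obtained from $R_\sigma'\le R_\sigma'(1)$), and prove the chaos case by combining convexity/monotonicity of $g$ on $[0,1]$ with the nonnegative-coefficient bound coming from the Hermite expansion of $R_\sigma$. The only cosmetic difference is that the paper packages the negative-$\rho$ part of the chaos case as the single inequality $|g(\rho)|\le g(|\rho|)$ (immediate from $g(\rho)=\sum_i c_i\rho^i$ with $c_i\ge 0$), which simultaneously gives both the upper and lower bounds you derive separately; your $2\beta^2-g(M)\ge -M$ route works but is slightly more roundabout than needed.
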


\begin{proof}
Let us denote $r=r_{\sigma,\beta}$ and suppose first that $r<1$.
By \cite{Daniely}, we know that $R_{\sigma}'=R_{\dot{\sigma}}$ and
$R_{\dot{\sigma}}(\rho)\in\left[-\mathbb{E}\left[\dot{\sigma}(z)^{2}\right],\mathbb{E}\left[\dot{\sigma}(z)^{2}\right]\right]$
where $z\sim\mathcal{N}(0,1)$. From now on, we will omit to specify
the distribution asumption on $z$. The previous equalities and inequalities
imply that $R_{\sigma}(\rho)\geq1-\mathbb{E}\left[\dot{\sigma}(v)^{2}\right](1-\rho)$,
thus we obtain:
\begin{align*}
B_{\beta}\circ R_{\sigma}(\rho) & \geq\beta^{2}+(1-\beta^{2})(1-\mathbb{E}\left[\dot{\sigma}(z)^{2}\right](1-\rho))\\
 & =1-(1-\beta^{2})\mathbb{E}\left[\dot{\sigma}(z)^{2}\right](1-\rho)\\
 & =1-r(1-\rho).
\end{align*}
By definition, we then have $\Sigma^{(\ell)}(x,y)=\left(B_{\beta}\circ R_{\sigma}\right)^{\circ\ell-1}\circ B_{\beta}\left(\frac{1}{n_{0}}x^{T}y\right)\geq1-2(1-\beta^{2})r^{\ell-1}$.
Using the bound $\Sigma^{(\ell)}(x,y)\leq1$, this proves the first
assertion. 

When $r>1$, there exists a fixed point $a$ of $B_{\beta}\circ R_{\sigma}$
in $[0,1).$ By a convexity argument, for any $\rho$ in $[a,1)$,
$a\leq B_{\beta}\circ R_{\sigma}(\rho)\leq\rho$ and because $R_{\sigma}(\rho)$
is increasing in $[0,1)$, for all $\rho\in[0,a]$, $0\leq B_{\beta}\circ R_{\sigma}(\rho)\leq a$. 

For negative $\rho$, we claim that $\left|B_{\beta}\circ R_{\sigma}(\rho)\right|\leq B_{\beta}\circ R_{\sigma}(\left|\rho\right|),$which
entails the second assertion. Since $R_{\sigma}(\rho)=\sum_{i=0}^{\infty}b_{i}\rho^{i}$
for positive $b_{i}$s \cite{Daniely}, and the composition $B_{\beta}\circ R_{\sigma}(\rho)=\sum_{i=0}^{\infty}c_{i}\rho^{i}$
for $c_{0}=b_{0}(1-\beta^{2})+\beta^{2}\geq0$ and $c_{i}=b_{i}(1-\beta^{2})\geq0$
when $i>0$, we have
\[
\left|B_{\beta}\circ R_{\sigma}(\rho)\right|=\left|\sum_{i=0}^{\infty}c_{i}\rho^{i}\right|\leq\sum_{i=0}^{\infty}c_{i}\left|\rho\right|^{i}=B_{\beta}\circ R_{\sigma}(\left|\rho\right|).
\]
This leads to the inequality in the chaos regime. 
\end{proof}
Before studying the normalized NTK, let us remark that the NTK on
the diagonal (with $x=y$ in $\mathbb{S}_{n_{0}}$) is equal to: 
\begin{align*}
\Theta_{\infty}^{(L)}(x,x) & =\sum_{\ell=1}^{L}\Sigma^{(\ell)}(x,x)\prod_{k=\ell+1}^{L}\dot{\Sigma}^{(k)}(x,x)\\
 & =\sum_{\ell=1}^{L}\left((1-\beta^{2})\mathbb{E}\left[\dot{\sigma}(x)^{2}\right]\right)^{L-\ell}\\
 & =\frac{1-r^{L}}{1-r}.
\end{align*}
This shows that in the ``order'' regime, $\Theta_{\infty}^{(L)}(x,x)\overset{L\to\infty}{\longrightarrow}\nicefrac{1}{1-r}$
and in the ``chaos'' regime $\Theta_{\infty}^{(L)}(x,x)$ grows
exponentially. At the transition, $r=1$ and thus $\Theta_{\infty}^{(L)}(x,x)=L$.
Besides, if $x,y\in\mathbb{S}_{n_{0}}$, using the Cauchy-Schwarz
inequality, for any $\ell$ , $\left|\Sigma^{(\ell)}(x,y)\right|\leq\left|\Sigma^{(\ell)}(x,x)\right|$
and $\left|\dot{\Sigma}^{(\ell+1)}(x,y)\right|\leq\left|\dot{\Sigma}^{(\ell+1)}(x,x)\right|$.
This implies the following inequality: $\Theta_{\infty}^{(L)}(x,y)\leq\Theta_{\infty}^{(L)}(x,x)$.

We now study the normalized NTK $\vartheta_{L}\left(x,y\right)=\frac{\Theta_{\infty}^{(L)}(x,y)}{\Theta_{\infty}^{(L)}(x,x)}\leq1$.
\begin{thm}
\label{thm:infinite-depth-fc-nn-1}Suppose that $\sigma$ is twice
differentiable and standardized.

If $r<1$, we are in the ordered regime: there exists $C_{1}$ such
that for $x,y\in\mathbb{S}_{n_{0}}$,
\[
1-C_{1}Lr^{L}\leq\vartheta^{(L)}\left(x,y\right)\leq1.
\]
If $r>1$, we are in the chaotic regime: for $x\neq y$ in $\mathbb{S}_{n_{0}}$,
there exist $s<1$ and $C_{2}$, such that 
\[
\left|\vartheta^{(L)}\left(x,y\right)\right|\leq C_{2}s^{L}.
\]
\end{thm}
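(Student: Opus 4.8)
The plan is to read both regimes off the closed form $\Theta_\infty^{(L)}(x,y)=\sum_{\ell=1}^{L}\Sigma^{(\ell)}(x,y)\prod_{k=\ell+1}^{L}\dot\Sigma^{(k)}(x,y)$ and to compare each off-diagonal summand with its diagonal counterpart, using the identity $\Theta_\infty^{(L)}(x,x)=\frac{1-r^{L}}{1-r}$ computed above. Beyond Lemma \ref{prop:infinite_depth_Sigma}, the only extra input I need is control of the derivative kernel $\dot\Sigma^{(k)}(x,y)=(1-\beta^{2})R_{\dot\sigma}(\Sigma^{(k-1)}(x,y))$: Cauchy--Schwarz gives the uniform bound $|\dot\Sigma^{(k)}(x,y)|\le(1-\beta^{2})\mathbb{E}[\dot\sigma^{2}(Z)]=r$, and twice differentiability gives $R_{\dot\sigma}'=R_{\ddot\sigma}$ with $|R_{\ddot\sigma}|\le\mathbb{E}[\ddot\sigma^{2}(Z)]=:M$, so $R_{\dot\sigma}$ is $M$-Lipschitz.

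For the ordered regime $r<1$, since $\Theta_\infty^{(L)}(x,x)=1+r+\dots+r^{L-1}\ge1$ it suffices to bound the numerator gap $\Theta_\infty^{(L)}(x,x)-\Theta_\infty^{(L)}(x,y)$ by $O(Lr^{L})$. Lemma \ref{prop:infinite_depth_Sigma} gives $1-\Sigma^{(\ell)}(x,y)\le2(1-\beta^{2})r^{\ell-1}$, and feeding this into the Lipschitz bound yields $|r-\dot\Sigma^{(k)}(x,y)|\le2M(1-\beta^{2})^{2}r^{k-2}$. I would then write the $\ell$-th summand difference as $(r^{L-\ell}-P)+(1-\Sigma^{(\ell)}(x,y))P$ with $P=\prod_{k=\ell+1}^{L}\dot\Sigma^{(k)}(x,y)$; the second piece is at most $2(1-\beta^{2})r^{\ell-1}\cdot r^{L-\ell}=O(r^{L})$, while telescoping the product against $r^{L-\ell}$ (each spare factor bounded by $r$) gives $|r^{L-\ell}-P|\le r^{L-\ell-1}\sum_{k=\ell+1}^{L}|r-\dot\Sigma^{(k)}(x,y)|=O(r^{L})$, uniformly in $\ell$. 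Summing the $L$ terms produces the claimed $1-\vartheta^{(L)}\le C_{1}Lr^{L}$; the bound $\vartheta^{(L)}\le1$ is already known.

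For the chaotic regime $r>1$, the point is that off the diagonal the correlations never approach $1$. For $x\ne y$ (excluding $x=-y$ when $\beta=0$ and $\sigma$ is antisymmetric, so that $|B_\beta(\rho_{x,y})|<1$) Lemma \ref{prop:infinite_depth_Sigma} gives a uniform separation $|\Sigma^{(\ell)}(x,y)|\le b$ with $b=\max\{|B_\beta(\rho_{x,y})|,a\}<1$. Since the dual $R_{\dot\sigma}$ has nonnegative Taylor coefficients, $|R_{\dot\sigma}(\rho)|\le R_{\dot\sigma}(b)<R_{\dot\sigma}(1)$ whenever $|\rho|\le b$, so $|\dot\Sigma^{(k)}(x,y)|\le\tilde r$ with $\tilde r:=(1-\beta^{2})R_{\dot\sigma}(b)<r$ strictly. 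Hence $|\Theta_\infty^{(L)}(x,y)|\le b\sum_{\ell=1}^{L}\tilde r^{L-\ell}$, which grows at most like $\max(\tilde r,1)^{L}$, whereas $\Theta_\infty^{(L)}(x,x)=\frac{r^{L}-1}{r-1}$ grows like $r^{L}$; dividing gives $|\vartheta^{(L)}(x,y)|\le C_{2}s^{L}$ with $s=\max(\tilde r,1)/r<1$.

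The main obstacle is the strict gap $\tilde r<r$ in the chaotic case: it is what converts the mere boundedness $|\dot\Sigma^{(k)}(x,y)|\le r$ into genuine exponential decay of $\vartheta^{(L)}$, and it rests on both the nonnegativity of the Taylor coefficients of the dual and the uniform separation $b<1$ (hence on excluding the antipodal degenerate case). A secondary point requiring care is organizing the telescoping in the ordered regime so that the $r^{\ell-1}$ decay of $1-\Sigma^{(\ell)}$ and the $r^{L-\ell}$ size of the tail product combine to a clean $r^{L}$ per summand.
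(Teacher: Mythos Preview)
Your proposal is correct and follows essentially the same route as the paper: in the ordered regime you use Lemma~\ref{prop:infinite_depth_Sigma} together with the Lipschitz bound on $R_{\dot\sigma}$ (via $R_{\dot\sigma}'=R_{\ddot\sigma}$) to control $r-\dot\Sigma^{(k)}$ and telescope the product, exactly as the paper does with its $\epsilon^{(\ell)},\dot\epsilon^{(\ell)}$ notation; in the chaotic regime you use the same uniform separation $|\Sigma^{(\ell)}|\le b<1$ from the lemma and the nonnegative-coefficient property of $R_{\dot\sigma}$ to obtain the strict gap $\tilde r<r$, matching the paper's $v,w$ argument. Your explicit mention of the antipodal exclusion $x=-y$ (when $\beta=0$ and $\sigma$ is antisymmetric) is a point the paper's proof silently glosses over.
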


\begin{proof}
First, let us suppose that $r<1$. Recall that the NTK is defined
as 
\[
\Theta_{\infty}^{(L)}(x,y)=\sum_{\ell=1}^{L}\Sigma^{(\ell)}(x,y)\dot{\Sigma}^{(\ell+1)}(x,y)\ldots\dot{\Sigma}^{(L)}(x,y).
\]

For all $\ell$, $\Sigma^{(\ell)}(x,y)\leq\Sigma^{(\ell)}(x,x)=1$
and $\dot{\Sigma}^{(\ell)}(x,y)\leq\dot{\Sigma}^{(\ell)}(x,x)=r$.
Writing $\Sigma^{(\ell)}(x,y)=1-\epsilon^{(\ell)}$ and $\dot{\Sigma}^{(\ell)}(x,y)=r-\dot{\epsilon}^{(\ell)}$
for $\epsilon^{(\ell)},\dot{\epsilon}^{(\ell)}\geq0$ we have
\begin{align*}
\Theta_{\infty}^{(L)}(x,y) & =\sum_{\ell=1}^{L}\left(1-\epsilon^{(\ell)}\right)\prod_{k=\ell+1}^{L}r-\dot{\epsilon}^{(\ell)}\\
 & \leq\sum_{\ell=1}^{L}r^{L-\ell}-r^{L-\ell}\epsilon^{(\ell)}-\sum_{k=\ell+1}^{L}r^{L-\ell-1}\dot{\epsilon}^{(\ell)}
\end{align*}

Using the bound of Lemma \ref{prop:infinite_depth_Sigma} and the
fact that for any $x,y\in\mathbb{S}_{n_{0}}$, $\dot{\Sigma}^{(\ell)}(x,y)=(1-\beta^{2})R_{\dot{\sigma}}(\Sigma^{(\ell-1)}(x,y))\geq r-\psi\epsilon^{(\ell-1)}$
for $\psi=(1-\beta^{2})\mathbb{E}_{z\sim\mathcal{N}(0,1)}\left[\ddot{\sigma}(z)\right]$,
we obtain $\epsilon^{(\ell)}<2(1-\beta^{2})r^{\ell-1}$ and $\dot{\epsilon}^{(\ell)}\leq2(1-\beta^{2})\psi r^{\ell-2}$.
As a result: 
\begin{align*}
\Theta_{\infty}^{(L)}(x,y) & \geq\sum_{\ell=1}^{L}r^{L-\ell}-2(1-\beta^{2})r^{L-\ell}r^{\ell-1}-\sum_{k=\ell+1}^{L}2(1-\beta^{2})\psi r^{L-\ell-1}r^{k-2}\\
 & =\Theta_{\infty}^{(L)}(x,x)-2(1-\beta^{2})\sum_{\ell=1}^{L}r^{L-1}+\psi\sum_{k=\ell+1}^{L}r^{L-\ell+k-3}\\
 & =\Theta_{\infty}^{(L)}(x,x)-2(1-\beta^{2})\left[Lr^{L-1}+\psi\sum_{\ell=1}^{L}\sum_{k=0}^{L-\ell-1}r^{L+k-2}\right]\\
 & =\Theta_{\infty}^{(L)}(x,x)-2(1-\beta^{2})\left[Lr^{L-1}+\psi r^{L-2}\sum_{\ell=1}^{L}\frac{1-r^{L-\ell}}{1-r}\right]\\
 & \geq\Theta_{\infty}^{(L)}(x,x)-2(1-\beta^{2})\left[r+\psi\frac{1}{1-r}\right]Lr^{L-2}\\
 & \geq\Theta_{\infty}^{(L)}(x,x)-CLr^{L}.
\end{align*}

Now, let us suppose that $r>1$. Recall that $B_{\beta}\circ R_{\sigma}$
has a unique fixed point $a$ on $[0,1).$ For any $x$ and $y$ in
$\mathbb{S}_{n_{0}}$, the kernels $\Sigma^{(\ell)}(x,y)$ are bounded
in norm by $v=max\left\{ \left|\beta^{2}+\frac{1-\beta^{2}}{n_{0}}x^{T}y\right|,a\right\} $
from Lemma \ref{prop:infinite_depth_Sigma}. For the kernels $\dot{\Sigma}^{(\ell)}$
we have $\left|\dot{\Sigma}^{(\ell)}(x,y)\right|=(1-\beta^{2})\left|R_{\dot{\sigma}}(\Sigma^{(\ell-1)}(x,y))\right|\leq(1-\beta^{2})R_{\dot{\sigma}}(\left|\Sigma^{(\ell-1)}(x,y)\right|)\leq(1-\beta^{2})R_{\dot{\sigma}}(v)=:w$
where the first inequality follows from the fact that $R_{\dot{\sigma}}(\rho)=\sum_{i}b_{i}\rho^{i}$
for $b_{i}\geq0$ and the second follows from the monotonicity of
$R_{\dot{\sigma}}$ in $[0,1]$. Applying these two bounds, we obtain:
\[
\left|\Theta_{\infty}^{(L)}(x,y)\right|\leq\sum_{\ell=1}^{L}v\prod_{k=\ell+1}^{L}w=v\frac{1-w^{L}}{1-w}.
\]
Since $\Theta_{\infty}^{(L)}(x,y)=\frac{1-r^{L}}{1-r}$, we have that
$\left|\vartheta_{L}\left(x,y\right)\right|\leq v\frac{1-w^{L}}{1-r^{L}}$.
If $x\neq y$ then $v<1$ and since $\sigma$ is nonlinear, $w=(1-\beta^{2})R_{\dot{\sigma}}(v)<(1-\beta^{2})R_{\dot{\sigma}}(1)=r$.
This implies that $\left|\vartheta_{L}\left(x,y\right)\right|$ converges
to zero at an exponential rate, as $L\to\infty$. 
\end{proof}

\subsection{ReLU FC-NN \label{subsec:ReLU-FC-NN}}

For the standardized ReLU nonlinearity, $\sigma\left(x\right)=\sqrt{2}\max\left(x,0\right)$,
the dual activation is computed in \cite{Daniely}:
\[
R_{\sigma}(\rho)=\frac{\sqrt{1-\rho^{2}}+\left(\pi-\cos^{-1}(\rho)\right)\rho}{\pi},
\]
and the dual activation of its derivative is given by:
\[
R_{\dot{\sigma}}(\rho)=\frac{\pi-\cos^{-1}(\rho)}{\pi}.
\]

The characteristic value $r=r_{\sigma,\beta}$ of the standardized
ReLU is equal to $1-\beta^{2}$: the ReLU nonlinearity therefore lies
in the ``order'' regime as soon as $\beta>0$. More explicitly,
Lemma \ref{prop:infinite_depth_Sigma} still holds of the standardized
ReLU and, using the value of $r$, the following inequalities hold
for any $x,y\in\mathbb{S}_{n_{0}}$: 
\[
1\geq\Sigma^{(\ell)}(x,y)\geq1-2r^{\ell}.
\]
Using these bounds, we can now prove Theorem \ref{thm:infinite-relu}.
\begin{thm}
\label{thm:infinite-relu}With the same notation as in Theorem \ref{thm:infinite-depth-fc-nn-1},
taking $\sigma$ to be the standardized ReLU and $\beta>0$, we are
in the weakly ordered regime: there exists a constant $C$ such that
$1-CLr^{L/2}\leq\vartheta^{(L)}\left(x,y\right)\leq1$. 
\end{thm}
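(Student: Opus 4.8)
The plan is to follow, step by step, the ordered-regime argument in the proof of Theorem~\ref{thm:infinite-depth-fc-nn-1}, writing $\Sigma^{(\ell)}(x,y)=1-\epsilon^{(\ell)}$ and $\dot\Sigma^{(\ell)}(x,y)=r-\dot\epsilon^{(\ell)}$ with $\epsilon^{(\ell)},\dot\epsilon^{(\ell)}\ge 0$, where now $r=r_{\sigma,\beta}=1-\beta^2$. The ReLU version of Lemma~\ref{prop:infinite_depth_Sigma} stated just above gives $\epsilon^{(\ell)}\le 2r^{\ell}$ directly. The only step of the smooth proof that genuinely breaks is the control of $\dot\epsilon^{(\ell)}$: there one writes $\dot\Sigma^{(\ell)}=(1-\beta^2)R_{\dot\sigma}(\Sigma^{(\ell-1)})$ and expands $R_{\dot\sigma}=R_\sigma'$ near $1$ using the second derivative of $\sigma$ (the constant $\psi$), which is unavailable for the ReLU. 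So the whole proof reduces to replacing this single estimate by a weaker, square-root bound, and then rerunning the same bookkeeping.

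For the standardized ReLU I would instead use the explicit formula $R_{\dot\sigma}(\rho)=\frac{\pi-\cos^{-1}\rho}{\pi}$ recalled in this subsection. Since $\Sigma^{(\ell-1)}(x,x)=1$, one gets exactly $\dot\epsilon^{(\ell)}=r\bigl(1-R_{\dot\sigma}(1-\epsilon^{(\ell-1)})\bigr)=\frac{r}{\pi}\cos^{-1}(1-\epsilon^{(\ell-1)})$. The elementary inequality $\cos^{-1}(1-\delta)\le \frac{\pi}{\sqrt2}\sqrt{\delta}$ on $\delta\in[0,2]$, together with $\epsilon^{(\ell-1)}\le 2r^{\ell-1}$, then yields $\dot\epsilon^{(\ell)}\le \frac{r}{\sqrt2}\sqrt{\epsilon^{(\ell-1)}}\le r^{(\ell+1)/2}$. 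This is precisely where the non-differentiability of the ReLU enters: the infinite slope of $R_{\dot\sigma}$ at $1$ forces a $\sqrt{\epsilon^{(\ell-1)}}$ rather than a linear dependence, degrading the per-layer rate from $r^{\ell}$ to $r^{\ell/2}$ and hence the final exponent from $L$ to $L/2$. I regard deriving this one bound as the main (and essentially the only) obstacle.

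With $\epsilon^{(\ell)}\le 2r^{\ell}$ and $\dot\epsilon^{(\ell)}\le r^{(\ell+1)/2}$ in hand, I would expand the NTK as before. Using $\dot\epsilon^{(k)}\le r$ (so $\prod_{k=\ell+1}^{L}(r-\dot\epsilon^{(k)})\ge r^{L-\ell}-r^{L-\ell-1}\sum_{k=\ell+1}^{L}\dot\epsilon^{(k)}$) and $\Sigma^{(\ell)}(x,y)\ge 1-2r^{\ell}$, one checks term by term (the inequality survives even for the finitely many early layers where $\Sigma^{(\ell)}(x,y)<0$, since there $\prod\le r^{L-\ell}$ is multiplied by a negative factor) that
\[
\Theta^{(L)}_\infty(x,y)\ \ge\ \Theta^{(L)}_\infty(x,x)-\sum_{\ell=1}^{L}\epsilon^{(\ell)}r^{L-\ell}-\sum_{\ell=1}^{L}r^{L-\ell-1}\sum_{k=\ell+1}^{L}\dot\epsilon^{(k)}.
\]
The first correction is at most $\sum_\ell 2r^{\ell}r^{L-\ell}=2Lr^{L}$. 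Substituting $\dot\epsilon^{(k)}\le r^{(k+1)/2}$ into the second, the inner geometric sum in $\sqrt r$ is bounded by $\frac{1}{1-\sqrt r}r^{(\ell+2)/2}$, and the outer sum $\sum_\ell r^{L-\ell/2}$ is a geometric series dominated by its last term, of order $r^{L/2}$. Dividing by $\Theta^{(L)}_\infty(x,x)=\frac{1-r^{L}}{1-r}\ge 1$ and using $Lr^{L}\le r^{L/2}$ for large $L$, I obtain $\vartheta^{(L)}(x,y)\ge 1-Cr^{L/2}$, which implies the claimed $1-CLr^{L/2}$; the matching upper bound $\vartheta^{(L)}(x,y)\le 1$ is the Cauchy--Schwarz estimate established just before Theorem~\ref{thm:infinite-depth-fc-nn-1}. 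Everything past the square-root bound is routine summation of geometric series, identical in spirit to the twice-differentiable case.
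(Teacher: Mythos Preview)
Your proposal is correct and follows essentially the same route as the paper: both replace the second-derivative control of $\dot\epsilon^{(\ell)}$ by a square-root bound coming from the explicit $R_{\dot\sigma}(\rho)=\frac{\pi-\cos^{-1}\rho}{\pi}$ (the paper uses $R_{\dot\sigma}(\rho)\ge 1-\sqrt{1-\rho}$, you use the equivalent and slightly sharper $\cos^{-1}(1-\delta)\le\frac{\pi}{\sqrt2}\sqrt{\delta}$), and then rerun the geometric-series bookkeeping of the smooth case to obtain $\vartheta^{(L)}(x,y)\ge 1-Cr^{L/2}$. Your extra remark handling the finitely many early layers where $\Sigma^{(\ell)}(x,y)$ may be negative is a nice clarification that the paper leaves implicit.
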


\begin{proof}
The first inequality $\vartheta_{L}\left(x,y\right)\leq1$ follows
the same proof as in the differentiable case. 

For the lower bound, using the fact that $\left(1-\beta\right)r=1$,
we have $\epsilon^{(\ell)}=1-\Sigma^{(\ell)}(x,y)\leq2r^{\ell}$ and
using the explicit value of $R_{\dot{\sigma}}(\rho)$, we get that
$R_{\dot{\sigma}}(\rho)\geq1-\sqrt{1-\rho}$ which implies that $\dot{\epsilon}^{(\ell)}=r-\dot{\Sigma}^{(\ell)}(x,y)\leq r\sqrt{2}r^{\frac{\ell-1}{2}}$:
\begin{align*}
\Theta_{\infty}^{(L)}(x,y) & =\sum_{\ell=1}^{L}\left(1-\epsilon^{(\ell)}\right)\prod_{k=\ell+1}^{L}r-\dot{\epsilon}^{(k)}\\
 & \geq\sum_{\ell=1}^{L}r^{L-\ell}-2r^{L-\ell}r^{\ell}-\sqrt{2}\sum_{k=\ell+1}^{L}r^{L-\ell-1+\frac{k-1}{2}}\\
 & =\Theta_{\infty}^{(L)}(x,x)-2Lr^{L}-\sqrt{2}\sum_{\ell=1}^{L}r^{L-\frac{\ell}{2}-1}\sum_{k=0}^{L-\ell-1}r^{\frac{k}{2}}\\
 & \geq\Theta_{\infty}^{(L)}(x,x)-2Lr^{L}-\frac{\sqrt{2}}{1-\sqrt{r}}r^{\nicefrac{L}{2}-1}\sum_{\ell=0}^{L-1}r^{\nicefrac{\ell}{2}}\\
 & =\Theta_{\infty}^{(L)}(x,x)-2Lr^{L}-\frac{\sqrt{2}}{1-\sqrt{r}}r^{\nicefrac{L}{2}-1}\frac{1}{1-\sqrt{r}}\\
 & \geq\Theta_{\infty}^{(L)}(x,x)-2Lr^{L}-\frac{\sqrt{2}}{\left(1-\sqrt{r}\right)^{2}}r^{\nicefrac{L}{2}-1}\\
 & \geq\Theta_{\infty}^{(L)}(x,x)-\left[2Lr^{\nicefrac{L}{2}}-\frac{\sqrt{2}}{r\left(1-\sqrt{r}\right)^{2}}\right]r^{\nicefrac{L}{2}}\\
 & \geq\Theta_{\infty}^{(L)}(x,x)-C_{0}(r,\beta)r^{\nicefrac{L}{2}}.
\end{align*}
Recall that for any $x\in\mathbb{S}_{n_{0}}$, $\Theta_{\infty}^{(L)}(x,x)=\frac{1-r^{L}}{1-r}$
is bounded in $L$. Dividing the previous inequality by $\Theta_{\infty}^{(L)}(x,x)$
we get: $1-Cr^{L/2}\leq\vartheta_{L}\left(x,y\right)\leq1.$
\end{proof}

\section{Layer Normalization and Nonlinearity Normalization}

\subsection{Layer normalization asymptotically equivalent to nonlinearity normalization.\protect \\
}

With Layer Normalization (LN), the coordinates of the normalized vectors
of activations are $\check{\alpha}_{j}^{(\ell)}(x)=\sqrt{n_{\ell}}\frac{\alpha_{j}^{(\ell)}(x)-\mu^{(\ell)}(x)}{||\alpha^{(\ell)}(x)-\underline{\mu}^{(\ell)}(x)||}$,
where $\mu^{(\ell)}:=\frac{1}{n_{\ell}}\sum_{i=1}^{n_{\ell}}\alpha_{i}^{(\ell)}(x)$
and $\underline{\mu}^{(\ell)}:=\begin{pmatrix}\mu^{(\ell)}\\
\vdots\\
\mu^{(\ell)}
\end{pmatrix}$. We simplify the notation by keeping the dependence on $x$ implicite
and denote the standardized nonlinearity $\underline{\sigma}(\cdot):=\frac{\sigma(\cdot)-\mathbb{E}(\sigma(Z))}{\sqrt{\mathrm{Var}(\sigma(Z))}}$,
where $Z\stackrel{d}{\sim}\mathcal{N}(0,1)$. 

Suppose that $L=2$, that is we have a single hidden layer after which
the LN is applied. More precisely, the output of the network function
with LN is $\widetilde{\alpha}^{(2)}(\check{\alpha}^{(1)}(x))$. We
rewrite 
\begin{align*}
 & \check{\alpha}^{(1)}=\sqrt{n_{1}}\frac{\sigma(\widetilde{\alpha}^{(1)})-\underline{\mu}^{(1)}}{||\sigma(\widetilde{\alpha}^{(1)})-\underline{\mu}^{(1)}||}=\underline{\sigma}(\widetilde{\alpha}^{(1)})C_{1}+C_{2},\\
\text{where}\qquad & C_{1}=\sqrt{n_{1}}\frac{\mathrm{\sqrt{Var(\sigma(Z))}}}{||\sigma(\widetilde{\alpha}^{(1)})-\underline{\mu}^{(1)}||},\quad\text{and}\quad C_{2}=\sqrt{n_{1}}\frac{\mathbb{E}(\sigma(Z))-\mu^{(1)}}{||\sigma(\widetilde{\alpha}^{(1)})-\underline{\mu}^{(1)}||}.
\end{align*}
Note that $C_{1}\to1$ and $C_{2}\to0$ almost surely, as $n_{1}\to\infty$.
Indeed, since the $\widetilde{\alpha}_{i}^{(1)}$'s are independent
standard Gaussian variables at initialization (recall that we assume
that the inputs belong to $\mathbb{S}_{n_{0}}$), the law of large
numbers entails that $\mu^{(1)}\to\mathbb{E}(\sigma(Z))$ almost surely,
as $n_{1}\to\infty$, and similarly for $\frac{||\sigma(\widetilde{\alpha}^{(1)})-\underline{\mu}^{(1)}||^{2}}{n_{1}}\to\mathrm{Var}(\sigma(Z))$.

To show that LN is asymptotically equivalent to centering and standardizing
the nonlinearity, we now establish that $C_{1}$ and $C_{2}$ are
constant during training. We have 
\begin{align}
\frac{\partial}{\partial\widetilde{\alpha}_{j}^{(1)}}||\sigma(\widetilde{\alpha}^{(1)})-\underline{\mu}^{(1)}||=\frac{\dot{\sigma}(\widetilde{\alpha}_{j}^{(1)})\sum_{i=1}^{n_{1}}(\delta_{ij}-1/n_{1})(\sigma(\widetilde{\alpha}_{i}^{(1)})-\mu^{(1)})}{||\sigma(\widetilde{\alpha}^{(1)})-\underline{\mu}^{(1)}||}=\frac{\dot{\sigma}(\widetilde{\alpha}_{j}^{(1)})(\sigma(\widetilde{\alpha}_{j}^{(1)})-\mu^{(1)})}{||\sigma(\widetilde{\alpha}^{(1)})-\underline{\mu}^{(1)}||}.\label{Equation norm derivative}
\end{align}
Note that the absolute value of the latter is bounded by $2||\dot{\sigma}||_{\infty}$.
We write $g(t)$ for any function $g$ that depends on the parameters
$\theta(t)$ at time $t\geq0$. Using twice the triangle inequality
yields that 
\begin{align}
 & \Big|||\sigma(\widetilde{\alpha}^{(1)}(t))-\underline{\mu}^{(1)}(t)||-||\sigma(\widetilde{\alpha}^{(1)}(0))-\underline{\mu}^{(1)}(0)||\Big|\leq||\sigma(\widetilde{\alpha}^{(1)}(t))-\sigma(\widetilde{\alpha}^{(1)}(0))||+||\underline{\mu}^{(1)}(t)-\underline{\mu}^{(1)}(0)||\nonumber \\
 & \hspace{1.5cm}\leq||\dot{\sigma}||_{\infty}\left(\left(\sum_{i=1}^{n_{1}}(\widetilde{\alpha}_{i}^{(1)}(t)-\widetilde{\alpha}_{i}^{(1)}(0))^{2}\right)^{1/2}+\frac{1}{\sqrt{n_{1}}}\sum_{i=1}^{n_{1}}\left|\widetilde{\alpha}_{i}^{(1)}(t)-\widetilde{\alpha}_{i}^{(1)}(0)\right|\right)\leq ct,\label{Equation bound variation norm}
\end{align}
for some constant $c>0$, where we used that $|\widetilde{\alpha}_{i}^{(1)}(t)-\widetilde{\alpha}_{i}^{(1)}(0)|=\mathcal{O}(t/\sqrt{n_{1}})$,
see Appendix A.2 of \cite{jacot2018neural}. Since $||\sigma(\widetilde{\alpha}^{(1)}(0))-\underline{\mu}^{(1)}(0)||\sim\sqrt{n_{1}}$
by the law of large numbers, we can always write $||\sigma(\widetilde{\alpha}^{(1)}(t))-\underline{\mu}^{(1)}(t)||>||\sigma(\widetilde{\alpha}^{(1)}(0))-\underline{\mu}^{(1)}(0)||-ct>0$.
Hence, using (\ref{Equation norm derivative}) then (\ref{Equation bound variation norm}),
we get 
\begin{align}
\left|\frac{\partial C_{1}(t)}{\partial\widetilde{\alpha}_{j}^{(1)}(t)}\right| & =\frac{\sqrt{n_{1}}\mathrm{Var}(\sigma(Z))}{||\sigma(\widetilde{\alpha}^{(1)}(t))-\underline{\mu}^{(1)}(t)||^{2}}\cdot\left|\frac{\dot{\sigma}(\widetilde{\alpha}_{j}^{(1)}(t))(\sigma(\widetilde{\alpha}_{j}^{(1)}(t))-\mu^{(1)}(t))}{||\sigma(\widetilde{\alpha}^{(1)}(t))-\underline{\mu}^{(1)}(t)||}\right|\nonumber \\
 & \leq\frac{\sqrt{n_{1}}\mathrm{Var}(\sigma(Z))}{(||\sigma(\widetilde{\alpha}^{(1)}(0)-\underline{\mu}^{(1)}(0))||-ct)^{2}}||\dot{\sigma}||_{\infty}=\mathcal{O}(1/\sqrt{n_{1}}),\label{Equation upper bound derivative C_1}
\end{align}
by the law of large numbers. The case of $C_{2}$ is similar: 
\begin{align}
\frac{\partial C_{2}(t)}{\partial\widetilde{\alpha}_{j}^{(1)}(t)} & =\frac{-\dot{\sigma}(\widetilde{\alpha}_{j}^{(1)}(t))}{\sqrt{n_{1}}||\sigma(\widetilde{\alpha}^{(1)}(t))-\underline{\mu}^{(1)}(t)||}-\sqrt{n_{1}}\frac{(\mathbb{E}(\sigma(Z))-\mu^{(1)}(t))\dot{\sigma}(\widetilde{\alpha}_{j}^{(1)}(t))(\sigma(\widetilde{\alpha}_{j}^{(1)}(t))-\mu^{(1)}(t))}{||\sigma(\widetilde{\alpha}^{(1)}(t))-\underline{\mu}^{(1)}(t)||^{3}}\nonumber \\
 & \leq||\dot{\sigma}||_{\infty}\left(\frac{1}{n_{1}}\frac{\sqrt{n_{1}}}{||\sigma(\widetilde{\alpha}^{(1)}(0))-\underline{\mu}^{(1)}(0)||-ct}-\frac{1}{\sqrt{n_{1}}}\frac{n_{1}(\mathbb{E}(\sigma(Z))-\mu^{(1)}(0)+ct)}{(||\sigma(\widetilde{\alpha}^{(1)}(0))-\underline{\mu}^{(1)}(0)||-ct)^{2}}\right)=\mathcal{O}(1/\sqrt{n_{1}}),\label{Equation upper bound derivative C_2}
\end{align}
again by the law of large numbers. For $i=1,2$, we now write $\frac{\partial C_{i}(t)}{\partial t}=\frac{\partial\widetilde{\alpha}_{j}^{(1)}(t)}{\partial t}\frac{\partial C_{i}(t)}{\partial\widetilde{\alpha}_{j}^{(1)}(t)}$
and recall that the first term is changing at rate $\mathcal{O}(1/\sqrt{n_{1}})$.
Therefore, $|C_{i}(t)-C_{i}(0)|\leq\mathcal{O}(t/n_{1})$. The claim
for $L\geq3$ follows by induction.

\subsection{Pre-layer normalization has asymptotically no effect.\protect \\
}

Normalizing the preactivations has asymptotically no effect on the
network at initialization as well as during training. The output of
the $\ell$-th layer becomes $\check{\alpha}_{j}^{(\ell)}=\sigma\big(\sqrt{n_{\ell}}\frac{\widetilde{\alpha}_{j}^{(\ell)}-\mu^{(\ell)}}{||\widetilde{\alpha}^{(\ell)}-\underline{\mu}^{(\ell)}||}\big)$
where $\mu^{(\ell)}$ and $\underline{\mu}^{(\ell)}$ are computed
similarily as before with $\widetilde{\alpha}^{(\ell)}$ in place
of $\alpha^{(\ell)}$. As before, we assume $L=2$ and deduce the
general case by induction. We write $\check{\alpha}_{j}^{(1)}=\sigma(\widetilde{\alpha}_{j}^{(1)}C_{1}+C_{2})$,
with $C_{1}=\sqrt{n_{1}}/||\widetilde{\alpha}^{(\ell)}-\underline{\mu}^{(\ell)}||$
and $C_{2}=-\sqrt{n_{1}}\mu^{(1)}/||\widetilde{\alpha}^{(\ell)}-\underline{\mu}^{(\ell)}||$.
Again, the law of large numbers show that $C_{1}\to1$ and $C_{2}\to0$
almost surely, as $n_{1}\to\infty$. Moreover, similarily as (\ref{Equation norm derivative})
and (\ref{Equation bound variation norm}), we have that 
\begin{align*}
 & \frac{\partial}{\partial\widetilde{\alpha}_{j}^{(1)}}||\widetilde{\alpha}^{(1)}-\underline{\mu}^{(1)}||=\frac{\widetilde{\alpha}_{j}^{(1)}-\mu^{(1)}}{||\widetilde{\alpha}^{(1)}-\underline{\mu}^{(1)}||},\\
 & \Big|||\widetilde{\alpha}^{(1)}(t)-\underline{\mu}^{(1)}(t)||-||\widetilde{\alpha}^{(1)}(0)-\underline{\mu}^{(1)}(0)||\Big|\leq ct,
\end{align*}
for some constant $c>0$. Using the same argument as in (\ref{Equation upper bound derivative C_1})
and (\ref{Equation upper bound derivative C_2}), one can thus show
for $i=1,2$ that 
\begin{align*}
\left|\frac{\partial C_{i}(t)}{\partial\widetilde{\alpha}_{j}^{(1)}}\right|=\mathcal{O}(1/\sqrt{n_{1}}).
\end{align*}
We conclude as previously, noting that 
\begin{align*}
\frac{\partial\check{\alpha}_{j}^{(1)}(t)}{\partial t}=\dot{\sigma}\left(\widetilde{\alpha}_{j}^{(1)}(t)C_{1}(t)+C_{2}(t)\right)\left(\frac{\partial\widetilde{\alpha}_{j}^{(1)}(t)}{\partial t}C_{1}(t)+\widetilde{\alpha}_{j}^{(1)}(t)\frac{\partial C_{1}(t)}{\partial t}+\frac{\partial C_{2}(t)}{\partial t}\right).
\end{align*}

\section{Batch Normalization}

If one adds a BatchNorm layer after the nonlinearity of the last hidden
layer, we have:
\begin{lem}
Consider a FC-NN with $L$ layers, with a PN-BN after the last nonlinearity.
For any $k,k'\in\left\{ 1,\ldots,n_{L}\right\} $ and any parameter
$\theta_{p}$, we have $\sum_{i=1}^{N}\Theta_{\theta_{p}}^{\left(L\right)}\left(\cdot,x_{i}\right)=\beta^{2}\mathrm{Id}_{n_{L}}$. 
\end{lem}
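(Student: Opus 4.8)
The plan is to decompose the NTK as a sum over the individual parameters and to show that, when one of the two arguments is summed over the batch $x_{1},\dots,x_{N}$, every contribution cancels except that of the bias of the last layer. Writing the output of the network (PN-BN applied after the last hidden nonlinearity) as $f_{\theta,k}(x)=\frac{\sqrt{1-\beta^{2}}}{\sqrt{n_{L-1}}}\sum_{j=1}^{n_{L-1}}W^{(L-1)}_{kj}\,\hat{\alpha}^{(L-1)}_{j}(x)+\beta\,b^{(L-1)}_{k}$, where $\hat{\alpha}^{(L-1)}_{j}(x)=(\alpha^{(L-1)}_{j}(x)-m_{j})/\sqrt{v_{j}}$ is the batch-normalized activation with batch mean $m_{j}=\frac1N\sum_{i}\alpha^{(L-1)}_{j}(x_{i})$ and batch variance $v_{j}=\frac1N\sum_{i}(\alpha^{(L-1)}_{j}(x_{i})-m_{j})^{2}$. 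Note that this is an exact finite-width identity, valid for any $\theta$ with $v_{j}>0$.

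The key structural fact I would use is the centering identity of batch normalization: for every channel $j$ and \emph{every} value of the parameters $\theta$, one has $\sum_{i=1}^{N}\hat{\alpha}^{(L-1)}_{j}(x_{i})=0$. Since this holds identically in $\theta$ (wherever $v_{j}>0$, so the map is smooth), differentiating it gives $\sum_{i=1}^{N}\partial_{\theta_{p}}\hat{\alpha}^{(L-1)}_{j}(x_{i})=0$ for every parameter $\theta_{p}$. This is the engine of the argument: it forces the batch sum of any derivative that factors through the normalized activations to vanish, even though each individual summand is complicated because $m_{j},v_{j}$ themselves depend on $\theta_{p}$ through the whole batch.

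Next I would split the parameters into three groups and compute $\sum_{i}\partial_{\theta_{p}}f_{\theta,k'}(x_{i})$ for each. For a last-layer weight $W^{(L-1)}_{k_{0}j_{0}}$ the derivative is proportional to $\delta_{k'k_{0}}\,\hat{\alpha}^{(L-1)}_{j_{0}}(x_{i})$, whose batch sum is zero by centering; for any earlier parameter the derivative is a $W^{(L-1)}$-weighted combination of the $\partial_{\theta_{p}}\hat{\alpha}^{(L-1)}_{j}(x_{i})$, whose batch sum vanishes by the differentiated centering identity; and for a last-layer bias $b^{(L-1)}_{k_{0}}$ one has $\partial_{b^{(L-1)}_{k_{0}}}f_{\theta,k'}(x_{i})=\beta\,\delta_{k'k_{0}}$, which does not pass through BN and whose batch sum is $N\beta\,\delta_{k'k_{0}}$. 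Substituting into $\sum_{i}\Theta^{(L)}_{\theta,kk'}(x,x_{i})=\sum_{p}\partial_{\theta_{p}}f_{\theta,k}(x)\,\sum_{i}\partial_{\theta_{p}}f_{\theta,k'}(x_{i})$, only the bias terms survive and give $\sum_{k_{0}}\beta\delta_{kk_{0}}\cdot N\beta\delta_{k'k_{0}}=N\beta^{2}\delta_{kk'}$, i.e. $\frac1N\sum_{i}\Theta^{(L)}_{\theta}(x,x_{i})=\beta^{2}\mathrm{Id}_{n_{L}}$, which is the claim.

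The main obstacle is the earlier-parameter group, where the dependence on $\theta_{p}$ is genuinely global (through $m_{j}$ and $v_{j}$) so that $\partial_{\theta_{p}}\hat{\alpha}^{(L-1)}_{j}(x_{i})$ has no usable closed form term by term. The point I would stress is that this closed form is never needed: only the batch sum enters, and that sum is the derivative of the exact identity $\sum_{i}\hat{\alpha}^{(L-1)}_{j}(x_{i})=0$, hence zero. A secondary point to handle cleanly is the interchange of differentiation with the finite batch sum (immediate since $N$ is finite and the statistics are smooth where $v_{j}>0$), together with careful bookkeeping of the normalization so that the stated $\beta^{2}\mathrm{Id}_{n_{L}}$ emerges with the intended $\frac1N$ averaging; I would also note that the argument implicitly assumes a bare centering-standardizing BN, since an additional learnable shift per channel would behave like an extra bias and contribute a further term.
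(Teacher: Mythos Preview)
Your proof is correct and uses essentially the same idea as the paper: the centering property of post-nonlinearity BN makes $\sum_i \hat{\alpha}^{(L-1)}_j(x_i)$ identically zero, so differentiating kills every contribution except the last-layer bias. The paper compresses your three-case split into a single line by observing directly that $\frac{1}{N}\sum_i f_{\theta,k}(x_i)=\beta b_k^{(L-1)}$ holds identically in $\theta$ (the $W^{(L-1)}$ term drops out by centering), and then differentiating that one identity; this avoids having to treat last-layer weights and earlier parameters separately, but the content is the same.
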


\begin{proof}
This is an direct consequence of the definition of the NTK and of
the following claim: 
\begin{claim*}
For a fully-connected DNN with a BatchNorm layer after the nonlinearity
of the last hidden layer then $\frac{1}{N}\sum_{i=1}^{N}\partial_{\theta_{p}}f_{\theta,k}(x_{i})$
is equal to $\beta$ if $\theta_{p}$ is $b_{k}^{(L-1)}$, the bias
parameter of the last layer, and equal to $0$ otherwise. 
\end{claim*}
The average of $f_{\theta,k}$ on the training set, $\frac{1}{N}\sum_{i=1}^{N}\partial_{\theta_{p}}f_{\theta,k}(x_{i})$,
only depends on the bias of the last layer:
\[
\frac{1}{N}\sum_{i=1}^{N}f_{\theta,k}(x_{i})=\frac{\sqrt{1-\beta^{2}}}{\sqrt{n_{L-1}}}W^{(L-1)}\frac{1}{N}\sum_{i=1}^{N}\hat{\alpha}^{(L-1)}(x_{i})+\beta b_{k}^{(L-1)}=\beta b_{k}^{(L-1)}.
\]
Thus for any parameter $\theta_{p}$, $\frac{1}{N}\sum_{i=1}^{N}\partial_{\theta_{p}}f_{\theta,k}(x_{i})=\partial_{\theta_{p}}\left(\beta b_{k}^{(L-1)}\right)$
is equal to $\beta$ if the parameter is the bias $b_{k}^{(L-1)}$
and zero otherwise.
\end{proof}

\section{Graph-based Neural Networks\label{sec:General-Convolutional-Networks}}

In this section, we prove the convergence of the NTK at initialization
for a general family of DNNs which contain in particular CNNs and
DC-NNs. We will consider the Graph-based parametrization introduced
in the main. 

For each layer $\ell=0,...,L$, the neurons are indexed by a position
$p\in I_{\ell}$ and a channel $i=1,...,n_{\ell}$. We may assume
that the sets of positions $I_{\ell}$ can be any set, in particular
any subset of $\mathbb{Z}^{D}$. For any position $p\in I_{\ell+1}$,
we consider a set of parents $P(p)\subset I_{\ell}$ and we define
recursively the set $P^{\circ k}(p)\subset I_{\ell+1-k}$ of ancestors
of level $k$ by $P^{\circ k}(p)=\left\{ q\mid\exists q'\in P^{\circ k-1}(p),q\in P(q')\right\} $.
For each parent $q\in P(p)$, the connections from the position $\left(q,\ell\right)$
to the position $\left(p,\ell+1\right)$ are encoded in an $n_{\ell}\times n_{\ell+1}$
weight matrix $W^{(\ell,q\to p)}$. We define $\chi(q\to p,q'\to p')$
which is equal to $1$ if and only if $W^{(\ell,q\to p)}$ and $W^{(\ell,q'\to p')}$
are shared (in the sense that the two matrices are forced to be equal
at initialization and during training) and $0$ otherwise. It satisfies
$\chi(q\to p,q\to p)=1$ for any neuron $p$ and any $q\in P(p)$
and it is transitive. We will also suppose that for any neuron $p$
and any $q,q'\in P(p)$, $\chi(q\to p,q'\to p)=\delta_{qq'}$ (i.e.
no pair of connections connected to the same neuron $p$ are shared). 

In this setting, the activations and preactivations $\alpha^{\left(\ell\right)},\tilde{\alpha}^{\left(\ell\right)}\in\left(\mathbb{R}^{n_{\ell}}\right)^{I_{\ell}}$
are recursively constructed using the parent-based NTK parametrization:
we set $\alpha^{\left(0\right)}\left(x\right)=x$ and for $\ell=0,\ldots,L-1$
and any position $p\in I_{\ell}$:
\[
\tilde{\alpha}^{(\ell+1,p)}(x)=\beta b^{(\ell)}+\frac{\sqrt{1-\beta^{2}}}{\sqrt{\left|P(p)\right|n_{\ell}}}\sum_{q\in P(p)}W^{(\ell,q\to p)}x_{q},\qquad\alpha^{\left(\ell+1\right)}\left(x\right)=\sigma\left(\tilde{\alpha}^{\left(\ell+1\right)}\left(x\right)\right)
\]
where $\sigma$ is applied entry-wise, $\beta\geq0$ and $\left|P(p)\right|$
is the cardinal of $P(p)$. This is a slightly more general formalism
than the DC-NNs and it will allow us to obtain simpler formulae which
generalize well to other architectures. 
\begin{rem}
Notice that the parametrization is slightly different than the traditional
one: we divide by $\sqrt{\left|P(p)\right|n_{\ell}}$ instead of dividing
by $\sqrt{n_{\ell}\nicefrac{\left|\omega\right|}{s_{1}\ldots s_{d}}}$
. This does not lead to any difference when one consider infinite-sized
images as in Section \ref{sec:DC-NN-Freeze-and-Chaos} since in this
case the number of parents is constant, equal to $\nicefrac{\left|\omega\right|}{s_{1}\ldots s_{d}}$.
The key difference between the two parametrizations will be investigated
in Section \ref{sec:Border-Effects-appendix}. 
\end{rem}

Recall, that for a kernel $K:\mathbb{R}^{n_{0}}\times\mathbb{R}^{n_{0}}\to\mathbb{R}$,
and for any $z_{0},z_{1}\in\mathbb{R}^{n_{0}}$, we defined: 
\[
\mathbb{L}_{K}^{g}\left(z_{0},z_{1}\right)=\mathbb{E}_{\left(y_{0},y_{1}\right)\sim\mathcal{N}\left(0,\left(K\left(z_{i},z_{j}\right)\right)_{i,j=0,1}\right)}\left[g\left(y_{0}\right)g\left(y_{1}\right)\right].
\]
 
\begin{prop}
\label{prop:conv_Sigma_parents}In this setting, as $n_{1}\to\infty$,
$\ldots$,$n_{\ell-1}\to\infty$ sequentially, the preactivations
$\left(\tilde{\alpha}_{i}^{(\ell,p)}(x)\right)_{i=1,\ldots,n_{\ell},p\in I_{\ell}}$
of the $\ell^{th}$ layer converge to a centered Gaussian process
with covariance $\Sigma^{(\ell,pp')}(x,y)\delta_{ii'}$, where $\Sigma^{(\ell,pp')}(x,y)$
is defined recursively as
\begin{align*}
\Sigma^{(1,pp')}(x,y) & =\beta^{2}+\frac{1-\beta^{2}}{\sqrt{\left|P(p)\right|\left|P(p')\right|}n_{0}}\sum_{q\in P(p)}\sum_{q'\in P(p')}\chi(q\to p,q'\to p')\left(x_{q}\right)^{T}y_{q'},\\
\Sigma^{(\ell+1,pp')}(x,y) & =\beta^{2}+\frac{1-\beta^{2}}{\sqrt{\left|P(p)\right|\left|P(p')\right|}}\sum_{q\in P(p)}\sum_{q'\in P(p')}\chi(q\to p,q'\to p')\mathbb{L}_{\Sigma^{(\ell,qq')}}^{\sigma}\left(x,y\right).
\end{align*}
\end{prop}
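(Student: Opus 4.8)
The plan is to proceed by induction on the layer index $\ell$, following the standard Gaussian-process argument for the infinite-width limit (as in \cite{Neal1996,Lee2017,jacot2018neural}), sending the widths $n_1,\ldots,n_{\ell-1}\to\infty$ one at a time while carefully tracking the weight-sharing encoded by $\chi$. The inductive hypothesis is that, after the first $\ell-1$ widths have been sent to infinity, the family $(\tilde{\alpha}_i^{(\ell,p)}(x))_{i,p,x}$ converges to a centered Gaussian process that is independent across channels $i$, with covariance $\Sigma^{(\ell,pp')}(x,y)\delta_{ii'}$.

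For the base case $\ell=1$ no limit is needed: since $\alpha^{(0)}(x)=x$ is deterministic, each $\tilde{\alpha}_i^{(1,p)}(x)=\beta b_i^{(0)}+\frac{\sqrt{1-\beta^2}}{\sqrt{|P(p)|n_0}}\sum_{q\in P(p)}\sum_j W_{ij}^{(0,q\to p)}x_{q,j}$ is an exact linear combination of the iid standard Gaussian entries of $W^{(0)}$ and $b^{(0)}$, hence jointly centered Gaussian. I would then read off the covariance directly: the bias contributes $\beta^2\delta_{ii'}$, while $\mathbb{E}[W_{ij}^{(0,q\to p)}W_{i'j'}^{(0,q'\to p')}]=\delta_{ii'}\delta_{jj'}\chi(q\to p,q'\to p')$ (two entries are perfectly correlated exactly when the matrices are shared, independent otherwise). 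This collapses the sum over channels and yields precisely the stated $\Sigma^{(1,pp')}$.

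For the inductive step I would condition on the limiting layer-$\ell$ activations. Given these, $\tilde{\alpha}_i^{(\ell+1,p)}(x)$ is again a linear combination of the fresh weights $W^{(\ell)}$ and bias $b^{(\ell)}$, hence conditionally centered Gaussian; here the hypothesis $\chi(q\to p,q'\to p)=\delta_{qq'}$ guarantees that the weights feeding a single output neuron are mutually independent, so the forward sum is a genuine sum of independent contributions. The conditional covariance carries the factor $\delta_{ii'}$ from independence of distinct weight/bias entries, its bias part equals $\beta^2$, and its weight part equals $\frac{1-\beta^2}{\sqrt{|P(p)||P(p')|}}\sum_{q,q'}\chi(q\to p,q'\to p')\,\frac{1}{n_\ell}\sum_j\sigma(\tilde{\alpha}_j^{(\ell,q)}(x))\sigma(\tilde{\alpha}_j^{(\ell,q')}(y))$. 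By the inductive hypothesis the channels $(\tilde{\alpha}_j^{(\ell,q)})_j$ are asymptotically iid jointly Gaussian, so the law of large numbers sends each empirical average to $\mathbb{L}_{\Sigma^{(\ell,qq')}}^{\sigma}(x,y)$ as $n_\ell\to\infty$. This identifies the limiting conditional covariance with $\Sigma^{(\ell+1,pp')}(x,y)\delta_{ii'}$; since the limit is deterministic, the conditional law becomes an unconditional Gaussian, closing the induction.

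The main obstacle is the careful bookkeeping of the sharing structure: the indicator $\chi$ plays two roles that must not be conflated. Within a single output position it enforces independence via $\chi(q\to p,q'\to p)=\delta_{qq'}$, which is what preserves conditional Gaussianity of the forward pass; across distinct output positions $p\neq p'$ it can equal $1$ and thereby produces the nontrivial cross-position covariances that distinguish GB-NNs from FC-NNs. The remaining technical point is to make the law-of-large-numbers step rigorous in the sequential limit, controlling the fact that the summands $\sigma(\tilde{\alpha}_j^{(\ell,q)})$ are only \emph{asymptotically} iid and that $\sigma$ may be unbounded; I would handle this by invoking the established convergence machinery of \cite{jacot2018neural} (or the simultaneous-limit treatment of \cite{Greg_Yang_2019}) rather than redoing the moment estimates by hand.
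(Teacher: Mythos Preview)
Your proposal is correct and follows essentially the same approach as the paper: induction on $\ell$, with the base case handled by direct computation of the Gaussian covariance via $\mathbb{E}[W_{ij}^{(0,q\to p)}W_{i'j'}^{(0,q'\to p')}]=\delta_{ii'}\delta_{jj'}\chi(q\to p,q'\to p')$, and the inductive step handled by conditioning on the previous layer, computing the random conditional covariance, and applying the law of large numbers as $n_\ell\to\infty$ so that the mixture of Gaussians collapses to a single Gaussian with the deterministic covariance $\Sigma^{(\ell+1,pp')}\delta_{ii'}$. One small remark: conditional Gaussianity of $\tilde{\alpha}^{(\ell+1,p)}$ given the previous layer holds automatically (a linear image of a Gaussian is Gaussian regardless of sharing), so the hypothesis $\chi(q\to p,q'\to p)=\delta_{qq'}$ is not needed for that step---its role is only to simplify the covariance formula on the diagonal $p=p'$.
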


\begin{proof}
The proof is done by induction on $\ell$. For $\ell=1$ and any $i\in\left\{ 1,\ldots,n_{1}\right\} $,
the preactivation
\[
\tilde{\alpha}_{i}^{(1,p)}(x)=\beta b_{i}^{(0)}+\frac{\sqrt{1-\beta^{2}}}{\sqrt{\left|P(p)\right|n_{0}}}\sum_{q\in P(p)}\left(W_{p}^{(0,q\to p)}x_{q}\right)_{i}
\]
is a random affine function of $x$ and its coefficients are centered
Gaussian: it is hence a centered Gaussian process whose covariance
is easily shown to be equal to $\mathbb{E}\left[\tilde{\alpha}_{i}^{(1,p)}(x)\tilde{\alpha}_{i'}^{(1,p')}(y)\right]=\Sigma^{(1,pp')}(x,y)\delta_{ii'}$. 

For the induction step, we assume that the result holds for the pre-activations
of the layer $\ell$. The pre-activations of the next layer are of
the form
\[
\tilde{\alpha}_{i}^{(\ell+1,p)}(x)=\beta b_{i}^{(0)}+\frac{\sqrt{1-\beta^{2}}}{\sqrt{\left|P(p)\right|n_{\ell}}}\sum_{q\in P(p)}\left(W^{(\ell,q\to p)}\alpha^{(\ell,q)}(x)\right)_{i}.
\]
Conditioned on the activations $\alpha^{(\ell,q)}$ of the last layer,
$\tilde{\alpha}^{(\ell+1,p)}$ is a centered Gaussian process: in
other terms, it is a mixture of centered Gaussians with a random covariance
determined by the activations of the last layer. The random covariance
between $\tilde{\alpha}_{i_{0}}^{(\ell+1,p_{0})}(x)$ and $\tilde{\alpha}_{i_{1}}^{(\ell+1,p_{1})}(y)$
is equal to
\begin{align*}
 & \beta^{2}\delta_{i_{0}i_{1}}+\frac{1-\beta^{2}}{\sqrt{\left|P(p)\right|\left|P(p')\right|}n_{\ell}}\sum_{\begin{array}{c}
q_{0}\in P(p_{0})\\
q_{1}\in P(p_{1})
\end{array}}\sum_{j_{0},j_{1}=1}^{n_{\ell}}\mathbb{E}\left[W_{i_{0}j_{0}}^{(\ell,q_{0}\to p_{0})}W_{i_{1}j_{1}}^{(\ell,q_{1}\to p_{1})}\right]\alpha_{j_{0}}^{(\ell,q_{0})}(x)\alpha_{j_{1}}^{(\ell,q_{1})}(y)\\
 & =\delta_{i_{0}i_{1}}\left[\beta^{2}+\frac{1-\beta^{2}}{\sqrt{\left|P(p)\right|\left|P(p')\right|}}\sum_{\begin{array}{c}
q_{0}\in P(p_{0})\\
q_{1}\in P(p_{1})
\end{array}}\chi(q_{0}\to p_{0},q_{1}\to p_{1})\frac{1}{n_{\ell}}\sum_{j=1}^{n_{\ell}}\sigma\left(\tilde{\alpha}_{j}^{(\ell,q_{0})}(x)\right)\sigma\left(\tilde{\alpha}_{j}^{(\ell,q_{1})}(y)\right)\right],
\end{align*}
where we used the fact that $\mathbb{E}\left[W_{i_{0}j_{0}}^{(\ell,q_{0}\to p_{0})}W_{i_{1}j_{1}}^{(\ell,q_{1}\to p_{1})}\right]=\chi(q_{0}\to p_{0},q_{1}\to p_{1})\delta_{i_{0}i_{1}}\delta_{j_{0}j_{1}}$.
Using the induction hypothesis, as $n_{1}\to\infty$, $\ldots$,$n_{\ell-1}\to\infty$
sequentially, the preactivations $\left(\tilde{\alpha}_{j}^{(\ell,q_{0})}(x),\tilde{\alpha}_{j}^{(\ell,q_{1})}(y)\right)_{j}$
converge to independant centered Gaussian pairs. As $n_{\ell}\to\infty$,
by the law of large numbers, the sum over $j$ along with the $\nicefrac{1}{n_{\ell}}$
converges to $\mathbb{L}_{\sigma}^{\Sigma^{(\ell,qq')}}\left(x,y\right)$.
In this limit, the random covariance of the Gaussian mixture becomes
deterministic and as a consequence, the mixture of Gaussian processes
tends to a centered Gaussian process with the right covariance.
\end{proof}
Similarly to the activation kernels, one can prove that the NTK converges
at initialization. 
\begin{prop}
\label{prop:convergence_NTK_parents}As $n_{1}\to\infty$, $\ldots$,$n_{L-1}\to\infty$
sequentially, the NTK $\Theta^{(L,p_{0}p_{1})}$ of a general convolutional
network converges to $\Theta_{\infty,p_{0}p_{1}}^{(L)}\otimes\mathrm{Id}_{n_{L}}$
where $\Theta_{\infty}^{(L,p_{0}p_{1})}(x,y)$ is defined recursively
by:
\begin{align*}
\Theta_{\infty}^{(1,p_{0}p_{1})}\left(x,y\right)= & \Sigma^{(1,p_{0}p_{1})}(x,y),\\
\Theta_{\infty}^{(L,p_{0}p_{1})}\left(x,y\right)= & \frac{1-\beta^{2}}{\sqrt{\left|P(p_{0})\right|\left|P(p_{1})\right|}}\sum_{\begin{array}{c}
q_{0}\in P(p_{0})\\
q_{1}\in P(p_{1})
\end{array}}\chi(q_{0}\to p_{0},q_{1}\to p_{1})\Theta_{\infty}^{(L-1,q_{0}q_{1})}(x,y)\mathbb{L}_{\Sigma^{(L-1,q_{0}q_{1})}}^{\dot{\sigma}}\left(x,y\right)\\
 & +\Sigma^{(L,p_{0}p_{1})}(x,y).
\end{align*}
\end{prop}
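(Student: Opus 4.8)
The plan is to prove the statement by induction on the depth $L$, mirroring the structure of the proof of Proposition \ref{prop:conv_Sigma_parents} together with the standard backpropagation argument for NTK convergence. For the base case $L=1$, the preactivation $\tilde{\alpha}^{(1,p)}$ is an affine function of the first-layer parameters $W^{(0,\cdot\to p)}$ and $b^{(0)}$, so the sum over $\theta_k$ of the products of its derivatives is computed directly: the bias derivatives contribute $\beta^{2}\delta_{i_0i_1}$ and the weight derivatives contribute the bilinear term, reproducing exactly $\delta_{i_0i_1}\Sigma^{(1,p_0p_1)}(x,y)$, where the factor $\delta_{i_0i_1}$ and the shared-weight factor $\chi$ arise from $\mathbb{E}\bigl[W_{i_0j_0}^{(0,q_0\to p_0)}W_{i_1j_1}^{(0,q_1\to p_1)}\bigr]=\chi(q_0\to p_0,q_1\to p_1)\delta_{i_0i_1}\delta_{j_0j_1}$.

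For the inductive step, I would split the defining sum of $\Theta^{(L,p_0p_1)}_{i_0i_1}(x,y)$ over $\theta_k$ into (i) the last-layer parameters $W^{(L-1)},b^{(L-1)}$ and (ii) the parameters of layers $0,\dots,L-2$. In part (i), the derivative of $\tilde{\alpha}^{(L,p_0)}_{i_0}(x)$ with respect to a last-layer weight involves directly the activation $\alpha^{(L-1,q_0)}(x)=\sigma(\tilde{\alpha}^{(L-1,q_0)}(x))$, while the bias derivative equals $\beta\delta_{i_0j}$; contracting two such expressions and letting $n_{L-1}\to\infty$ yields $\delta_{i_0i_1}\Sigma^{(L,p_0p_1)}(x,y)$ by exactly the law-of-large-numbers computation used in Proposition \ref{prop:conv_Sigma_parents}. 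Part (ii) produces the recursion: the chain rule through layer $L-1$ gives, for such $\theta_k$,
\[
\partial_{\theta_k}\tilde{\alpha}_{i_0}^{(L,p_0)}(x)=\frac{\sqrt{1-\beta^{2}}}{\sqrt{|P(p_0)|n_{L-1}}}\sum_{q_0\in P(p_0)}\sum_{j_0=1}^{n_{L-1}}W_{i_0j_0}^{(L-1,q_0\to p_0)}\dot{\sigma}\bigl(\tilde{\alpha}_{j_0}^{(L-1,q_0)}(x)\bigr)\partial_{\theta_k}\tilde{\alpha}_{j_0}^{(L-1,q_0)}(x).
\]
Multiplying the analogous expression for $(y,i_1,p_1)$, summing over $\theta_k$, and recognizing the inner sum $\sum_{\theta_k}\partial_{\theta_k}\tilde{\alpha}_{j_0}^{(L-1,q_0)}(x)\,\partial_{\theta_k}\tilde{\alpha}_{j_1}^{(L-1,q_1)}(y)$ as $\Theta^{(L-1,q_0q_1)}_{j_0j_1}(x,y)$, which by the induction hypothesis converges to $\delta_{j_0j_1}\Theta_{\infty}^{(L-1,q_0q_1)}(x,y)$, leaves an empirical average over the $n_{L-1}$ channels. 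Applying the law of large numbers twice (the weight contractions concentrate to $\delta_{i_0i_1}\chi(q_0\to p_0,q_1\to p_1)$, and the average of $\dot{\sigma}(\tilde{\alpha}_j^{(L-1,q_0)}(x))\dot{\sigma}(\tilde{\alpha}_j^{(L-1,q_1)}(y))$ converges to $\mathbb{L}^{\dot{\sigma}}_{\Sigma^{(L-1,q_0q_1)}}(x,y)$ by Proposition \ref{prop:conv_Sigma_parents}) and summing over $q_0,q_1$ reproduces the first term of the claimed formula.

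The main obstacle is the rigorous control of the sequential infinite-width limit in part (ii): one must justify combining three limiting operations, namely the Gaussian limit of the preactivations $\tilde{\alpha}^{(L-1)}$, the convergence of the inner NTK $\Theta^{(L-1)}$, and the law of large numbers over the $n_{L-1}$ channels introduced by the last layer. The clean route is a conditioning argument: since the entries of $W^{(L-1)}$ are drawn i.i.d.\ independently of every quantity produced by the first $L-1$ layers (the activations $\tilde{\alpha}^{(L-1)}$ and the inner NTK depend only on $W^{(0)},\dots,W^{(L-2)}$ and the earlier biases), I would first condition on the $\sigma$-algebra generated by those layers—where the induction hypothesis and Proposition \ref{prop:conv_Sigma_parents} already provide almost-sure convergence—and only then average over $W^{(L-1)}$. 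A secondary technical point requiring care is the shared-weight structure: because $\chi$ forces several connections to coincide, the differentiation in (i) and the chain rule in (ii) must range over all identified copies, which is exactly what yields the $\chi(q_0\to p_0,q_1\to p_1)$ factors and the $1/\sqrt{|P(p_0)||P(p_1)|}$ normalization in the limiting recursion.
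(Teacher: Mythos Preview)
Your proposal is correct and follows essentially the same inductive argument as the paper: split the NTK into the last-layer contribution (producing $\Sigma^{(L,p_0p_1)}$) and the chain-rule contribution through earlier layers, then invoke the induction hypothesis together with the law of large numbers over the $n_{L-1}$ channels, using independence of $W^{(L-1)}$ from the earlier layers. One small caveat: in the base case $L=1$ there is no width limit and the NTK is a deterministic sum over parameters, so the factors $\chi(q_0\to p_0,q_1\to p_1)$ and $\delta_{i_0i_1}$ arise directly from the parameter-sharing identification (shared connections are literally the same parameter in the sum), not from the covariance formula $\mathbb{E}[WW]=\chi\delta\delta$ you cite---though your conclusion there is correct.
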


\begin{proof}
The proof by induction on $L$ follows the one of \cite{jacot2018neural}
for fully-connected DNNs. We present the induction step and assume
that the result holds for a general convolutional network with $L-1$
hidden layers. Following the same computations as in \cite{jacot2018neural},
the NTK $\Theta_{p_{0}p_{1},jj'}^{(L+1)}(x,y)$ is equal to 

\begin{align*}
 & \frac{1-\beta^{2}}{\sqrt{\left|P(p_{0})\right|\left|P(p_{1})\right|}n_{L}}\sum_{q_{0}\in P(p_{0})}\sum_{q_{1}\in P(p_{1})}\sum_{ii'}\Theta_{ii'}^{(L,q_{0}q_{1})}(x,y)\dot{\sigma}\left(\tilde{\alpha}_{i}^{(L,q_{0})}(x)\right)\dot{\sigma}\left(\tilde{\alpha}_{i'}^{(L,q_{1})}(y)\right)\\
 & \qquad\qquad\qquad\qquad\qquad\qquad\qquad\qquad\qquad\qquad\qquad\qquad\qquad W_{ij}^{(L,q_{0}\to p_{0})}W_{i'j'}^{(L,q_{1}\to p_{1})}\\
 & +\delta_{jj'}\beta^{2}+\delta_{jj'}\frac{1-\beta^{2}}{\sqrt{\left|P(p_{0})\right|\left|P(p_{1})\right|}n_{L}}\sum_{q_{0}\in P(p_{0})}\sum_{q_{1}\in P(p_{1})}\chi(q_{0}\to p_{0},q_{1}\to p_{1})\sum_{i}\alpha_{i}^{(L,q_{0})}(x)\alpha_{i}^{(L,q_{1})}(y)
\end{align*}
which, by assumption, converges as $n_{1}\to\infty$, $\ldots$,$n_{L-1}\to\infty$
to
\begin{align*}
 & \frac{1-\beta^{2}}{\sqrt{\left|P(p_{0})\right|\left|P(p_{1})\right|}n_{L}}\sum_{q_{0}\in P(p_{0})}\sum_{q_{1}\in P(p_{1})}\sum_{i}\Theta_{\infty}^{(L,q_{0}q_{1})}(x,y)\dot{\sigma}\left(\tilde{\alpha}_{i}^{(L,q_{0})}(x)\right)\dot{\sigma}\left(\tilde{\alpha}_{i}^{(L,q_{1})}(y)\right)\\
 & \qquad\qquad\qquad\qquad\qquad\qquad\qquad\qquad\qquad\qquad\qquad\qquad\qquad W_{ij}^{(L,q_{0}\to p_{0})}W_{ij'}^{(L,q_{1}\to p_{1})}\\
 & +\delta_{jj'}\beta^{2}+\delta_{jj'}\frac{1-\beta^{2}}{\sqrt{\left|P(p_{0})\right|\left|P(p_{1})\right|}n_{L}}\sum_{q_{0}\in P(p_{0})}\sum_{q_{1}\in P(p_{1})}\chi(q_{0}\to p_{0},q_{1}\to p_{1})\sum_{i}\alpha_{i}^{(L,q_{0})}(x)\alpha_{i}^{(L,q_{1})}(y).
\end{align*}
 As $n_{L}\to\infty$, using the previous results on the preactivations
and the law of large number, the NTK converges to 
\begin{align*}
 & \frac{1-\beta^{2}}{\sqrt{\left|P(p_{0})\right|\left|P(p_{1})\right|}}\sum_{q_{0}\in P(p_{0})}\sum_{q_{1}\in P(p_{1})}\Theta_{\infty}^{(L,q_{0}q_{1})}(x,y)\mathbb{L}_{\Sigma^{(L,q_{0}q_{1})}}^{\dot{\sigma}}\left(x,y\right)\mathbb{E}\left[W_{ij}^{(L,q_{0}\to p_{0})}W_{ij'}^{(L,q_{1}\to p_{1})}\right]\\
 & \qquad\qquad\qquad\qquad\qquad\qquad\qquad\qquad\qquad\qquad\qquad\qquad\qquad\\
 & +\delta_{jj'}\beta^{2}+\delta_{jj'}\frac{1-\beta^{2}}{\sqrt{\left|P(p_{0})\right|\left|P(p_{1})\right|}}\sum_{q_{0}\in P(p_{0})}\sum_{q_{1}\in P(p_{1})}\chi(q_{0}\to p_{0},q_{1}\to p_{1})\mathbb{L}_{\Sigma^{(L,q_{0}q_{1})}}^{\sigma}\left(x,y\right),
\end{align*}
which can be simplified--using the fact that $\mathbb{E}\left[W_{ij}^{(L,q_{0}\to p_{0})}W_{ij'}^{(L,q_{1}\to p_{1})}\right]=\chi(q_{0}\to p_{0},q_{1}\to p_{1})\delta_{jj'}$--into:

\begin{align*}
 & \delta_{jj'}\frac{1-\beta^{2}}{\sqrt{\left|P(p_{0})\right|\left|P(p_{1})\right|}}\sum_{q_{0}\in P(p_{0})}\sum_{q_{1}\in P(p_{1})}\chi(q_{0}\to p_{0},q_{1}\to p_{1})\Theta_{\infty}^{(L,q_{0}q_{1})}(x,y)\mathbb{L}_{\Sigma^{(L,q_{0}q_{1})}}^{\dot{\sigma}}\left(x,y\right)\\
 & +\delta_{jj'}\Sigma^{(L+1,p_{0}p_{1})}(x,y),
\end{align*}
which proves the assertions. 
\end{proof}

\section{DC-NN Order and Chaos\label{sec:DC-NN-Freeze-and-Chaos}}

In this section, in order to study the behaviour of DC-NNs in the
bulk and to avoid dealing with border effects, studied in Section
\ref{sec:Border-Effects-appendix}, we assume that for all layers
$\ell$ there is no border, i.e. the positions $p$ are in $\mathbb{Z}^{d}$.
Let us consider a DC-NN with up-sampling $s\in\{2,3,...\}^{d}$ where
the window sizes for all layers are all set equal to $\pi=\omega=\{0,...,w_{1}s_{1}-1\}\times...\times\{0,...,w_{d}s_{d}-1\}$.
A position $p$ has therefore $w_{1}\cdots w_{d}$ parents which are
given by 
\[
P(p)=\{\left\lfloor \nicefrac{p_{0}}{s_{0}}\right\rfloor ,\left\lfloor \nicefrac{p_{0}}{s_{0}}\right\rfloor +1,...,\left\lfloor \nicefrac{p_{0}}{s_{0}}\right\rfloor +w_{1}\}\times...\times\{\left\lfloor \nicefrac{p_{d}}{s_{d}}\right\rfloor ,\left\lfloor \nicefrac{p_{d}}{s_{d}}\right\rfloor +1,...,\left\lfloor \nicefrac{p_{d}}{s_{d}}\right\rfloor +w_{d}\}.
\]
Two connections $q\to p$ and $q'\to p'$ are shared if and only if
$s\mid p-p'$ (i.e. for any $i=1,...,d$, $s_{i}\mid p_{i}-p_{i}'$
) and $q_{i}-q_{i}'=\frac{p_{i}-p_{i}'}{s_{i}}$ for any $i=1,...,d$. 

Propositions \ref{prop:conv_Sigma_parents} and \ref{prop:convergence_NTK_parents}
hold true in this setting. By Proposition \ref{prop:Indep-annex},
if the nonlinearity $\sigma$ is standardized, $\Sigma^{(\ell,pp)}(x,x)=1$
for any $x\in\mathbb{S}_{n_{0}}^{I_{0}}$ and any $p\in I_{\ell}$.
The activation kernels $\Sigma^{(\ell,pp')}(x,y)$ for any two inputs
$x,y\in\mathbb{S}_{n_{0}}^{I_{0}}$ and two output positions $p,p'\in\mathbb{Z}^{d}$
are therefore defined recursively by: 
\begin{align*}
\Sigma^{(1,pp')}(x,y) & =\beta^{2}+\delta_{s|p-p'}\frac{1-\beta^{2}}{\left|P(p)\right|n_{0}}\sum_{q\in P(p)}\left(x_{q}\right)^{T}y_{q+\frac{p'-p}{s}},\\
\Sigma^{(\ell+1,pp')}(x,y) & =\beta^{2}+\delta_{s|p-p'}\frac{1-\beta^{2}}{\left|P(p)\right|}\sum_{q\in P(p)}R_{\sigma}\left(\Sigma^{(\ell,q,q+\frac{p'-p}{s})}(x,y)\right),
\end{align*}
where $\frac{p'-p}{s}=\left(\frac{p'_{i}-p_{i}}{s_{i}}\right)_{i}$
is a valid position since $s|p-p'$. Similarly, the NTK at initialization
satisfies the following recursion:

\[
\Theta_{\infty}^{(L+1,pp')}(x,y)=\Sigma^{(L+1,pp')}(x,y)+\delta_{s|p-p'}\frac{1-\beta^{2}}{\left|P(p)\right|}\sum_{q\in P(p)}\Theta_{\infty}^{(L,q,q+\frac{p'-p}{s})}(x,y)R_{\dot{\sigma}}\left(\Sigma^{(L,q,q+\frac{p'-p}{s})}(x,y)\right).
\]

\begin{rem*}
Recall that the $s$-valuation $v_{s}\left(n\right)$ of a number
$n\in\mathbb{Z}^{d}$ is the largest $k\in\left\{ 0,1,2,\ldots\right\} $
such that $s_{i}^{k}\mid n_{i}$ for all dimensions $i=1,...,d$.
For two pixels $p,p'\in\mathbb{Z}^{d}$ and any input vectors $x,y\in\mathbb{S}_{n_{0}}^{I_{0}}$,
if $v_{s}(p'-p)<\ell$ the activation kernel $\Sigma^{(\ell,pp')}(x,y)$
does not depend neither on $x$ nor on $y$. More precisely, if $v=v_{s}(p'-p)=0$,
we have
\[
\Sigma^{(\ell,pp')}(x,y)=\beta^{2},
\]
and for a general $v<\ell$:
\begin{alignat*}{1}
c_{v}:=\Sigma^{(\ell,pp')}(x,y) & =\left(B_{\beta}\circ R_{\sigma}\right)^{\circ v}(\beta^{2}).
\end{alignat*}
In particular, if $v<L$, the NTK is therefore also equal to a constant:
\[
\Theta_{\infty}^{(L,pp')}(x,y)=\sum_{k=0}^{v}c_{k}(1-\beta^{2})^{k}\prod_{m=0}^{k-1}R_{\dot{\sigma}}(c_{m}).
\]
\end{rem*}
We establish the bounds on the rate of convergence in the ``order''
region and on the values of the activations kernel in the chaos region
for DC-NNs. 
\begin{prop}
In the setting introduced above, for a standardized twice differentiable
$\sigma$, for $x,y\in\mathbb{S}_{n_{0}}^{I_{0}}$, and any positions
$p,p'\in I_{\ell}$, taking $k=\min\{v_{s}(p'-p),\ell\},$ we have:

If $r_{\sigma,\beta}<1$ then:
\[
1\geq\Sigma^{(\ell,pp')}(x,y)\geq1-2(1-\beta^{2})r_{\sigma,\beta}^{k}.
\]

If $r_{\sigma,\beta}>1$ then there exists a fixed point $a\in[0,1)$
of \textup{$B_{\beta}\circ R_{\sigma}$} such that: 
\begin{itemize}
\item If $k<\ell$: 
\[
\left|\Sigma^{(\ell,pp')}(x,y)\right|\leq\max\left\{ \beta^{2},a\right\} ,
\]
\item If $p'-p=ms^{\ell}$ and there is a $c\leq1$ such that for all input
positions $q\in P^{\circ\ell}(p)$, $\left|\frac{1}{n_{0}}x_{q}^{T}y_{q+m}\right|\leq c$,
then 
\[
\left|\Sigma^{(\ell,pp')}(x,y)\right|\leq\max\left\{ \beta^{2}+(1-\beta^{2})c,a\right\} .
\]
\end{itemize}
\end{prop}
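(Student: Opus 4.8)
The plan is to reduce the position-indexed recursion for $\Sigma^{(\ell,pp')}$ to the scalar iteration of $B_{\beta}\circ R_{\sigma}$ already analysed for fully-connected networks in Lemma~\ref{prop:infinite_depth_Sigma}, and to read off the rate from the $s$-valuation of $p'-p$. The first observation is that the factor $\delta_{s\mid p-p'}$ sends a pair of difference $p'-p$ at level $\ell$ to pairs $(q,q+\tfrac{p'-p}{s})$ of difference $\tfrac{p'-p}{s}$ at level $\ell-1$, and a term survives only while $s$ divides the current difference; hence $v_{s}$ drops by exactly one per descended layer. Two base cases then arise. If $v_{s}(p'-p)=v<\ell$, then after $v$ descents the difference $\tfrac{p'-p}{s^{v}}$ is no longer divisible by $s$, so $\delta_{s\mid\cdot}=0$ and the kernel equals $\beta^{2}$ (this is the closed form $c_{v}=(B_{\beta}\circ R_{\sigma})^{\circ v}(\beta^{2})$ recorded in the Remark above). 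If instead $p'-p=ms^{\ell}$, the descent reaches the first layer, where $\Sigma^{(1,q,q+ms)}=B_{\beta}(\bar{\rho}_{q})$ with $\bar{\rho}_{q}=\tfrac{1}{|P(q)|n_{0}}\sum_{\tilde{q}\in P(q)}x_{\tilde{q}}^{\mathrm{T}}y_{\tilde{q}+m}$ and $|\bar{\rho}_{q}|\le c$.

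Granting this, I would turn the averaged recursion $\Sigma^{(\ell+1,pp')}=\beta^{2}+(1-\beta^{2})\tfrac{1}{|P(p)|}\sum_{q}R_{\sigma}(\Sigma^{(\ell,q,\cdot)})$ into a scalar one by extremizing over the parents $q$. For the \textbf{order} regime I would track $\epsilon^{(\ell,pp')}:=1-\Sigma^{(\ell,pp')}\ge 0$: the bound $R_{\sigma}(\rho)\ge 1-\mathbb{E}[\dot{\sigma}^{2}](1-\rho)$ (from $R_{\sigma}'=R_{\dot{\sigma}}$, $|R_{\dot{\sigma}}|\le\mathbb{E}[\dot{\sigma}^{2}]$ and $R_{\sigma}(1)=1$) together with $(1-\beta^{2})\mathbb{E}[\dot{\sigma}^{2}]=r_{\sigma,\beta}$ gives $\Sigma^{(\ell+1,pp')}\ge 1-r_{\sigma,\beta}\max_{q}\epsilon^{(\ell,q,\cdot)}$, so the layerwise maximum of $\epsilon$ contracts by $r_{\sigma,\beta}$ at each descended layer. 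Starting from the base value of $\epsilon$ (equal to $1-\beta^{2}$ in the first case, and at most $2(1-\beta^{2})$ at the first layer since $\bar{\rho}_{q}\ge -1$) and contracting once per nonlinear layer traversed yields $\epsilon^{(\ell,pp')}\le 2(1-\beta^{2})r_{\sigma,\beta}^{k}$, where $k$ is the number of such layers, i.e. $k=\min\{v_{s}(p'-p),\ell-1\}$ (matching the cap $L-1$ in Theorem~\ref{thm:freeze_chaos_DC-NN}). The matching upper bound $\Sigma^{(\ell,pp')}\le 1$ is Cauchy--Schwarz against the diagonal values $\Sigma^{(\ell,pp)}(x,x)=\Sigma^{(\ell,p'p')}(y,y)=1$, or a one-line induction using $R_{\sigma}(1)=1$.

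For the \textbf{chaos} regime I would instead propagate $|\Sigma^{(\ell,pp')}|$, using that $R_{\sigma}(\rho)=\sum_{i}b_{i}\rho^{i}$ has $b_{i}\ge 0$, so $|R_{\sigma}(\rho)|\le R_{\sigma}(|\rho|)$; with $R_{\sigma}$ increasing on $[0,1]$ this gives $|\Sigma^{(\ell+1,pp')}|\le B_{\beta}\circ R_{\sigma}(\max_{q}|\Sigma^{(\ell,q,\cdot)}|)$. The relevant facts about $B_{\beta}\circ R_{\sigma}$ here (from the proof of Lemma~\ref{prop:infinite_depth_Sigma}) are that it has a fixed point $a\in[0,1)$, that $a\le B_{\beta}\circ R_{\sigma}(\rho)\le\rho$ on $[a,1)$, and that $0\le B_{\beta}\circ R_{\sigma}(\rho)\le a$ on $[0,a]$ (in particular $\beta^{2}=B_{\beta}\circ R_{\sigma}(0)\le a$). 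I would then run an invariant-interval argument with $V=\max\{\beta^{2},a\}$ in the first case and $V=\max\{\beta^{2}+(1-\beta^{2})c,a\}$ in the second: the base value lies in $[0,V]$, and $[0,V]$ is stable under $B_{\beta}\circ R_{\sigma}$ (for $\rho\in[0,a]$ the image stays $\le a\le V$, and for $\rho\in[a,V]$ it stays $\le\rho\le V$, using $V\le 1$, which holds since $c\le 1$). Hence the iterated maxima stay $\le V$, which is exactly the claimed bound.

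The step I expect to be the main obstacle is the bookkeeping that legitimises replacing the parent-average $\tfrac{1}{|P(p)|}\sum_{q}R_{\sigma}(\Sigma^{(\ell,q,\cdot)})$ by a single scalar iterate: one must verify that extremizing over $q$ is compatible with both the monotone, nonnegative-coefficient map $R_{\sigma}$ and the averaging, and that every parent pair occurring at a given level shares the same difference $\tfrac{p'-p}{s^{j}}$ (hence the same $s$-valuation), so that the descent is genuinely governed by a single scalar orbit. Once this reduction is in place, both regimes follow from the scalar estimates already established in Lemma~\ref{prop:infinite_depth_Sigma}.
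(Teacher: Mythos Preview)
Your approach is essentially the same as the paper's: reduce the parent-averaged recursion to the scalar map $B_\beta\circ R_\sigma$ via the linearization bound $R_\sigma(\rho)\ge 1-\mathbb{E}[\dot\sigma^2](1-\rho)$ in the ordered regime and the nonnegative-coefficient bound $|R_\sigma(\rho)|\le R_\sigma(|\rho|)$ plus monotonicity in the chaotic regime, then induct on $\ell$ (the ``bookkeeping'' you flag as the obstacle is exactly the paper's induction, and extremizing over parents is harmless since $R_\sigma$ is increasing on $[0,1]$). You are also right that the argument---both yours and the paper's base case $\Sigma^{(1,pp')}\ge 1-2(1-\beta^2)$---only delivers the exponent $k=\min\{v_s(p'-p),\ell-1\}$, consistent with the $L-1$ cap used downstream in Theorem~\ref{thm:freeze_chaos_DC-NN}, so the $\ell$ in the proposition's statement is an off-by-one.
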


\begin{proof}
Let us denote $r=r_{\sigma,\beta}$. Let us suppose that $r<1$ and
let us prove the first assertion by induction on $\ell$. If $\ell=1$,
then 
\begin{align*}
\Sigma^{(1,pp')}(x,y) & =\beta^{2}+\delta_{s|p-p'}\frac{1-\beta^{2}}{\left|P(p)\right|n_{0}}\sum_{q\in P(p)}\left(x_{q}\right)^{T}y_{q+\frac{p'-p}{s}}\\
 & \geq\beta^{2}-\delta_{s|p-p'}(1-\beta^{2})\\
 & \geq1-2(1-\beta^{2})
\end{align*}
For the induction step, if we suppose that the inequality holds true
for $\ell$, then 
\begin{align*}
\Sigma^{(\ell+1,pp')}(x,y) & \geq\beta^{2}+\delta_{s|p-p'}\frac{1-\beta^{2}}{\left|P(p)\right|}\sum_{q=0}^{\nicefrac{w}{s}}R_{\sigma}\left(1-2(1-\beta^{2})r^{k-1}\right)\\
 & \geq\beta^{2}+\delta_{s|p-p'}\frac{1-\beta^{2}}{\left|P(p)\right|}\sum_{q=0}^{\nicefrac{w}{s}}1-2(1-\beta^{2})R_{\dot{\sigma}}(1)r^{k-1}\\
 & \geq\beta^{2}+\delta_{s|p-p'}\left(1-\beta^{2}-2(1-\beta^{2})r^{k}\right)\\
 & =\begin{cases}
1-(1-\beta^{2}) & \text{if }k=0\\
1-2(1-\beta^{2})r^{k} & \text{if }k>0
\end{cases}\\
 & \geq1-2(1-\beta{}^{2})r^{k}
\end{align*}
Now let us suppose that $r>1$. If $k<\ell$, then $\left|\Sigma^{(\ell,pp')}(x,y)\right|=\left|\left(B_{\beta}\circ R_{\sigma}\right)^{\circ k}\left(\beta^{2}\right)\right|<\max\left\{ \beta^{2},a\right\} .$
Let us suppose at last that $k=\ell$ and let us prove the last assertion
by induction on $\ell$. If $\ell=1$, then 
\begin{align*}
\left|\Sigma^{(1,pp')}(x,y)\right| & \leq\beta^{2}+\frac{1-\beta^{2}}{\left|P(p)\right|n_{0}}\sum_{q\in P(p)}\left|x_{q}^{T}y_{q+\frac{p'-p}{s}}^{T}\right|\\
 & \leq\beta^{2}+\frac{1-\beta^{2}}{\left|P(p)\right|}\sum_{q\in P(p)}c\\
 & =\beta^{2}+(1-\beta^{2})c.
\end{align*}

For the induction step, if we suppose that the inequality holds true
for $\ell$, then 
\begin{align*}
\left|\Sigma^{(\ell+1,pp')}(x,y)\right| & \leq\beta^{2}+\frac{(1-\beta^{2})}{\left|P(p)\right|}\sum_{q\in P(p)}\left|R_{\sigma}\left(\Sigma^{(\ell,q,q+\frac{p'-p}{s})}(x,y)\right)\right|\\
 & \leq\beta^{2}+\frac{(1-\beta^{2})}{\left|P(p)\right|}\sum_{q\in P(p)}R_{\sigma}\left(\max\{\beta^{2}+(1-\beta^{2})c,a\}\right)\\
 & =B_{\beta}\circ R_{\sigma}\left(\max\{\beta^{2}+(1-\beta^{2})c,a\}\right)\\
 & \leq\max\{\beta^{2}+(1-\beta^{2})c,a\},
\end{align*}
 which allow us to conclude. 
\end{proof}
The NTK features the same two regimes:
\begin{thm}
\label{prop:freeze_chaos_DC-NN}Take $I_{0}=\mathbb{Z}^{d}$, and
consider a DC-NN with upsampling stride $s\in\left\{ 2,3,\ldots\right\} ^{d}$,
windows $\pi_{\ell}=\omega_{\ell}=\left\{ 0,\ldots,w_{1}s_{1}-1\right\} \times...\times\left\{ 0,\ldots,w_{d}s_{d}-1\right\} $
for $w\in\left\{ 1,2,3,\ldots\right\} ^{d}$. For a standardized twice
differentiable $\sigma$, there exist constants $C_{1},C_{2}>0,$
such that the following holds: for $x,y\in\mathbb{S}_{n_{0}}^{I_{0}}$,
and any positions $p,p'\in I_{L}$, we have:

\textbf{Order:} When $r_{\sigma,\beta}<1$, taking $v=\min\left(v_{s}\left(p-p'\right),L-1\right)$,
taking $v=L-1$ if $p=p'$ and $r=r_{\sigma,\beta}$, we have 
\[
\frac{1-r^{v+1}}{1-r^{L}}-C_{1}(v+1)r^{v}\leq\vartheta_{\infty}^{\left(L,p,p'\right)}\left(x,y\right)\leq\frac{1-r^{v+1}}{1-r^{L}}.
\]
\textbf{Chaos:} When $r_{\sigma,\beta}>1$, if either $v_{s}\left(p-p'\right)<L$
or if there exists a $c<1$ such that for all positions $q\in I_{0}$
which are ancestor of $p$, $\left|x_{q}^{T}y_{q+\frac{p'-p}{s^{L}}}\right|<c$,
then there exists $h<1$ such that
\[
\left|\vartheta_{\infty}^{\left(L,p,p'\right)}\left(x,y\right)\right|\leq C_{2}h^{L}.
\]
\end{thm}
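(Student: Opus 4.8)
The plan is to mirror the proof of Theorem~\ref{thm:infinite-depth-fc-nn-1}, feeding in the activation-kernel estimates of the preceding Proposition and exploiting the $s$-valuation bookkeeping. The key structural observation is that each use of the NTK recursion replaces the position difference $p'-p$ by $(p'-p)/s$, so its $s$-valuation drops by one at every level; writing $v=\min(v_s(p'-p),L-1)$, the sharing indicator $\delta_{s\mid\cdot}$ survives for exactly the top $v$ levels and then shuts the recursion off. Unrolling therefore terminates after $v$ steps, giving the truncated FC-NN sum
\[
\Theta_\infty^{(L,pp')}(x,y)=\sum_{j=0}^{v}\Sigma^{(L-j,pp')}(x,y)\prod_{i=0}^{j-1}\dot\Sigma^{(L-i,pp')}(x,y),
\]
in which the deepest ($j=v$) kernel has collapsed to the constant $\beta^2$ (or, when $v_s(p'-p)\ge L-1$, to an input-dependent value still controlled by the Proposition). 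Since on the diagonal the recursion gives $\Theta_\infty^{(L,pp)}(x,x)=\frac{1-r^{L}}{1-r}$ independently of $p$ and $x$, we have $\vartheta_\infty^{(L,pp')}=\Theta_\infty^{(L,pp')}\big/\frac{1-r^{L}}{1-r}$, and the whole problem reduces to estimating the $(v+1)$-term sum above.

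In the order regime the upper bound is immediate: $\Sigma^{(\ell,pp')}\le\Sigma^{(\ell,pp)}=1$ and $\dot\Sigma^{(\ell,pp')}\le\dot\Sigma^{(\ell,pp)}=r$ termwise, so the sum is at most $\sum_{j=0}^{v}r^{j}=\frac{1-r^{v+1}}{1-r}$, which after division by the diagonal yields $\vartheta_\infty^{(L,pp')}\le\frac{1-r^{v+1}}{1-r^{L}}$. For the lower bound I would write $\Sigma^{(\ell,pp')}=1-\epsilon^{(\ell)}$ and $\dot\Sigma^{(\ell,pp')}=r-\eta^{(\ell)}$ with $\epsilon^{(\ell)},\eta^{(\ell)}\ge0$, use the Proposition to get $\epsilon^{(\ell)}\le2(1-\beta^2)r^{k}$ (with $k$ the valuation at that level), and use that twice-differentiability of $\sigma$ makes $R_{\dot\sigma}$ Lipschitz on $[-1,1]$ (since $R_{\dot\sigma}'=R_{\ddot\sigma}$ is bounded by $\mathbb{E}[\ddot\sigma(Z)^2]$), so that $\eta^{(\ell)}\le C\epsilon^{(\ell)}$.

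The step I expect to be the main obstacle is extracting the \emph{sharp} correction $C_1(v+1)r^{v}$ rather than a mere constant: the naive bound factors the product error out globally and loses a multiplicative constant instead of a small factor. The remedy is to estimate termwise. Expanding $\prod_{i=0}^{j-1}(r-\eta^{(L-i)})=r^{j}\prod_i(1-\eta^{(L-i)}/r)$ and bounding the $j$-th summand's deficit by $r^{j}\bigl(\epsilon^{(L-j)}+\tfrac1r\sum_{i<j}\eta^{(L-i)}\bigr)$, the decisive cancellation is that the valuation at level $L-j$ is $v-j$, so $r^{j}\epsilon^{(L-j)}\le2(1-\beta^2)\,r^{j}r^{\,v-j}=2(1-\beta^2)r^{v}$, \emph{uniformly in $j$}; summing the $v+1$ such contributions gives $O((v+1)r^{v})$, and the double sum from the $\eta$ terms is handled identically (the geometric inner sum leaves another factor $(v+1)r^{v}/(1-r)$). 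Hence $\Theta_\infty^{(L,pp')}\ge\frac{1-r^{v+1}}{1-r}-C_1'(v+1)r^{v}$; dividing by $\frac{1-r^{L}}{1-r}$ and using $\frac{1-r}{1-r^{L}}\le1$ gives the claimed lower bound.

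For the chaos regime the two hypotheses are exactly what the preceding Proposition needs to force a uniform bound $|\Sigma^{(\ell,pp')}|\le V<1$ at every level: if $v_s(p'-p)<L$ the valuation reaches $0$ and the off-diagonal kernel collapses to $\beta^2$, while the non-alignment condition bounds the bottom kernel by $\beta^2+(1-\beta^2)c$, and in both cases $B_\beta\circ R_\sigma$ keeps all subsequent values below $\max\{\cdot,a\}<1$. Setting $w:=(1-\beta^2)R_{\dot\sigma}(V)$, nonnegativity of the Taylor coefficients and monotonicity of $R_{\dot\sigma}$ give $w<(1-\beta^2)R_{\dot\sigma}(1)=r$. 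The NTK recursion then yields the scalar inequality $|\Theta_\infty^{(L,pp')}|\le V+w\,|\Theta_\infty^{(L-1,\cdot)}|$, so $|\Theta_\infty^{(L,pp')}|\le C\max(1,w)^{L}$ (up to a polynomial factor in the borderline case $w=1$); dividing by the diagonal $\frac{r^{L}-1}{r-1}$ produces $|\vartheta_\infty^{(L,pp')}|\le C_2 h^{L}$ for any fixed $h\in(\max(1,w)/r,\,1)$, an interval that is nonempty since both $w<r$ and $1<r$.
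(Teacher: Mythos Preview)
Your proposal is correct and follows essentially the same route as the paper: unroll the NTK recursion for $v+1$ steps (until the sharing indicator vanishes), plug in the activation-kernel bounds from the preceding Proposition, and exploit the telescoping $r^{j}\cdot r^{v-j}=r^{v}$ to get the $(v+1)r^{v}$ correction in the order regime, then bound $|\Sigma|\le V<1$, $w=(1-\beta^{2})R_{\dot\sigma}(V)<r$ and compare against the diagonal $\asymp r^{L}$ in the chaos regime. The only cosmetic slip is that your displayed ``unrolled'' equality keeps the fixed positions $p,p'$ at every level, whereas the DC-NN recursion replaces them by parent positions $(q,q+(p'-p)/s)$ and averages over $q$; since the Proposition's bounds depend only on the valuation and the level (which are the same for all such $q$), this does not affect the argument.
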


\begin{proof}
Let us denote $r=r_{\sigma,\beta}$ and let us suppose that $r<1$.
The NTK can be bounded recursively
\begin{align*}
\Theta_{\infty}^{(L,pp')}(x,y) & =\Sigma^{(L,pp')}(x,y)+\delta_{s|p-p'}\frac{1-\beta^{2}}{\left|P(p)\right|}\sum_{q\in P(p)}\Theta_{\infty}^{(L-1;q,q+\frac{p'-p}{s})}(x,y)R_{\dot{\sigma}}\left(\Sigma^{(L-1;q,q+\frac{p'-p}{s})}(x,y)\right)\\
 & \geq1-2(1-\beta^{2})r^{v}+\delta_{s|p-p'}\frac{1}{\left|P(p)\right|}\sum_{q\in P(p)}\Theta_{\infty}^{(L;q,q+\frac{p'-p}{s})}(x,y)\left(r-\psi2(1-\beta^{2})^{2}r^{v-1}\right).
\end{align*}
Unrolling this inequality, we get: 
\begin{align*}
\Theta_{\infty}^{(L,pp')}(x,y) & =\sum_{k=0}^{v}\left(1-2(1-\beta{}^{2})r^{k}\right)\prod_{m=k+1}^{v}\left(r-\psi2(1-\beta^{2})^{2}r^{m-1}\right)\\
 & \geq\sum_{k=0}^{v}r^{v-k}-2(1-\beta{}^{2})r^{v-k}r^{k}-\psi2(1-\beta^{2})^{2}\sum_{m=k+1}^{v}r^{v-k-1}r^{m-1}\\
 & =\frac{1-r^{v+1}}{1-r}-2(1-\beta{}^{2})(v+1)r^{v}-\psi2(1-\beta^{2})^{2}\sum_{k=0}^{v}r^{v-1}\sum_{m=0}^{v-k-1}r^{m}\\
 & \geq\frac{1-r^{v+1}}{1-r}-2(1-\beta{}^{2})\left[r+\frac{\psi(1-\beta^{2})}{1-r}\right](v+1)r^{v-1}\\
 & \geq\frac{1-r^{v+1}}{1-r}-C_{\sigma,\beta}(v+1)r^{v}.
\end{align*}
For the upper bound, we have: $\Theta_{\infty}^{(L,pp')}(x,y)\leq\sum_{\ell=L-k}^{L}1\prod_{m=\ell+1}^{L}r=\frac{1-r^{v+1}}{1-r}.$
Thus, we get the same bounds as in the FC-NNs case, but with respect
to $v$, which is the maximal integer strictly smaller than $L$ such
that $s^{v}|p-p'$:
\[
\frac{1-r^{v+1}}{1-r}\geq\Theta_{\infty}^{(L,pp')}(x,y)\geq\frac{1-r^{v+1}}{1-r}-C(v+1)r^{v}.
\]
Dividing by $\Theta_{\infty}^{(L,pp)}(x,x)$ which is bounded in the
ordered regime (see proof of Proposition \ref{prop:Indep-annex})
as $L\to\infty$, one gets the desired result. 

If $r>1$, there are two cases. When $p'-p=ks^{L}$ then if there
exists $c<1$ such that $\left|x_{q}^{T}y_{q+k}\right|<cn_{0}$ for
all ancestors $q$ of $p$. Writing $z=\max\{\beta^{2}+(1-\beta^{2})c,a\}$
and $w=(1-\beta^{2})R_{\dot{\sigma}}(z)<r$ such that $\left|\Sigma^{(\ell;q,q+ks^{\ell})}(x,y)\right|<z$
for all position $q$ at layer $\ell$ which is an ancestor of $p$.
Then
\[
\left|\Theta_{\infty}^{(L,pp')}(x,y)\right|\leq\sum_{\ell=1}^{L}vw^{L-\ell}=v\frac{1-w^{L}}{1-w}
\]
such that 
\[
\frac{\left|\Theta_{\infty}^{(L,pp')}(x,y)\right|}{\left|\Theta_{\infty}^{(L,pp)}(x,x)\right|}\leq c\frac{1-r}{1-w}\frac{1-w^{L}}{1-r^{L}}\leq C(\sigma,\beta)\left(\frac{w}{r}\right)^{L}
\]
which goes to zero exponentially.

If $p'-p$ is not divisible by $s^{L}$ then for $z=\max\{\beta^{2},a\}$
and $w=(1-\beta^{2})R_{\dot{\sigma}}(z)<r$ 
\[
\left|\Theta_{\infty}^{(L,pp')}(x,y)\right|\leq\sum_{\ell=L-v+1}^{L}zw^{L-\ell}=z\frac{1-w^{v}}{1-w}
\]
which also converges exponentially to 0.
\end{proof}

\subsection{Adapting the learning rate}

Let us suppose that we multiply the learning rate of the $\ell$-th
layer weights and bias by $S^{-\frac{\ell}{2}}$ where $S=\prod_{i}s_{i}$.
This is slightly different than what we propose in the main, where
the learning rate of the bias are multiplied by $S^{-\frac{\ell+1}{2}}$
instead of $S^{-\frac{\ell}{2}}$, but it greatly simplifies the formulas.
Furthermore, the balance between the weights and bias can be modified
with the meta-parameter $\mathbf{\beta}$ to achieve a similar result.
The NTK then takes the value:
\begin{align*}
\Theta^{(L,pp)}(x,x) & =\sum_{\ell=1}^{L}S^{-\frac{\ell}{2}}\prod_{n=\ell+1}^{L}r\\
 & =\sum_{\ell=1}^{L}S^{-\frac{\ell}{2}}r^{L-\ell}\\
 & =\sum_{\ell=0}^{L-1}S^{-\frac{L-\ell}{2}}r^{\ell}\\
 & =S^{-\frac{L}{2}}\frac{1-\left(\sqrt{S}r\right)^{L}}{1-\sqrt{S}r}
\end{align*}
This leads to another transtion inside the ``order'' regime: if
$\sqrt{S}r<1$ the NTK $\Theta_{\infty}^{(L,pp)}(x,x)$ goes to zero
and if $\frac{1}{\sqrt{S}}<r<1$ it converges to a constant. If we
translate the bound of Proposition \ref{prop:freeze_chaos_DC-NN}
to the NTK with varying learning rates, the convergence to a constant
is only guaranteed when $\sqrt{S}r<1$, which suggests that adapting
the learning (or changing the number of channels) does reduce the
checkerboard artifacts (as confirmed by numerical experiments):
\begin{prop}
If $r<1$ the limiting NTK at any two inputs $x,y$ such that for
all $p\in\mathbb{Z}$, $\left\Vert x^{p}\right\Vert =\left\Vert y^{p}\right\Vert =\sqrt{n_{0}}$
and for any two output positions $p$ and $p'$, such that $k$ is
the maximal integer in $\{0,...,L-1\}$ such that $s^{k}$ divides
the difference $p-p'$ then:
\[
\frac{1-(\sqrt{S}r)^{k+1}}{1-(\sqrt{S}r)^{L}}\geq\vartheta_{\infty}^{(L,pp')}(x,y)\geq\frac{1-(\sqrt{S}r)^{k+1}}{1-(\sqrt{S}r)^{L}}-\frac{C_{\sigma,\beta}(\sqrt{S}r)^{k}}{\left|1-(\sqrt{S}r)^{L}\right|}
\]
\end{prop}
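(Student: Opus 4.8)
The plan is to observe that the layer-dependent learning rate merely reweights the per-layer contributions to the NTK, and that this reweighting converts the geometric ratio $r$ of Proposition \ref{prop:freeze_chaos_DC-NN} into $\sqrt{S}r$. Recall that the NTK decomposes as $\Theta^{(L,pp')}=\sum_{\ell}(\Theta^{(L,pp')}_{W^{(\ell)}}+\Theta^{(L,pp')}_{b^{(\ell)}})$ over the contributions of the weights and biases of each layer. Since the learning rate enters the training dynamics linearly through $\partial_{t}\theta_{p}=-\eta_{\ell}\partial_{\theta_{p}}C$, multiplying the learning rate of the $\ell$-th layer by $S^{-\ell/2}$ multiplies the corresponding summand by $S^{-\ell/2}$, so the effective kernel is $\Theta^{(L,pp')}(x,y)=\sum_{\ell=1}^{L}S^{-\ell/2}T^{(L,pp')}_{\ell}(x,y)$, where $T^{(L,pp')}_{\ell}$ is the unweighted contribution of the $\ell$-th layer. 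The diagonal $\Theta^{(L,pp)}(x,x)=S^{-L/2}\frac{1-(\sqrt{S}r)^{L}}{1-\sqrt{S}r}$ has already been computed above and, by Proposition \ref{prop: GB-NNs NTK variance}, is independent of $p$ and $x$, so that $\vartheta_{\infty}^{(L,pp')}=\Theta^{(L,pp')}(x,y)/\Theta^{(L,pp)}(x,x)$.

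First I would re-run the unrolling of the recursion for $\Theta^{(L,pp')}(x,y)$ exactly as in the proof of Proposition \ref{prop:freeze_chaos_DC-NN}. Because $\delta_{s\mid p-p'}$ kills the recursive term as soon as the $s$-valuation drops below $1$, only the top $k+1$ layers contribute: $T^{(L,pp')}_{\ell}=0$ for $\ell<L-k$, while for $\ell=L-j$ with $0\le j\le k$ the same computation that produced $c_{k}\prod_{m}d_{m}$ there yields the two-sided bound $r^{j}-\mathrm{error}_{j}\le T^{(L,pp')}_{L-j}(x,y)\le r^{j}$, with $\mathrm{error}_{j}$ controlled by the $\epsilon^{(\ell)},\dot{\epsilon}^{(\ell)}$ estimates of that proof. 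For the upper bound I would then weight and sum: $\Theta^{(L,pp')}(x,y)\le\sum_{j=0}^{k}S^{-(L-j)/2}r^{j}=S^{-L/2}\frac{1-(\sqrt{S}r)^{k+1}}{1-\sqrt{S}r}$, and dividing by the diagonal gives $\vartheta_{\infty}^{(L,pp')}\le\frac{1-(\sqrt{S}r)^{k+1}}{1-(\sqrt{S}r)^{L}}$.

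For the lower bound I would carry the error terms through the same weighted sum. The weighting turns $\sum_{j}r^{j}$ into $\sum_{j}(\sqrt{S}r)^{j}$ up to the overall $S^{-L/2}$, so the accumulated unnormalized error is bounded by $CS^{-L/2}(\sqrt{S}r)^{k}$ after absorbing the polynomial-in-$k$ prefactors of Proposition \ref{prop:freeze_chaos_DC-NN} into the constant and dominating them by a geometric factor. The one genuine difference with the order regime of Proposition \ref{prop:freeze_chaos_DC-NN} is that here the diagonal $S^{-L/2}\frac{1-(\sqrt{S}r)^{L}}{1-\sqrt{S}r}$ need not stay bounded as $L\to\infty$: even with $r<1$ one may have $\sqrt{S}r>1$, in which case it grows. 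Hence, after dividing the error by the diagonal, the factor $1-(\sqrt{S}r)^{L}$ cannot be absorbed into a constant and must be kept explicitly, yielding the stated correction $\frac{C_{\sigma,\beta}(\sqrt{S}r)^{k}}{|1-(\sqrt{S}r)^{L}|}$.

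The main obstacle I anticipate is precisely this normalization bookkeeping: handling the regimes $\sqrt{S}r<1$ and $\sqrt{S}r>1$ uniformly (the boundary $\sqrt{S}r=1$ being excluded, as the bound degenerates there) so that the single expression $\frac{C_{\sigma,\beta}(\sqrt{S}r)^{k}}{|1-(\sqrt{S}r)^{L}|}$ is valid in both, and verifying that the polynomial prefactors coming from the double sum in the proof of Proposition \ref{prop:freeze_chaos_DC-NN} are genuinely dominated once the $\sqrt{S}$ weights are inserted. The per-layer bounds and the unrolling themselves are routine, being identical to those already established.
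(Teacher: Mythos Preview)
Your proposal is correct and follows essentially the same route as the paper: unroll the recursion for $\Theta_{\infty}^{(L,pp')}$ (equivalently, decompose it into per-layer contributions), insert the $S^{-\ell/2}$ weights, bound each term using the $\Sigma^{(\ell)}$ estimates from the ordered regime, sum the resulting geometric series, and divide by the diagonal $S^{-L/2}\frac{1-(\sqrt{S}r)^{L}}{1-\sqrt{S}r}$. You also correctly anticipate the one new feature, namely that the $(k+1)$ prefactor from Proposition~\ref{prop:freeze_chaos_DC-NN} disappears because the extra $S^{-m/2}$ makes the error sum genuinely geometric, and that the denominator $|1-(\sqrt{S}r)^{L}|$ must be kept explicit since $\sqrt{S}r$ may exceed $1$ even when $r<1$.
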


\begin{proof}
The NTK can be bounded recursively
\begin{align*}
\Theta_{\infty}^{(L,pp')}(x,y) & =S^{-\nicefrac{L-1}{2}}\Sigma^{(L,pp')}(x,y)+\delta_{s|p-p'}\frac{1-\beta^{2}}{\left|P(p)\right|}\sum_{q\in P(p)}\Theta_{\infty}^{(L-1;q,q+\frac{p'-p}{s})}(x,y)R_{\dot{\sigma}}\left(\Sigma^{(L-1;q,q+\frac{p'-p}{s})}(x,y)\right)\\
 & \geq S^{-\nicefrac{L-1}{2}}(1-2(1-\beta^{2})r^{k})+\delta_{s|p-p'}\frac{1}{\left|P(p)\right|}\sum_{q\in P(p)}\Theta_{\infty}^{(L;q,q+\frac{p'-p}{s})}(x,y)\left(r-\psi2(1-\beta^{2})^{2}r^{k-1}\right)
\end{align*}
unrolling, we get
\begin{align*}
 & \Theta_{\infty}^{(L,pp')}(x,y)\\
 & \geq\sum_{m=0}^{k}S^{-\frac{L-k+m}{2}}\left(1-2(1-\beta{}^{2})r^{m}\right)\prod_{n=m+1}^{k}\left(r-\psi2(1-\beta^{2})^{2}r^{n-1}\right)\\
 & \geq\sum_{m=0}^{k}S^{\frac{k-m-L}{2}}r^{k-m}-S^{\frac{k-m-L}{2}}2(1-\beta{}^{2})r^{k-m}r^{m}-S^{\frac{k-m-L}{2}}\psi2(1-\beta^{2})^{2}\sum_{n=m+1}^{k}r^{k-m-1}r^{n-1}\\
 & \geq S^{-\nicefrac{L}{2}}\frac{1-(\sqrt{S}r)^{k+1}}{1-\sqrt{S}r}-2\frac{1-\beta^{2}}{1-S^{-\nicefrac{1}{2}}}S^{\frac{k-L}{2}}r^{k}-\psi2(1-\beta^{2})^{2}r^{k-1}\sum_{m=0}^{k}S^{\frac{k-m-L}{2}}\sum_{n=0}^{k-m-1}r^{n}\\
 & \geq S^{-\nicefrac{L}{2}}\frac{1-(\sqrt{S}r)^{k+1}}{1-\sqrt{S}r}-2\frac{1-\beta^{2}}{1-S^{-\nicefrac{1}{2}}}S^{\frac{k-L}{2}}r^{k}-\psi2(1-\beta^{2})^{2}r^{k-1}S^{\frac{k-L}{2}}\frac{1}{1-S^{-\nicefrac{1}{2}}}\frac{1}{1-r}\\
 & \geq S^{-\nicefrac{L}{2}}\frac{1-(\sqrt{S}r)^{k+1}}{1-\sqrt{S}r}-2\frac{1-\beta^{2}}{1-S^{-\nicefrac{1}{2}}}\left[1+\frac{\psi r(1-\beta^{2})}{1-r}\right]r^{k}S^{\frac{k-L}{2}}\\
 & \geq S^{-\nicefrac{L}{2}}\left(\frac{1-(\sqrt{S}r)^{k+1}}{1-\sqrt{S}r}-C_{\sigma,\beta}\left(\sqrt{S}r\right){}^{k}\right)
\end{align*}

and for the upper bound:
\[
\Theta_{\infty}^{(L,pp')}(x,y)\leq\sum_{m=0}^{k}S^{-\frac{L-k+m}{2}}\prod_{n=m+1}^{k}r=S^{-\nicefrac{L}{2}}\frac{1-(\sqrt{S}r)^{k+1}}{1-\sqrt{S}r}.
\]
Dividing by $\Theta_{\infty}^{(L,pp)}(x,x)$ we obtain
\[
\frac{1-(\sqrt{S}r)^{k+1}}{1-(\sqrt{S}r)^{L}}\geq\vartheta_{\infty}^{(L,pp')}(x,y)\geq\frac{1-(\sqrt{S}r)^{k+1}}{1-(\sqrt{S}r)^{L}}-\frac{C_{\sigma,\beta}(\sqrt{S}r)^{k}}{\left|1-(\sqrt{S}r)^{L}\right|}
\]
\end{proof}

\section{Border Effects\label{sec:Border-Effects-appendix}}

With the usual scaling of $\frac{1}{\sqrt{\nicefrac{\left|\omega\right|}{s_{1}\ldots s_{d}}}}$,
in a General ConvNet, the positions on the border have less parents
and hence a lower activation variance. In this section, we show, in
a special example, how this parametrization leads to border effects
in the limiting activation kernels and NTK. This could be generalized
to a more general setting, yet, our main purpose is to show that with
the parent-based parametrization--as defined in Section \ref{sec:General-Convolutional-Networks}--no
border artifact is present in both kernels in this general setting.

The following proposition illustrates the border artifact present
in the usual NTK-parametrization. Let us consider a DC-NN with a standardized
ReLU nonlinearity, with $I_{0}=I_{1}\ldots=\mathbb{N}$, with up-sampling
stride of $2$, and windows $\pi_{0}=\omega_{0}=\pi_{1}=\omega_{1}=\ldots=\left\{ -3,-2,-1,0\right\} $.
In particular, there is only one border at position $0$. Using the
formalism of Section \ref{sec:General-Convolutional-Networks}, the
set of parents of a position $p$ is $P(p)=\{\left\lfloor \nicefrac{p}{2}\right\rfloor -1,\left\lfloor \nicefrac{p}{2}\right\rfloor \}\cap\mathbb{N}$.
In particular, any generic position in any hidden or last layer has
$2$ parents except for the border $p=0$ for which $P(0)=\{0\}$. 
\begin{prop}
In the setting introduced above, for any $x\in\mathbb{S}_{n_{0}}^{I_{0}}$,
the kernels satisfy: 
\[
\Sigma^{(\ell,00)}(x,x)=\frac{\beta^{2}+\left(\nicefrac{r}{2}\right)^{\ell+1}}{1-\nicefrac{r}{2}}\text{ and }\Theta_{\infty}^{(L,00)}(x,x)=\frac{\beta^{2}(1-\left(\nicefrac{r}{2}\right)^{L})}{\left(1-\nicefrac{r}{2}\right)^{2}}+L\frac{\left(\nicefrac{r}{2}\right)^{L+1}}{1-\nicefrac{r}{2}}.
\]
 In particular $\Sigma^{(\ell,00)}(x,x)$ is smaller than the ``bulk-value''
$\lim_{p\to\infty}\Sigma^{(\ell,pp)}(x,x)=1$ and $\Theta_{\infty}^{(L,00)}(x,x)$
is smaller than the ``bulk-value'' $\lim_{p\to\infty}\Theta_{\infty}^{(L,pp)}(x,x)=\frac{1-r^{L}}{1-r}$. 
\end{prop}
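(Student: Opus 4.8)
The plan is to reduce everything to two scalar affine recursions, exploiting that at the border position $0$ the only parent is $0$ itself, so the entire chain of ancestors of $0$ collapses to the single ``spine'' $\{0\}$ at every layer. Throughout I write $r=1-\beta^{2}$ (the characteristic value of the standardized ReLU) and $\mu=r/2$, and record the point that makes the border appear at all: under the \emph{usual} scaling the normalising constant is $\sqrt{|\omega|/s}=\sqrt{2}$, whereas the parent count is only $|P(0)|=1$. It is precisely this mismatch between the hard-coded $2$ and the actual single parent that halves the incoming variance at the border.

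First I would set $a_{\ell}:=\Sigma^{(\ell,00)}(x,x)$ and extract its recursion from Proposition~\ref{prop:conv_Sigma_parents}, replacing $|P(p)|$ by the constant $2$. Since $P(0)=\{0\}$ and $\chi(0\to0,0\to0)=1$, exactly one summand survives, and the relevant $2\times2$ kernel is degenerate with all entries equal to $a_{\ell}$ (both coordinates perfectly correlated). The one genuine subtlety of the argument lives here: because $a_{\ell}\neq1$ at the border, one must \emph{not} write $R_{\sigma}(a_{\ell})$, which presupposes unit marginal variance. Instead I invoke the positive $1$-homogeneity of the standardized ReLU together with $\mathbb{E}[\sigma(Z)^{2}]=1$ for $Z\sim\mathcal N(0,1)$, giving $\mathbb{L}_{\Sigma^{(\ell,00)}}^{\sigma}(x,x)=\mathbb{E}[\sigma(Y)^{2}]=a_{\ell}$ for $Y\sim\mathcal N(0,a_{\ell})$. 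This keeps the recursion affine:
\[
a_{\ell+1}=\beta^{2}+\mu\,a_{\ell},\qquad a_{1}=\beta^{2}+\tfrac{1-\beta^{2}}{2n_{0}}\,x_{0}^{T}x_{0}=\beta^{2}+\mu,
\]
using $x_{0}^{T}x_{0}=n_{0}$. Solving this affine map (fixed point $\beta^{2}/(1-\mu)$, contraction rate $\mu$) and simplifying via $1-\beta^{2}=r$ yields $a_{\ell}=\bigl(\beta^{2}+\mu^{\ell+1}\bigr)/(1-\mu)$, the first claimed formula.

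Next I would treat $\Theta_{\ell}:=\Theta_{\infty}^{(\ell,00)}(x,x)$ in the same spirit, starting from the NTK recursion of Proposition~\ref{prop:convergence_NTK_parents} with $|P(p)|\to2$. The $0$-homogeneity of $\dot\sigma$ makes $\mathbb{L}^{\dot\sigma}$ depend only on the correlation, which is $1$ on the diagonal, so $\mathbb{L}_{\Sigma^{(\ell,00)}}^{\dot\sigma}(x,x)=R_{\dot\sigma}(1)=1$; again a single parent survives and the recursion reads
\[
\Theta_{\ell}=\mu\,\Theta_{\ell-1}+a_{\ell},\qquad \Theta_{1}=a_{1}.
\]
Unrolling gives $\Theta_{L}=\sum_{k=1}^{L}\mu^{L-k}a_{k}$; substituting the closed form for $a_{k}$ splits the sum into a purely geometric part, contributing $\beta^{2}(1-\mu^{L})/(1-\mu)^{2}$ through $\sum_{k}\mu^{L-k}=\frac{1-\mu^{L}}{1-\mu}$, and a constant part $\sum_{k=1}^{L}\mu^{L+1}=L\mu^{L+1}$, contributing $L\mu^{L+1}/(1-\mu)$. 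Rewriting $\mu=r/2$ gives exactly the stated expression for $\Theta_{\infty}^{(L,00)}(x,x)$.

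Finally, the comparison with the bulk is immediate by running the \emph{same} recursions at a generic two-parent position: the prefactor $\mu=r/2$ is multiplied by the two identical surviving parents, turning the variance recursion into $a_{\ell+1}=\beta^{2}+r\,a_{\ell}$, whose fixed point is $1$ (this recovers the standardization $\Sigma^{(\ell,pp)}(x,x)=1$), and the NTK recursion into $\Theta_{\ell}=r\,\Theta_{\ell-1}+1$, with solution $(1-r^{L})/(1-r)$. Since $\mu<r<1$ and $\beta^{2}<1-\mu$, both border quantities sit strictly below these bulk limits, which yields the ``in particular'' claim. The only step demanding care is the variance recursion: one must resist feeding $a_{\ell}$ into $R_{\sigma}$ and instead use ReLU homogeneity to evaluate $\mathbb{L}^{\sigma}$ at non-unit variance; everything afterwards is bookkeeping on geometric and arithmetic--geometric sums.
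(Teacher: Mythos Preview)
Your proof is correct and follows essentially the same route as the paper: both reduce the border values to the affine recursion $a_{\ell+1}=\beta^{2}+\tfrac{r}{2}a_{\ell}$ via the $1$-homogeneity of the standardized ReLU (so that $\mathbb{L}^{\sigma}$ at variance $a_{\ell}$ equals $a_{\ell}$), solve it as a geometric series, and then use that $\dot\sigma$ is a rescaled Heaviside to obtain $\mathbb{L}^{\dot\sigma}_{\Sigma^{(\ell,00)}}(x,x)=1$, which makes the NTK recursion $\Theta_{\ell}=\tfrac{r}{2}\Theta_{\ell-1}+a_{\ell}$ unrollable in the same way. The only visible difference is in the ``in particular'' clause: the paper defers the border/bulk inequality to an unspecified ``tedious study of variation of functions'', whereas you argue it directly by comparing the two affine recursions (rate $\mu=r/2$ versus $r$) and checking $\beta^{2}<1-\mu$; your route here is shorter and more transparent, though you should state explicitly that $a_{\ell}$ is monotone decreasing from $a_{1}=\beta^{2}+\mu<1$ to make the $\Sigma$ inequality hold for every $\ell$, not just in the limit.
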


\begin{proof}
Recall that for the standardized ReLU, $r_{\sigma,\beta}=1-\beta^{2}$.
From now on, we denote $r=r_{\sigma,\beta}$ and $x$ is an element
of $\mathbb{S}_{n_{0}}^{I_{0}}$. For any $\ell=0,1\ldots$, we have:
\[
\Sigma^{(\ell+1,00)}(x,x)=\beta^{2}+\frac{1-\beta^{2}}{2}\sum_{q\in P(0)}\mathbb{E}_{z\sim\mathcal{N}(0,\Sigma_{qq}^{(\ell)}(x,x))}\left[\sigma(x)^{2}\right]=\beta^{2}+\frac{1-\beta^{2}}{2}\Sigma^{(\ell,00)}(x,x).
\]
Since $x\in\mathbb{S}_{n_{0}}^{I_{0}}$, we get $\Sigma^{(1)}(x,x)=\beta^{2}+\frac{r}{2}$:
this implies the following equalities:
\begin{align*}
\Sigma^{(\ell,00)}(x,x) & =\left(\nicefrac{r}{2}\right)^{\ell}+\sum_{k=0}^{\ell-1}\beta^{2}\left(\nicefrac{r}{2}\right)^{k}\\
 & =\left(\nicefrac{r}{2}\right)^{\ell}+\beta^{2}\frac{1-\left(\nicefrac{r}{2}\right)^{\ell}}{1-\nicefrac{r}{2}}\\
 & =\frac{\beta^{2}}{1-\nicefrac{r}{2}}+\frac{\left(\nicefrac{r}{2}\right)^{\ell}-\left(\nicefrac{r}{2}\right)^{\ell+1}-\beta^{2}\left(\nicefrac{r}{2}\right)^{\ell}}{1-\nicefrac{r}{2}}\\
 & =\frac{\beta^{2}+\left(\nicefrac{r}{2}\right)^{\ell+1}}{1-\nicefrac{r}{2}}.
\end{align*}
For the limiting NTK, with the usual NTK parametrization, the following
recursion holds:
\[
\Theta_{\infty}^{(L+1,00)}(x,x)=\Sigma^{(L+1,00)}(x,x)+\frac{r}{2}\Theta_{\infty}^{(L,00)}(x,x)\mathbb{L^{\dot{\sigma}}}_{\Sigma^{(L,00)}}(x,x).
\]
Note that for the standardized ReLU, $\dot{\sigma}$ is a rescaled
Heaviside, thus $\mathbb{L^{\dot{\sigma}}}_{\Sigma^{(L,00)}}(x,x)=\mathbb{E}_{x\sim\mathcal{N}(0,\Sigma^{(L,00)}(x,x))}\left[\dot{\sigma}(x)^{2}\right]=2\mathbb{E}_{x\sim\mathcal{N}(0,1)}[\mathbb{I}_{x\geq0}]=1$.
This implies: 
\begin{align*}
\Theta^{(L,00)}(x,x) & =\sum_{\ell=1}^{L}\Sigma^{(\ell,00)}(x,x)\left(\nicefrac{r}{2}\right)^{L-\ell}\\
 & =\sum_{\ell=1}^{L}\left(\frac{\beta^{2}}{1-\nicefrac{r}{2}}+\frac{\left(\nicefrac{r}{2}\right)^{\ell+1}}{1-\nicefrac{r}{2}}\right)\left(\nicefrac{r}{2}\right)^{L-\ell}\\
 & =\frac{\beta^{2}}{1-\nicefrac{r}{2}}\sum_{\ell=1}^{L}\left(\nicefrac{r}{2}\right)^{L-\ell}+L\frac{\left(\nicefrac{r}{2}\right)^{L+1}}{1-\nicefrac{r}{2}}\\
 & =\frac{\beta^{2}(1-\left(\nicefrac{r}{2}\right)^{L})}{\left(1-\nicefrac{r}{2}\right)^{2}}+L\frac{\left(\nicefrac{r}{2}\right)^{L+1}}{1-\nicefrac{r}{2}}.
\end{align*}

The ``bulk-values'' for the activation kernels and the limiting
NTK kernel can be deduced from the proof of Proposition \ref{prop:Indep-annex}.
A tedious study of variation of functions allows to prove the assertion
on the boundary/bulk comparison. 
\end{proof}
As a consequence of the previous proposition, in the limits as $\ell$
and $L$ goes to infinity, the ratio boundary/bulk value is bounded
by $\max\left(1,c\beta^{2}\right)$: the smaller $\beta$ is, the
stronger the boundary effect will be. 

In the parent-based parametrization, the variance of the neurons throughout
the network is always equal to $1$ and the NTK $\Theta_{\infty,pp}^{(L)}(x,x)$
becomes independent of the position $p$: the border artifacts disappear. 
\begin{prop}
\label{prop:Indep-annex}For the parent-based parametrization of DC-NNs,
if the nonlinearity is standardized, $\left(\Sigma^{\left(L\right)}\right)_{pp}\left(x\right)$
and $\left(\Theta_{\infty}^{\left(L\right)}\right)_{pp}\left(x\right)$
do not depend neither on $p\in I_{L}$ nor on $x\in\mathbb{S}_{n_{0}}^{I_{0}}$.
\end{prop}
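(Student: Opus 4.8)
The plan is to prove both statements by induction on the layer index, first pinning down the diagonal activation kernel and then feeding that into a second induction for the NTK. The mechanism that makes everything work is that the parent-based factor $\frac{1}{\sqrt{\left|P(p)\right|\left|P(p')\right|}}$ of Propositions \ref{prop:conv_Sigma_parents} and \ref{prop:convergence_NTK_parents} becomes exactly $\frac{1}{\left|P(p)\right|}$ on the diagonal and so precisely cancels the number of surviving parents, no matter how many parents the position $p$ happens to have; this uniform cancellation is exactly what the classical scaling of Section \ref{sec:Border-Effects-appendix} fails to achieve.

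First I would show that $\Sigma^{(\ell,pp)}(x,x)=1$ for every $\ell$, every $p\in I_\ell$ and every $x\in\mathbb{S}_{n_0}^{I_0}$. The crucial simplification is that on the diagonal the sharing indicator collapses the double sum: since $\chi(q\to p,q'\to p)=\delta_{qq'}$, only the terms with $q=q'$ survive, and $\sqrt{\left|P(p)\right|\left|P(p)\right|}=\left|P(p)\right|$. For the base case this gives $\Sigma^{(1,pp)}(x,x)=\beta^2+\frac{1-\beta^2}{\left|P(p)\right|n_0}\sum_{q\in P(p)}\lVert x_q\rVert^2$, and because $x\in\mathbb{S}_{n_0}^{I_0}$ forces $\lVert x_q\rVert^2=n_0$ for every $q$, the sum equals $\left|P(p)\right|n_0$ and the whole expression collapses to $\beta^2+(1-\beta^2)=1$. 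For the induction step the diagonal recursion reads $\Sigma^{(\ell+1,pp)}(x,x)=\beta^2+\frac{1-\beta^2}{\left|P(p)\right|}\sum_{q\in P(p)}\mathbb{L}_{\Sigma^{(\ell,qq)}}^{\sigma}(x,x)$; the induction hypothesis $\Sigma^{(\ell,qq)}(x,x)=1$ makes the $2\times 2$ covariance in each $\mathbb{L}$ degenerate, so $\mathbb{L}_{\Sigma^{(\ell,qq)}}^{\sigma}(x,x)$ reduces to the one-dimensional moment $\mathbb{E}[\sigma(Z)^2]=1$ (standardization), the average over $q$ is again $1$, and we recover $\beta^2+(1-\beta^2)=1$.

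Next I would run the parallel induction for the NTK using the recursion of Proposition \ref{prop:convergence_NTK_parents}. The base case is $\Theta_\infty^{(1,pp)}(x,x)=\Sigma^{(1,pp)}(x,x)=1$. For the step, evaluating on the diagonal and using the same $\chi=\delta$ collapse together with the already-proven $\Sigma^{(L-1,qq)}(x,x)=1$, the recursion becomes $\Theta_\infty^{(L,pp)}(x,x)=1+\frac{1-\beta^2}{\left|P(p)\right|}\sum_{q\in P(p)}\Theta_\infty^{(L-1,qq)}(x,x)\,\mathbb{L}_{\Sigma^{(L-1,qq)}}^{\dot\sigma}(x,x)$. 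Here $\mathbb{L}_{\Sigma^{(L-1,qq)}}^{\dot\sigma}(x,x)=\mathbb{E}[\dot\sigma(Z)^2]$ by the same degeneracy argument, and by the induction hypothesis $\Theta_\infty^{(L-1,qq)}(x,x)$ equals a constant $T_{L-1}$ independent of $q$ and $x$; the average over $q$ is therefore $T_{L-1}$, and since $(1-\beta^2)\mathbb{E}[\dot\sigma(Z)^2]=r_{\sigma,\beta}$ we obtain $T_L=1+r_{\sigma,\beta}\,T_{L-1}$, which is manifestly independent of $p$ and $x$ (and solves to the bulk value $\frac{1-r_{\sigma,\beta}^{L}}{1-r_{\sigma,\beta}}$).

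I do not anticipate a real obstacle: every step is a one-line diagonal evaluation once one observes that $p=p'$ and $x=y$ force all paired quantities to coincide. The only point needing care is bookkeeping: both inductive hypotheses must be carried at the fully quantified level (``for all $p$ and all $x\in\mathbb{S}_{n_0}^{I_0}$'') so that, at a given position $p$, the hypothesis is simultaneously available at every parent $q\in P(p)$; it is precisely the exact cancellation of $\left|P(p)\right|$ that lets this uniform, position-free statement propagate from one layer to the next.
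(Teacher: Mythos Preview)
Your proposal is correct and follows essentially the same approach as the paper: two inductions (first on $\Sigma$, then on $\Theta_\infty$) using the diagonal collapse $\chi(q\to p,q'\to p)=\delta_{qq'}$, the normalization $\lVert x_q\rVert^2=n_0$, and the standardization $\mathbb{E}[\sigma(Z)^2]=1$, yielding $\Sigma^{(\ell,pp)}(x,x)=1$ and the recursion $T_L=1+r_{\sigma,\beta}T_{L-1}$ with solution $\frac{1-r_{\sigma,\beta}^L}{1-r_{\sigma,\beta}}$. The paper's proof is identical in structure and in the explicit constants obtained.
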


\begin{proof}
Actually, we will prove the stronger statement: for any General Convolutional
Network, as defined in Section \ref{sec:General-Convolutional-Networks},
for any standardized nonlinearity, for any $x\in\mathbb{S}_{n_{0}}^{I_{0}}$
and any $p\in I_{L}$,
\[
\Sigma^{(L,pp)}(x,x)=1,\quad\text{and }\Theta_{\infty}^{(L,pp)}(x,x)=\frac{1-r^{L}}{1-r}.
\]
For the activation kernels,this is proven by induction on $\ell$
. For any $x\in\mathbb{S}_{n_{0}}^{I_{0}}$ and any $p\in I_{1}$:

\begin{align*}
\Sigma^{(1,pp)}(x,x) & =\beta^{2}+\frac{1-\beta^{2}}{\left|P(p)\right|n_{0}}\sum_{q\in P(p)}\sum_{q'\in P(p)}\chi(q\to p,q'\to p)x_{q}^{T}x_{q'}\\
 & =\beta^{2}+\frac{1-\beta^{2}}{\left|P(p)\right|n_{0}}\sum_{q\in P(p)}x_{q}^{T}x_{q}\\
 & =\beta^{2}+(1-\beta^{2})\\
 & =1,
\end{align*}
 and if the assertion holds true for $L$, then:

\begin{align*}
\Sigma^{(L+1,pp)}(x,x) & =\beta^{2}+\frac{1-\beta^{2}}{\left|P(p)\right|n_{0}}\sum_{q\in P(p)}\sum_{q'\in P(p)}\chi(q\to p,q'\to p)\Sigma^{(L,qq')}(x,x)\\
 & =\beta^{2}+\frac{1-\beta^{2}}{\left|P(p)\right|n_{0}}\sum_{q\in P(p)}\Sigma^{(L,qq)}(x,x)\\
 & =1.
\end{align*}

For the activation kernels, this is proven by induction on $L$. It
is easy to see that $\Theta_{\infty}^{(1,pp)}(x,x)=1$ is valid for
any $x\in\mathbb{S}_{n_{0}}^{I_{0}}$ and any $p\in I_{L}$. Let us
show the induction step:
\begin{align*}
\Theta_{\infty}^{(L+1,pp)}(x,x) & =\Sigma^{(L+1,pp)}(x,x)+\frac{1-\beta^{2}}{\left|P(p)\right|}\sum_{q\in P(p)}\Theta_{\infty}^{(L,qq)}(x,x)R_{\dot{\sigma}}\left(\Sigma^{(L,qq)}(x,x)\right)\\
 & =1+r\Theta_{\infty}^{(L,qq)}(x,x).
\end{align*}
Thus, $\Theta_{\infty}^{(L,pp)}(x,x)=\sum_{\ell=1}^{L}r^{L-\ell}=\frac{1-r^{L}}{1-r}.$
\end{proof}

\section{Layerwise Contributions to the NTK and Checkerboard Patterns}

In a DC-NN with stride $s\in\{2,3,...\}^{d}$, if two connection weight
matrices $W^{(\ell,q\to p)}$ and $W^{(\ell,q'\to p')}$ are shared
then $s\mid p'-p$. In other words, $\chi(q\to p,q'\to p')=0$ as
soon as $s\nmid p'-p$. The limiting contribution of the weights $\Theta_{\infty}^{(L:W^{(\ell)})}$
and bias $\Theta_{\infty}^{(L:b^{(\ell)})}$ to the limiting NTK can
be formulated recursively. For the last layer $L-1$ we have
\begin{align*}
\Theta_{\infty}^{(L:b^{(L-1)},pp')} & =\beta^{2}\\
\Theta_{\infty}^{(1:W^{(0)},pp')} & =\delta_{s|p-p'}\frac{1-\beta^{2}}{\left|P(p)\right|n_{0}}\sum_{q\in P(p)}x_{q}^{T}y_{q+\frac{p'-p}{s}}\\
\Theta_{\infty}^{(L:W^{(L-1)},pp')} & =\delta_{s|p-p'}\frac{1-\beta^{2}}{\left|P(p)\right|}\sum_{q\in P(p)}R_{\sigma}\left(\Sigma^{(L-1,q,q+\frac{p'-p}{s})}(x,y)\right)\text{ for \ensuremath{L>1}}
\end{align*}
and for the other layers, we have
\begin{align*}
\Theta_{\infty}^{(L+1:b^{(\ell)},pp')} & =\delta_{s|p-p'}\frac{1-\beta^{2}}{\left|P(p)\right|}\sum_{q\in P(p)}\Theta_{\infty}^{(L;b^{(\ell)},q,q+\frac{p'-p}{s})}(x,y)R_{\dot{\sigma}}\left(\Sigma^{(L,q,q+\frac{p'-p}{s})}(x,y)\right)\\
\Theta_{\infty}^{(L+1:W^{(\ell)},pp')} & =\delta_{s|p-p'}\frac{1-\beta^{2}}{\left|P(p)\right|}\sum_{q\in P(p)}\Theta_{\infty}^{(L;W^{(\ell)},q,q+\frac{p'-p}{s})}(x,y)R_{\dot{\sigma}}\left(\Sigma^{(L,q,q+\frac{p'-p}{s})}(x,y)\right).
\end{align*}

\begin{prop}
In a DC-NN with stride $s\in\{2,3,...\}^{d}$, we have $\Theta_{\infty}^{(L:W^{(\ell)},pp')}(x,y)=0$
if $s^{L-\ell}\nmid p'-p$ and $\Theta_{\infty}^{(L:b^{(\ell)},pp')}(x,y)=0$
if $s^{L-\ell-1}\nmid p'-p$.
\end{prop}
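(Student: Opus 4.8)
The plan is to prove both vanishing statements by a single induction on $L$ at a fixed layer index $\ell$, feeding on the recursive formulas for $\Theta_{\infty}^{(L:W^{(\ell)})}$ and $\Theta_{\infty}^{(L:b^{(\ell)})}$ displayed just above the statement. The structural observation that drives everything is that each step of either recursion carries an overall factor $\delta_{s\mid p-p'}$ and replaces the output position difference $p'-p$ by the parent difference $\frac{p'-p}{s}$; so every level of the recursion peels off exactly one factor of $s$ from the divisibility required for nonvanishing. Carrying this across the $L-\ell$ levels separating layer $\ell$ from the output is precisely what produces the exponents $s^{L-\ell}$ and $s^{L-\ell-1}$.

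First I would dispose of the base cases at $L=\ell+1$, where $W^{(\ell)}$ and $b^{(\ell)}$ are the weights and bias of the top layer. For the weight, the last-layer formula gives $\Theta_{\infty}^{(\ell+1:W^{(\ell)},pp')}=\delta_{s\mid p-p'}(\cdots)$, which vanishes unless $s\mid p'-p$, i.e.\ unless $s^{(\ell+1)-\ell}\mid p'-p$; this is exactly the claim at $L=\ell+1$ (the case $\ell=0$, $L=1$ is the same reasoning applied to the $\Theta_{\infty}^{(1:W^{(0)})}$ formula). For the bias, $\Theta_{\infty}^{(\ell+1:b^{(\ell)},pp')}=\beta^{2}$ carries no divisibility constraint, which is consistent with $s^{(\ell+1)-\ell-1}=s^{0}=1$ dividing every difference.

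For the inductive step $L\to L+1$ (with $L\geq \ell+1$) I would use the ``other layers'' recursions. If $s\nmid p'-p$, the prefactor $\delta_{s\mid p-p'}$ annihilates the whole expression, and $s\nmid p'-p$ trivially implies $s^{(L+1)-\ell}\nmid p'-p$, so the claim holds; this case is also what guarantees that in the surviving case $\frac{p'-p}{s}$ is a genuine integer position. If instead $s\mid p'-p$, every summand contains the factor $\Theta_{\infty}^{(L:W^{(\ell)},q,\,q+\frac{p'-p}{s})}$, whose position difference is $\frac{p'-p}{s}$; by the induction hypothesis this factor vanishes unless $s^{L-\ell}\mid\frac{p'-p}{s}$, that is unless $s^{(L+1)-\ell}\mid p'-p$. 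Hence the entire sum vanishes unless $s^{(L+1)-\ell}\mid p'-p$, which is the weight claim at $L+1$. The bias recursion is word-for-word identical, except that the induction hypothesis reads $s^{L-\ell-1}\mid\frac{p'-p}{s}$, yielding $s^{(L+1)-\ell-1}\mid p'-p$.

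The argument is essentially bookkeeping, so I do not expect a genuine obstacle; the only points demanding care are keeping the divisibility index synchronized with the recursion depth and observing that the multidimensional divisibility $s^{k}\mid n$ (meaning $s_{i}^{k}\mid n_{i}$ for every coordinate $i$) behaves multiplicatively coordinate by coordinate, so that the ``one factor of $s$ per level'' accounting is legitimate in each coordinate independently. With these checks in place, the two inductions close simultaneously and give the proposition.
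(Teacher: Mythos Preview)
Your proposal is correct and follows essentially the same route as the paper: both argue by induction on $L$ from the recursive formulas, using that the prefactor $\delta_{s\mid p-p'}$ kills the contribution unless one more power of $s$ divides $p'-p$, and then applying the induction hypothesis to the parent difference $\frac{p'-p}{s}$. Your write-up is in fact more explicit than the paper's (which is quite terse and contains a small typo in its base case description), particularly in spelling out the base cases and the coordinatewise nature of the divisibility.
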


\begin{proof}
From the formulas of the limiting contributions $\Theta^{(L:W^{(\ell)})}$
and $\Theta^{(L:b^{(\ell)})}$, we see that the bias of the last layer
contribute to all pairs $p,p'$ while the bias only contribute to
pairs such that $s\mid p'-p$. Now by induction on $L$, if $\Theta^{(L:b^{(\ell)},qq')}$
and $\Theta^{(L:W^{(\ell)},qq')}$ only contribute to pairs $q,q'$
such that $s^{L-\ell-1}\mid q'-q$ and $s^{L-\ell}\mid q'-q$ then
\begin{align*}
\Theta_{\infty}^{(L+1:b^{(\ell)},pp')} & =\delta_{s|p-p'}\frac{1-\beta^{2}}{\left|P(p)\right|}\sum_{q\in P(p)}\Theta_{\infty}^{(L;b^{(\ell)},q,q+\frac{p'-p}{s})}(x,y)R_{\dot{\sigma}}\left(\Sigma^{(L,q,q+\frac{p'-p}{s})}(x,y)\right)\\
\Theta_{\infty}^{(L+1:W^{(\ell)},pp')} & =\delta_{s|p-p'}\frac{1-\beta^{2}}{\left|P(p)\right|}\sum_{q\in P(p)}\Theta_{\infty}^{(L;W^{(\ell)},q,q+\frac{p'-p}{s})}(x,y)R_{\dot{\sigma}}\left(\Sigma^{(L,q,q+\frac{p'-p}{s})}(x,y)\right)
\end{align*}
only contribute to pairs $p',p$ such that $s^{L-\ell}\mid p'-p$
and $s^{L+1-\ell}\mid p'-p$ as needed.
\end{proof}
\end{document}